\definecolor{Green}{rgb}{0.13, 0.65, 0.3}
\definecolor{Greenish}{RGB}{119, 26, 240}
\newtheorem{theorem}{Theorem}[subsection]
\newtheorem{lemma}[theorem]{Lemma}
\newtheorem{definition}[theorem]{Definition}
\newtheorem{remark}[theorem]{Remark}
\renewcommand{\thetheorem}{%
	\ifnum\value{subsection}>0 
	\thesubsection
	\else
	\thesection
	\fi
	.\arabic{theorem}%
}
\renewcommand{\iota}{\textit{i}}
\renewcommand{\implies}{\; \Rightarrow \;}
\newcommand{\forAll}{{\;\; \forall \;}}
\newcommand{\suchthat}{\;\; \texttt{s.t.} \;\;}
\DeclareMathOperator*{\argmin}{\arg\!\min}
\DeclareMathOperator*{\argmax}{\arg\!\max}
\newcommand{\given}{\, | \,}
\renewcommand{\tilde}[1]{\widetilde{#1}} 
\renewcommand{\hat}[1]{\widehat{#1}} 
\newcommand{\E}[1]{\underset{\begin{subarray}{c} #1 \end{subarray}}{\Ebb}}
\newtcbox{\mybox}[2][]{on line,
  colback=#2,     
  colframe=#2!gray,    
  boxsep=1.5pt,      
  left=0pt,          
  right=0pt,         
  top=0pt,           
  bottom=0pt,        
  arc=1.25mm,           
  boxrule=0pt,       
  #1                 
}
\newcommand{\barrierf}{barrier function\xspace}
\newcommand{\sbarrierf}{strong-barrier function\xspace}
\newcommand{\subxy}{\substack{x\sim \mathcal X\\ y\sim\pi(\cdot\vert x)}}
\newtheorem{hypothesis}{Hypothesis}
\newcommand{\piref}{\pi_{\mathsf{ref}}}
\newcommand{\rlhf}{{\fontfamily{cmtt}\selectfont RLHF}\hphantom{.}}
\newcommand{\dual}{\text{{\fontfamily{cmtt}\selectfont Dual$^{\star}$}}}
\newcommand{\mopo}{{\fontfamily{cmtt}\selectfont MOPO\hphantom{.}}}
\newcommand{\paretoset}{\Pi_{\pareto{P}}}
\definecolor{pareto}{RGB}{27, 224, 250}
\definecolor{D1}{RGB}{220, 20, 60}
\definecolor{D2}{RGB}{255, 140, 0}
\definecolor{DJ}{RGB}{50, 205, 50}
\definecolor{DC}{RGB}{138, 43, 226}
\definecolor{COP}{RGB}{255, 105, 180}
\definecolor{MOPO}{RGB}{219, 82, 18}
\newcommand{\pareto}[1]{{#1}}
\newcommand{\Done}[1]{\textcolor{D1!70}{{#1}}}
\newcommand{\Dtwo}[1]{\textcolor{D2!70!black}{{#1}}}
\newcommand{\Djoint}[1]{\textcolor{DJ!70!black}{{#1}}}
\newcommand{\Dcombined}[1]{\textcolor{DC!80}{{#1}}}
\newcommand{\COP}[1]{\textcolor{COP!70!black}{{#1}}}
\newcommand{\MOPO}[1]{\textcolor{MOPO!95!black}{\fontfamily{cmtt}\selectfont {#1}}}
\newcommand{\KL}{\mathsf{KL}}
\definecolor{usc}{RGB}{153, 27, 30}
\definecolor{jpm}{RGB}{0, 53, 148}
\definecolor{eqcol}{RGB}{206, 240, 216}
\definecolor{backg_blue}{RGB}{225,236,244}
\definecolor{tagtxt_blue}{RGB}{130, 172, 237}
\definecolor{backg_red}{RGB}{250, 222, 222}
\definecolor{tagtxt_red}{RGB}{204, 71, 71}
\definecolor{darkgreen}{RGB}{21, 145, 1}
\definecolor{darkorange}{RGB}{224, 116, 0}
\definecolor{orange}{RGB}{230, 125, 14}
\definecolor{teal}{RGB}{0, 128, 128}
\definecolor{neon}{RGB}{25, 210, 227}
\definecolor{maroon}{RGB}{143, 4, 4}
\definecolor{purple}{RGB}{119, 26, 240}
\definecolor{Pink}{RGB}{212, 38, 235}
\definecolor{Green}{rgb}{0.13, 0.65, 0.3}
\definecolor{Greenish}{RGB}{49, 158, 140}
\definecolor{TS}{RGB}{0, 125, 140}
\definecolor{LinTS}{RGB}{6, 75, 204}
\definecolor{naiveTS}{RGB}{201, 127, 0}
\definecolor{warmTS}{RGB}{128, 0, 32}
\definecolor{cellg}{RGB}{223, 247, 220}
\definecolor{Pink}{RGB}{212, 38, 235}
\definecolor{peach}{RGB}{250, 187, 177}
\definecolor{brightblue}{RGB}{102, 179, 255}
\definecolor{brightred}{RGB}{255, 102, 102}
\definecolor{violet}{RGB}{153, 153, 255}
\definecolor{dullpink}{RGB}{204, 171, 208}
\definecolor{lightorange}{RGB}{255, 191, 128}
\definecolor{lemon}{RGB}{255, 228, 113}
\definecolor{skyblue}{RGB}{141, 215, 247}
\definecolor{crimson}{rgb}{0.79, 0.0, 0.09}
\definecolor{strange}{RGB}{247, 201, 178}
\newcommand{\Ebb}{{\mathbb E}}
\newcommand{\Ibb}{{\mathbb I}}
\newcommand{\Nbb}{{\mathbb N}}
\newcommand{\Rbb}{{\mathbb R}}
\newcommand{\Bcal}{{\mathcal{B}}}
\newcommand{\Dcal}{{\mathcal{D}}}
\newcommand{\Fcal}{{\mathcal{F}}}
\newcommand{\Gcal}{{\mathcal{G}}}
\newcommand{\Lcal}{{\mathcal{L}}}
\newcommand{\Scal}{{\mathcal{S}}}
\newcommand{\Ucal}{{\mathcal{U}}}
\newcommand{\Xcal}{{\mathcal{X}}}
\newcommand{\Ycal}{{\mathcal{Y}}}
\newcommand{\revisionhistory}[1]{%
\@ifundefined{showrevisionhistory}{\relax}{%
{#1}%
}%
}
\definecolor{accgood}{RGB}{92, 196, 97}
\definecolor{accneutral}{RGB}{255, 238, 84}
\definecolor{accbad}{RGB}{250, 115, 105}
\definecolor{bestblue}{RGB}{25, 118, 210}
\definecolor{worstred}{RGB}{198, 40, 40}
\definecolor{mopogray}{RGB}{189, 200, 201}
\newcommand{\colmin}[1]{%
  \ifcase#1 0.0\or 0.21\or 0.24\or 0.34\or 0.30\else 0.0\fi}
\newcommand{\colmid}[1]{%
  \ifcase#1 0.0\or 0.37\or 0.39\or 0.47\or 0.42\else 0.0\fi}
\newcommand{\colmax}[1]{%
  \ifcase#1 0.0\or 0.53\or 0.53\or 0.63\or 0.52\else 0.0\fi}
\newcommand{\acc}[3]{%
  \pgfmathsetmacro{\Value}{#1}
  \pgfmathsetmacro{\MinNumber}{\colmin{#3}}
  \pgfmathsetmacro{\MidNumber}{\colmid{#3}}
  \pgfmathsetmacro{\MaxNumber}{\colmax{#3}}
  \ifdim \Value pt<\MidNumber pt
    \pgfmathsetmacro{\PercentColor}{100*((\MidNumber-\Value)/(\MidNumber-\MinNumber))}
    \xdef\PercentColorr{\PercentColor}
    {\tcbox[on line,
      boxsep=0.1pt, left=0.1pt, right=0.1pt, top=0pt, bottom=0pt, arc=0.8pt,
      colframe=accbad!\PercentColorr!accneutral!60,
      colback=accbad!\PercentColorr!accneutral!60]{\scriptsize $#1_{\text{\tiny $\pm#2$}}$}}
  \else
    \pgfmathsetmacro{\PercentColor}{100*((\Value-\MidNumber)/(\MaxNumber-\MidNumber))}
    \xdef\PercentColorr{\PercentColor}
    {\tcbox[on line,
      boxsep=0.1pt, left=0.1pt, right=0.1pt, top=0pt, bottom=0pt, arc=0.8pt,
      colframe=accgood!\PercentColorr!accneutral!60,
      colback=accgood!\PercentColorr!accneutral!60]{\scriptsize $#1_{\text{\tiny $\pm#2$}}$}}
  \fi
}
\newcommand{\accp}[3]{\hspace{-0.4cm}{\acc{#1}{#2}{#3}}\hspace{0.01cm}}
\icmltitlerunning{Multi-Objective Preference Optimization}
\begin{document}

\twocolumn[
  \icmltitle{Multi-Objective Preference Optimization:\\ Improving Human Alignment of Generative Models}



  \icmlsetsymbol{equal}{*}

  \begin{icmlauthorlist}
    \icmlauthor{Akhil Agnihotri}{usc}
    \icmlauthor{Rahul Jain}{usc,gdm}
    \icmlauthor{Deepak Ramachandran}{gdm}
    \icmlauthor{Zheng Wen}{gdm,openai}
  \end{icmlauthorlist}

  \icmlaffiliation{usc}{University of Southern California}
  \icmlaffiliation{gdm}{Google DeepMind}
  \icmlaffiliation{openai}{OpenAI}
  \icmlcorrespondingauthor{Akhil Agnihotri}{agnihotri.akhil@gmail.com}

  \icmlkeywords{Machine Learning, ICML}
  \vskip 0.3in
]



\printAffiliationsAndNotice{}  

\begin{abstract}
Post-training LLMs with RLHF and preference optimization methods (e.g., DPO, IPO) has greatly improved alignment, yet these approaches assume a single objective. In reality, humans express multiple, often conflicting objectives, such as helpfulness and harmlessness, with no natural scalarization. We study the multi-objective preference alignment problem, where a policy must balance several objectives simultaneously. We propose Multi-Objective Preference Optimization (\mopo), a constrained KL-regularized framework that maximizes a primary objective while enforcing lower bounds on secondary objectives via tunable safety thresholds. \mopo operates directly on pairwise preferences without point-wise rewards, and admits simple closed-form iterative updates. Empirically, \mopo recovers Pareto-optimal policies on synthetic benchmarks and, when fine-tuned on human-preference data, yields multi-billion parameter models that achieve higher rewards and Pareto-dominate baselines, with stable and robust optimization dynamics. 
\end{abstract}

\section{Introduction}
\label{sec:introduction}

Aligning Large Language Models (LLMs) and other generative models with human preferences \cite{ouyang2022training, rafailov2023direct, ipo} has evolved from single-objective to multi-objective \cite{rame2023rewarded, yang2024rewards, zhong2024panacea}, aiming to comprehensively capture the inherent heterogeneity of human preferences. Multi-objective alignment jointly considers multiple human preference objectives, such as safety, helpfulness, factuality, and diversity, to optimize the LLM. However, human preferences are neither one dimensional nor fixed: a single conversation may demand an answer that is simultaneously helpful, harmless, concise, and imaginative. Optimizing an LLM for just one of these axes often degrades the others, as seen between helpfulness and safety tasks \cite{bai2022training, rame2023rewarded, yang2024rewards, chitlangia2021improving}. Consequently, we seek multi‑objective alignment, where the model is judged by reward vector performance on possibly conflicting objectives, rather than a single scalarized score. 

From an optimization standpoint this setting is naturally cast as multi‑objective optimization \cite{miettinen1999nonlinear, keeney1993decisions}, where desirable solutions form a \pareto{Pareto} front. Existing alignment pipelines largely collapse this vector into a weighted sum and run reinforcement learning from human feedback (RLHF) on the resulting scalar reward \cite{ouyang2022training, rafailov2023direct}, but linear or non-linear scalarization hides important corner cases, and requires retraining for each weight choice. To overcome this, there is some work on decoding at inference time, but it still aims to cast user preferences as vector inputs to the model \cite{shi2024decoding, wang2024arithmetic, Hu_2022_CVPR}. However, users seldom articulate explicit weights; they expect models to adapt at low latency. These observations motivate algorithms that (i) approximate the \pareto{Pareto} front \emph{offline}, and (ii) provide inference without probing users for preference vector inputs. Motivated by these insights, we introduce \MOPO{MOPO}, an offline constrained‑optimization framework that delivers multi-objective alignment with a single multi-head policy.

\textcolor{black}{
We begin by formulating the problem as a concave constrained optimization problem where preferences along the `primary' objective are maximized while preferences along the `secondary' objectives are constrained above a tunable threshold. We then motivate bounding the \emph{lower bound} of preferences (instead of the naively constraining these secondary objectives), followed by a behaviour cloning approach to extract the optimal policy from the resulting optimal importance sampling ratio. Overall, this procedure results in iterative updates of the underlying optimization variables, which is scalable and robust to the hyperparameters. Our contributions are as follows:
(i) We propose \MOPO{MOPO}, an offline constrained optimization based preference-only learning algorithm that optimizes for multiple objectives and achieves \pareto{Pareto} optimality.
(ii) We empirically validate the correctness on a variety of canonical preference dataset types that show how \MOPO{MOPO} approximates the \pareto{Pareto} front when it is known.
(iii) We conduct extensive LLM experiments on 7B parameter models with real world data to validate the effectiveness of \MOPO{MOPO}, and show optimization stability through ablations.
}

\textbf{Related works.} \rlhf \citep{christiano2017deep, ziegler2019fine} has become the de-facto paradigm for aligning LLMs such as GPT‑4 \citep{achiam2023gpt} and LLaMA‑3 \citep{grattafiori2024llama}. Most \rlhf pipelines fit a reward model to pairwise preferences and then fine‑tune the policy with PPO \citep{schulman2017proximal, ouyang2022training}. Instability and sample inefficiency have motivated alternatives that still target a scalar reference‑regularized objective, including RAFT \citep{zhang2024raft}, RRHF \citep{yuan2023rrhf}, DPO \citep{rafailov2023direct}, $\Psi$PO \cite{ipo}, and Nash‑\rlhf \citep{munos2023nash}. These methods (except $\Psi$PO) inherit a fundamental limitation: all preferences are collapsed into a single reward signal, obscuring trade‑offs between objective(s). Recent works attempt to optimize multiple objectives by learning scalarization functions or prompt contexts \citep{hu2023aligning,zhong2024panacea,9030307,guo2024controllable,wu2023fine,li2025self,lee2025calibrated}. Although effective in specific domains, such approaches seldom achieve \pareto{Pareto}-optimal solutions even when the \pareto{Pareto} front is known \cite{yang2024rewards, mukherjee2024multi}. While \cite{rame2023rewarded} mitigate tuning via `Rewarded Soups', and MORLHF \citep{li2020deep} and MODPO \citep{zhou2023beyond} borrow ideas from multi‑objective RL, they still learn with respect to a single functional combination of rewards. RiC \citep{yang2024rewards}, HaM \cite{mukherjee2024multi}, MOD \cite{shi2024decoding}, and DPA \cite{wang2024arithmetic} move beyond heuristic scalarization by conditioning on multiple rewards at inference time. While these algorithms improve controllability, they still rely on inference-time user preference input to optimize multiple objectives, which can misrepresent complex preference structures and are hard to quantify practically (for instance, what does "0.6 helpful, 0.4 safe" imply?). PARM \cite{lin2025parm} trains a \emph{single}, preference conditioned policy across multiple objectives, however, it still requires user preference inputs. A contemporary work to ours is L3Ms \cite{dhillonl3ms}, which uses log barrier functions to incorporate constraints into the primary alignment problem. Classical multi-objective RL (MORL) focuses on discovering \pareto{Pareto}‑efficient policies under vector rewards \citep{roijers2013survey,van2014multi,hayes2022practical}. Constrained MORL \citep{pmlr-v164-huang22a, liu2024c, pmlr-v235-agnihotri24a, NEURIPS2024_c81d09a6} methods use scalar rewards to maximize a primary objective while enforcing lower bounds on secondary objectives.


\vspace{-0.1cm}
\section{Preliminaries}
\label{sec:preliminaries}

We first begin with a motivating example that inspires development of \MOPO{MOPO} as an offline constrained optimization algorithm learning from preference data. Following this, we describe the problem setting with a formal problem statement. Throughout the main text, we keep notation light and refer the reader to Appendix \ref{sec:appendix} for a complete discussion, where we prove that current literature as discussed above fails to achieve \pareto{Pareto}-optimality.

\textbf{A Motivating Example.} In this example, we empirically demonstrate the necessity of \emph{principled} multi-objective optimization methods that account for multi-dimensional preferences. We benchmark various approaches for multi-objective alignment and show that existing state-of-the-art techniques consistently fail to reach the \pareto{Pareto} front. To ensure clarity, we conduct experiments on synthetic datasets where true \pareto{Pareto} front is known, allowing for precise evaluation of alignment quality. Due to space constraints we keep discussion concise, and refer the reader to Appendix \ref{appendix:motivating_example} for completeness. 

\begin{figure}[ht]
\centering
\subfloat[\COP{COP} comparison with traditional approaches]{{
    \includegraphics[height=0.2\textwidth, width=0.23\textwidth]{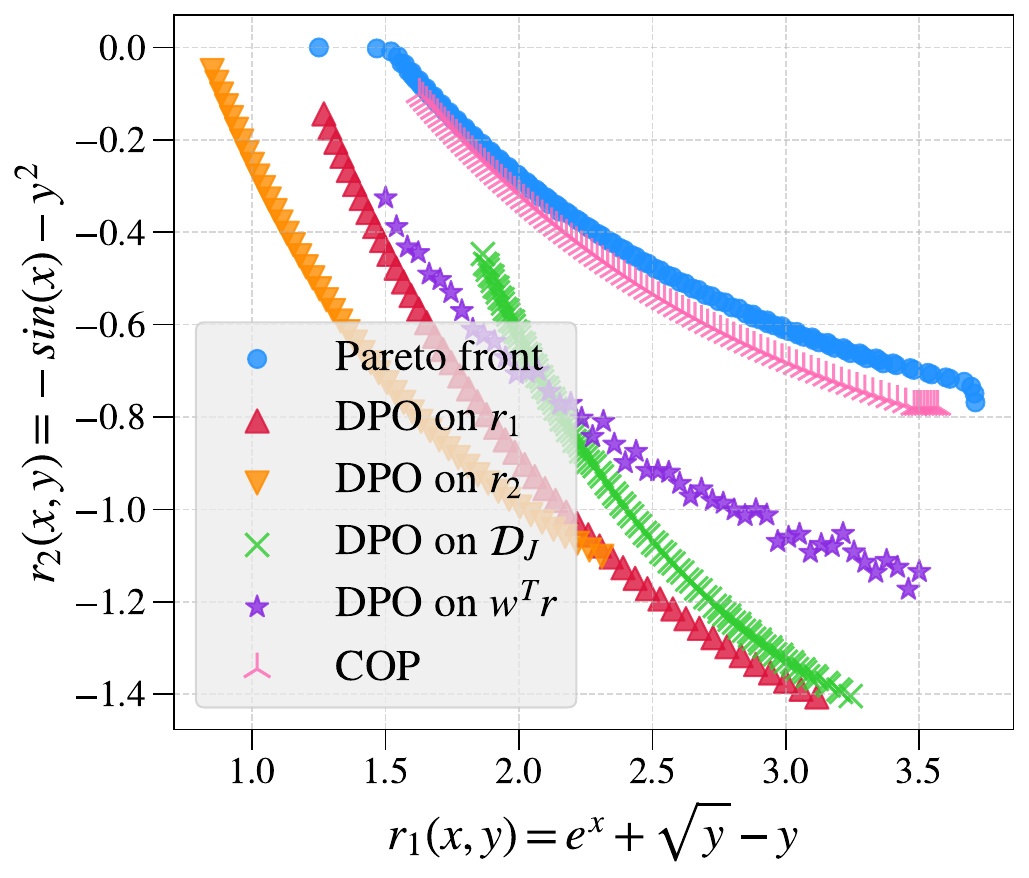}
}
{
    \includegraphics[height=0.2\textwidth, width=0.23\textwidth]{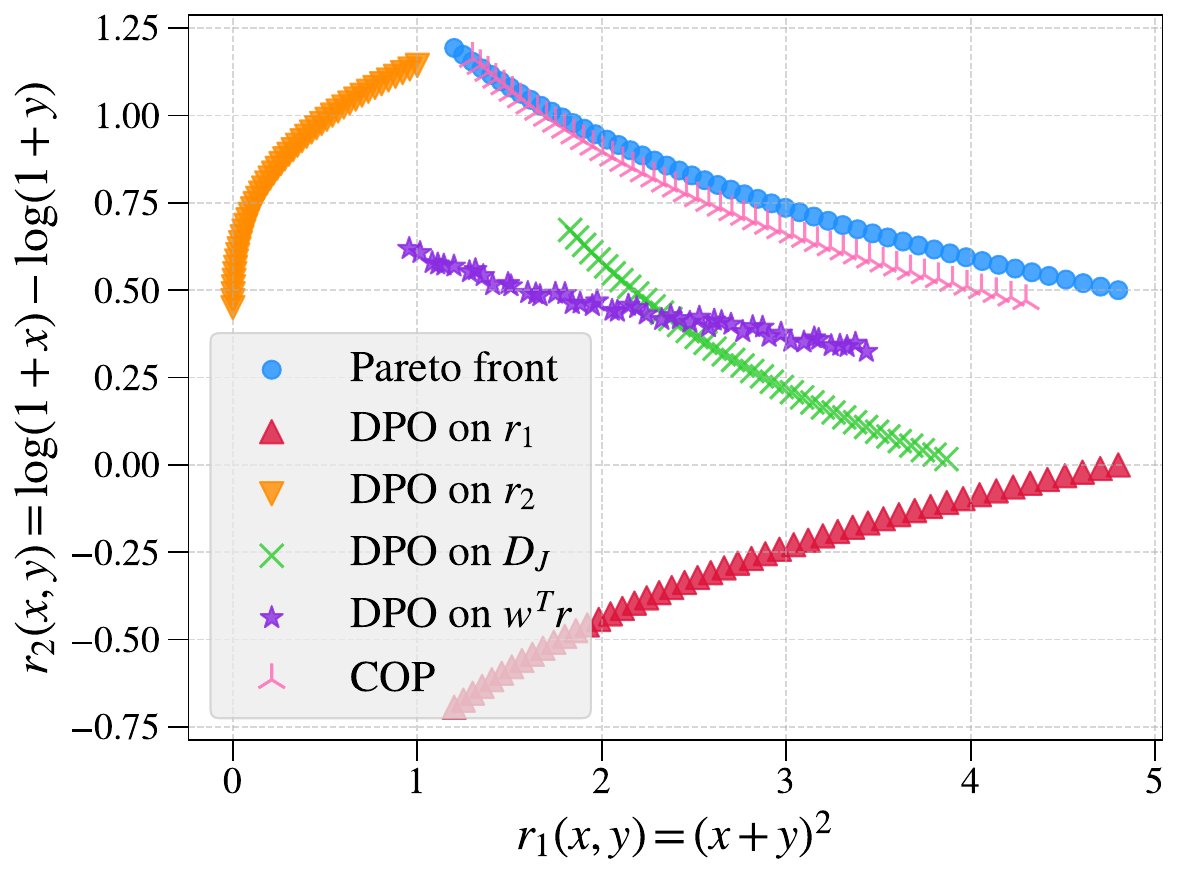}
} 
} \newline
\subfloat[\MOPO{MOPO} approximates the \pareto{Pareto} front]{
{
    \includegraphics[height=0.2\textwidth, width=0.23\textwidth]{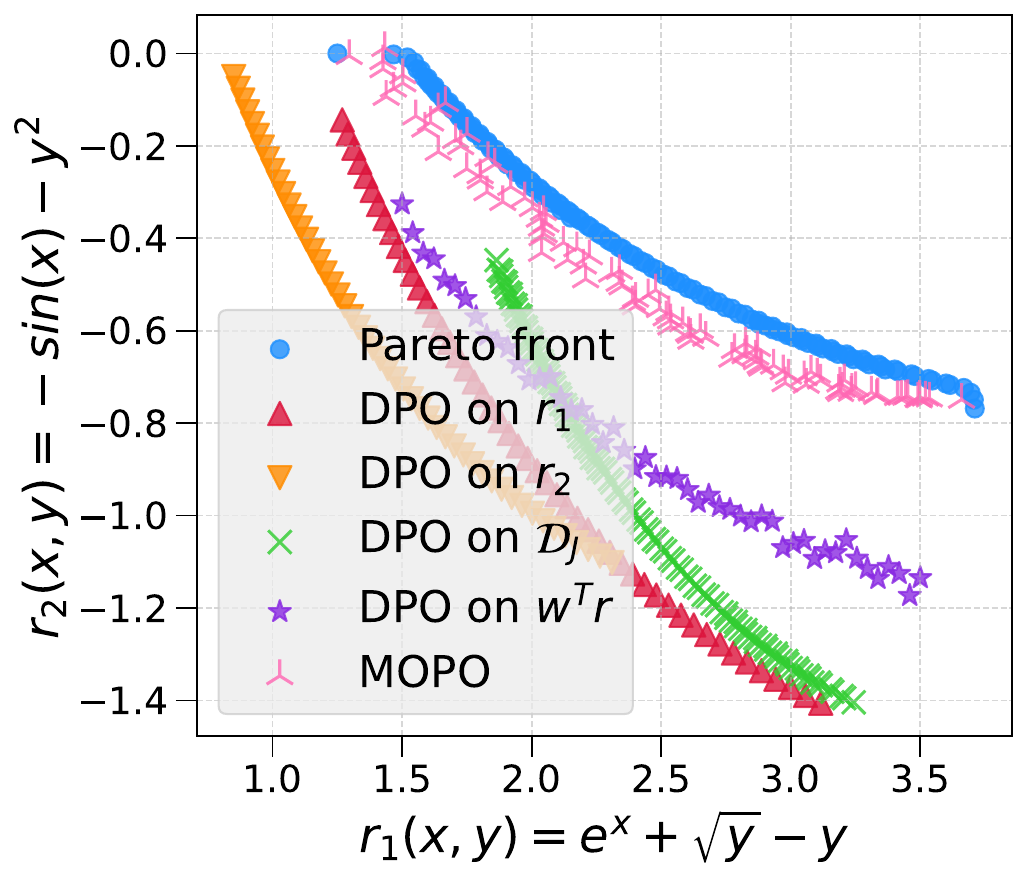}
}
{
    \includegraphics[height=0.2\textwidth, width=0.23\textwidth]{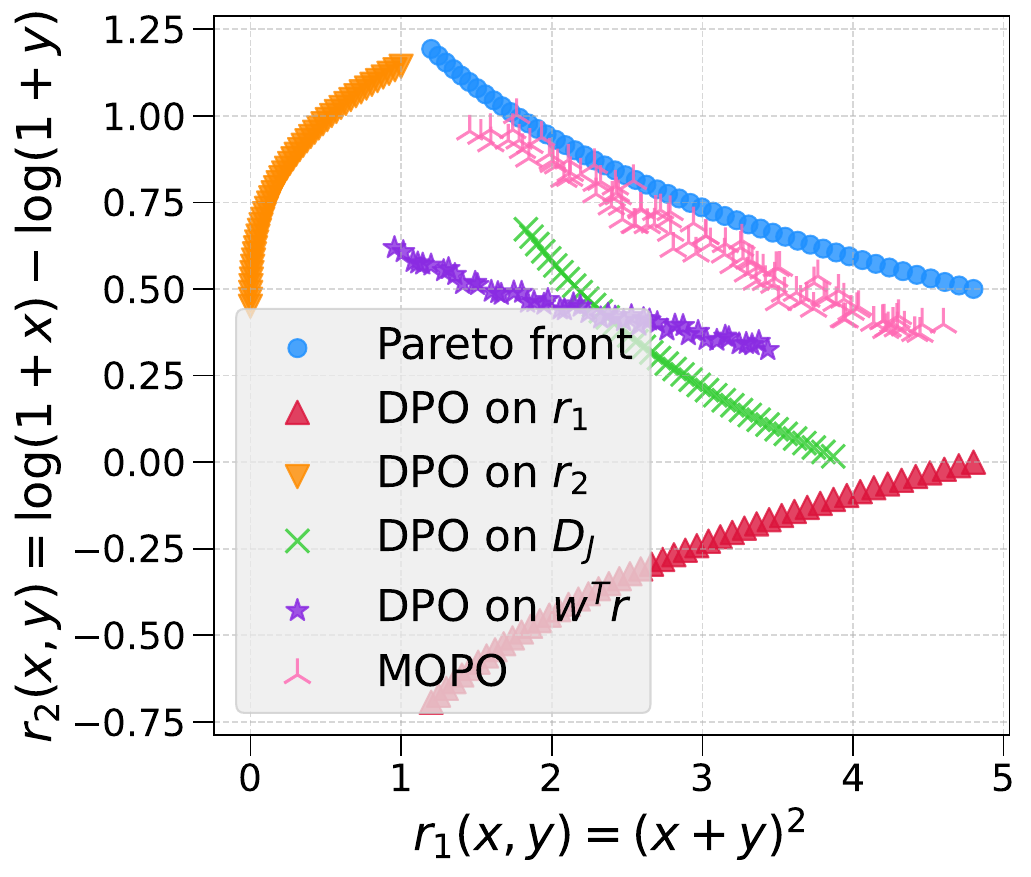}
}
}
\vspace{-0.1cm}
\caption{Illustration of how a \COP{COP} approach, and hence \MOPO{MOPO}, achieves \pareto{Pareto}-optimal alignment.}
\label{fig:dpo_comparison}
\vspace{-0.2cm}
\end{figure}

Consider this toy preference example with input and action spaces $\Xcal=\Ycal=[0,1]$.  For any triplet $(x,y,y')$ we draw a preference label $z\in\{y,y'\}$ from the Bradley-Terry model $\text{BT}(r)$ with $\Pr[z=y \given x, y , y'] = \exp(r(x,y)) / (\exp(r(x,y))+\exp(r(x,y')))$, where $r(\cdot)$ is the underlying reward model \citep{bradleyterry}. We study two bi‑objective settings: \emph{Set A} with $r_{1} = e^{x} + \sqrt{y} - y$ and $r_{2} = -\sin x - y^{2}$, and \emph{Set B} with $r_{1} = (x+y)^2$ and $r_{2}=\log ((1+x)/(1+y))$. From i.i.d. samples $(x,y,y') \sim \Ucal([0,1]^3)$ we construct four datasets: (i) \Done{$\Dcal_{1}$}, labeled by $r_1$ only; (ii) \Dtwo{$\Dcal_{2}$}, by $r_2$ only; (iii) \Djoint{$\Dcal_{J}$}, the \textbf{j}oint dataset retaining samples where the two labels coincide; and (iv) \Dcombined{$\Dcal_{C}$}, the \textbf{c}ombined dataset labeled by $\text{BT}(w r_{1} + (1-w) r_{2})$ for $w \in [0,1]$. As seen in Figure \ref{fig:dpo_comparison}(a), policies trained with DPO on \Done{$\Dcal_{1}$} or \Dtwo{$\Dcal_{2}$} alone ignore one objective, those trained on \Djoint{$\Dcal_{J}$} see only non‑conflicting pairs, and those trained on \Dcombined{$\Dcal_{C}$} are biased toward a single scalarization of reward functions, so all three miss large portions of the \pareto{Pareto} front.  A constrained optimization baseline $\pi_{\text{\COP{\small COP}}}(x) = \argmax_{y} r_1(x,y) \suchthat  r_2(x,y) \geq b$ for some $b \in \Rbb$ approaches the \pareto{Pareto} front. These limitations motivate an algorithm that optimizes all objectives \emph{jointly}. Hence, we develop \MOPO{MOPO} as an offline constrained optimization algorithm that recovers policies which lie near the true \pareto{Pareto} front as in Figure \ref{fig:dpo_comparison}(b). Given this motivation, we now turn our attention to introducing notations and providing a formal problem statement.

\textbf{Problem Setting.} We define a finite set of contexts $\mathcal{X}$ and a finite action space $\mathcal{Y}$. A policy $\pi \in \Delta_{\mathcal{Y}}^{\mathcal{X}}$ defines a probability distribution over actions given a context, where $\Delta_{\mathcal{Y}}$ is the probability simplex over $\mathcal{Y}$. The policy is learned from human preferences, which are provided in a pairwise manner over actions. For each context $x \in \mathcal{X}$, two actions $y, y' \sim \mu(\cdot  \given  x)$ are sampled from a behavior policy $\mu$, and a human annotator provides a preference signal indicating which action is preferred. We also let the contexts $x$ be sampled from a context distribution $\nu$, denote a vector by $\bm{v}$, let $\bm{v}_{j}$ to be the element at the $j^{th}$ dimension of $\bm{v}$, and let $[N]$ denote the set $\{1,\dots,N\}$ for some $N \in \Nbb$.

Typically, in single objective preference optimization, the preference for one generation  over another is denoted as $y_w\succ y_l$, where $y_w$ and $y_l$ denote the preferred and dis-preferred actions amongst $\{ y, y' \}$ respectively. This true human preference takes the form $p(y \succ y' \given x)$, the probability of $y$ being preferred to $y'$ knowing the context $x$. In our multi-objective preference setting, we extend this notation to $K$ objectives, wherein $p_{k}(y \succ y' \given x)$ denotes the preferred and dis-preferred actions amongst $\{ y, y' \}$ for $k^{th}$ objective with $k \in [K]$. Moreover, we also set the expected preference of a generation $y$ \emph{over} a distribution $\mu$ knowing $x$ for the $k^{th}$ objective as $p_{k}(y \succ \mu \given x) = \Ebb_{y' \sim \mu( \cdot \given x)} \left[p_{k}(y \succ y' \given x) \right]$. We also let for any two policies  $\pi, \mu \in \Delta_\mathcal{Y}^\mathcal{X}$ and a context distribution $\nu$ the total preference of policy $\pi$ to $\mu$ w.r.t. $k^{th}$ objective as $ p_{k}^{\nu}(\pi \succ \mu) = \Ebb_{x\sim\nu , y \sim \pi(.|x)}[p_{k}(y \succ \mu \given x)]$. Without loss of generality and clarity of notation, we let $p_{K}(y \succ y' \given x) \equiv p(y \succ y' \given x)$ denote the preference for the $K^{th}$ (the primary) objective, and $\bm{q}(y \succ y' \given x) \in [0,1]^{K-1}$ denotes the vector of preferences for the $K-1$ (the secondary) remaining objectives, wherein the preferences are applied objective-wise i.e. $\bm{q}_{k}(y \succ y' \given x) = p_{k}(y \succ y' \given x)$ for $k \in [K-1]$. Following this notation, we also have the following definitions.

\vspace{-0.25cm}
\small
\begin{equation*}
\begin{aligned}
    \bm{q}(y \succ \mu \given x) &= \E{y' \sim \mu( \cdot \given x)}[\bm{q}(y \succ y' \given x)], \\ \text{and} \quad \bm{q}^{\nu}(\pi \succ \mu) &= \E{x \sim \nu \\ y \sim \pi(\cdot \given x)}[\bm{q}(y \succ \mu \given x)] \, .
\end{aligned}
\end{equation*}
\normalsize
\vspace{-0.52cm}

\textbf{Pareto optimality.} In multi-objective preference optimization (\MOPO{MOPO}), a policy that simultaneously optimizes all objectives does not exist. Thus, a set of non-dominated solutions is desired. We say policy $\pi$ is dominated by policy $\pi'$ when there is no objective under which $\pi'$ is \emph{worse} than $\pi$, i.e., $p_{k}^{\nu}(\pi \succ \mu \given x) \leq p_{k}^{\nu}(\pi' \succ \mu \given x)$ for $\forAll k \in [K]$. A policy $\pi$ is \pareto{Pareto}-optimal if and only if it is not dominated by any other policy. The \pareto{Pareto} set is composed of non-dominated solutions, denoted as $\paretoset$. Overall, the goal of \MOPO{MOPO} is to obtain an optimal policy in $\paretoset$.

\vspace{-0.1cm}
\textbf{Problem statement.} The goal is to propose a general solution for \rlhf with multiple objectives, based on constrained optimization of a function of preferences. We propose this constrained optimization problem as maximizing a primary objective, and constraining the remaining objective values. To this end, we consider a reference policy $\piref \in \Delta^{\mathcal{X}}_{\mathcal{Y}}$, a real positive regularization parameter $\tau \in \mathbb{R}_{+}$, and let $\bm{b} \in [0,1]^{K-1}$. The concave constrained optimization problem (\COP{\small COP}) for \MOPO{MOPO} becomes,

\vspace{-0.2cm}
\begin{equation}
\resizebox{0.9\linewidth}{!}{$
\begin{aligned}
\label{eq:first-obj}
    \max_{\pi} \, & \;\; \E{x \sim \nu \\ y \sim \pi( \cdot \given x) , y' \sim \mu( \cdot \given x)}[p(y \succ y'  \given  x)] - \tau \KL(\pi \;  || \; \piref) \\ \,  \suchthat& \quad \E{x \sim \nu \\ y \sim \pi( \cdot \given x) , y' \sim \mu( \cdot \given x)}[\bm{q}(y \succ y' \given  x)] \geq \bm{b}.
\end{aligned}
$}
\end{equation}

\vspace{-0.3cm}
See Definition \ref{def:kl} for the definition of $\KL$ divergence. We now focus our attention on designing a  \MOPO{MOPO} algorithm to solve the \COP{\small COP} problem above.

\section{The MOPO Algorithm}
\label{sec:algorithm}

\begin{wrapfigure}{r}{0.3\textwidth}
\vspace{-0.5cm}
\centering
{
    \includegraphics[height=0.16\textwidth, width=0.28\textwidth]{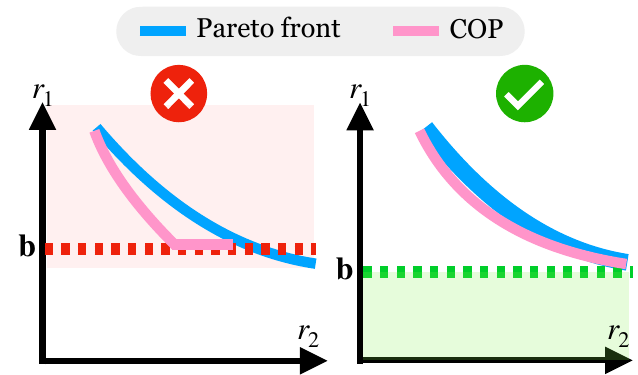}
}
\vspace{-0.2cm}
\caption{Illustration of how constraint threshold values affect \COP{\small COP} solutions.}
\vspace{-0.3cm}
\label{fig:correctb}
\end{wrapfigure}

First, to find an optimal policy that lies in $\paretoset$, it is crucial to set proper constraint values $\bm{b}$ such that the solution of Problem \eqref{eq:first-obj} contributes to the Pareto front. See Appendix \ref{sec:suboptimalbaselines} and \ref{appendix:constraintspec} for the theoretical discussion, and Figure \ref{fig:correctb}, which illustrates that if $\bm{b}$ is correctly initialized, then solving the \COP{\small COP} problem yields solutions on the \pareto{Pareto}-front. In Section \ref{sec:function_approx} we will propose a more practical method of specifying constraint values. For now, we focus on solving Problem \eqref{eq:first-obj}. To deal with the optimization variable in expectation, we let the importance sampling ratio be $\rho(y) = \frac{\pi(y)}{\piref(y)}$. For this we assume that \texttt{Supp}$(\pi)$ = \texttt{Supp}($\piref$), and for clarity, we shall omit the dependency on context $x$ as the all results hold true for all $x \in \texttt{Supp}(\nu)$. Then the final \MOPO{MOPO} problem takes the form,

\vspace{-0.3cm}
\small
\begin{equation}
\begin{aligned}
\label{eq:main-obj}
    \max_{\rho} & \underbrace{\E{y \sim \piref \\ y' \sim \mu}[ \rho(y) \, p(y \succ y')] - \tau \E{y \sim \piref}[ \rho(y) \, \ln(\rho(y))]}_{\Fcal(\rho)} \\ & \suchthat \underbrace{\E{y \sim \piref \\ y' \sim \mu}[ \rho(y) \, \bm{q}(y \succ y')]}_{\bm{\Gcal}(\rho)} \geq \bm{b} \, ,
\end{aligned}
\end{equation}
\normalsize
\vspace{-0.5cm}

which is a strictly concave optimization problem w.r.t. $\rho$. We then formulate the Lagrangian of the above \MOPO{MOPO} problem. For some $\bm{\lambda} := \{\lambda_{k}\}_{k=1}^{K-1} \geq \bm{0}$, we have the Lagrangian as $\Lcal(\rho, \bm{\lambda}) = \Fcal(\rho) - \bm{\lambda}^{T} \left(\bm{b} - \bm{\Gcal}(\rho) \right)$. This leads to the following proposition.

\begin{restatable}{proposition}{maindual}
\label{prop:main_dual}
The dual formulation of Problem \eqref{eq:main-obj} is given by,
\small
\begin{equation}
\resizebox{0.9\linewidth}{!}{$
\begin{aligned}
\label{eq:dual}
    \text{\dual} & \triangleq \min_{\bm{\lambda} \geq \bm{0}} \max_{\rho} \; \Lcal(\rho, \bm{\lambda}) = \min_{\bm{\lambda} \geq \bm{0}} \Lcal(\rho^{\star}_{\bm{\lambda}}, \bm{\lambda}) \\ &= \min_{\bm{\lambda} \geq 0} \Fcal(\rho_{\bm{\lambda}}^{\star}) - \bm{\lambda}^{T}(\bm{b} - \bm{\Gcal}(\rho_{\bm{\lambda}}^{\star})) \quad , \, \text{where}, \forAll y \in \Ycal, \\
 \rho^{\star}_{\bm{\lambda}}(y) &= \exp\left( \tau^{-1} \E{y' \sim \mu}[p(y \succ y') + \bm{\lambda}^{T} \bm{q}(y \succ y')]  - 1 \right) \, .
\end{aligned}
$}
\end{equation}
\normalsize
\end{restatable}
\vspace{-0.3cm}
See Appendix \ref{proof:main_dual} for proof. The inner maximization in Equation \eqref{eq:dual} corresponds to computing an optimal policy (importance sampling ratio $\rho$) that maximizes scalarized preferences for the $K^{th}$ objective, while the outer minimization corresponds to balancing the penalty of suboptimal policy w.r.t. the other $(K-1)$ objectives: if the current policy ($\rho$) is under performing w.r.t. the $k^{th}$ objective, ${\lambda}_{k}$ increases so that the under performance is penalized more, and vice versa.

\begin{remark}
\label{remark:logbarrier}
    Formulation in Problem \eqref{eq:dual} also connects to the use of barrier functions in optimization literature. For some $\sigma, s > 0$, consider the log barrier function for $z \in \Rbb$,
    \small
    \begin{equation*}
    \label{eq:logbarrierdef}
    \Bcal_{\sigma,s}(z) = \begin{cases}
        - \sigma \log(-z) \, , & z \leq -s \\
        \frac{\sigma}{s} z + (1-\log(s))\sigma \, , & z > -s
    \end{cases} \;\; . 
    \end{equation*}
    \normalsize
    This relaxed log-barrier function can be used to construct an unconstrained Lagrangian with $\bm{\sigma} = \{\bm{\sigma}_{k} \}_{k=1}^{K-1} > \bm{0}$ as,
    \vspace{-0.15cm}
    \small
    \begin{equation}
    \label{eq:logbarrier}
        \Lcal_{\text{LB}}(\rho, \bm{\sigma}) = \Fcal(\rho) - \sum_{k=1}^{K-1} \Bcal_{\bm{\sigma}_{k},\bm{\sigma}_{k}^{2}}(\bm{b}_{k} - \bm{\Gcal}_{k}(\rho)) \; .
    \end{equation}
    \normalsize
    See Appendix \ref{appendix:logbarrier} for more details. Although comparable theoretically, we will see in Section \ref{sec:experiments} how formulation of Problem \eqref{eq:dual} is empirically superior to that of Problem \eqref{eq:logbarrier}.
\end{remark}

\begin{figure}[ht]
\vspace{-0.1cm}
\centering
{
    \includegraphics[height=0.25\textwidth, width=0.35\textwidth]{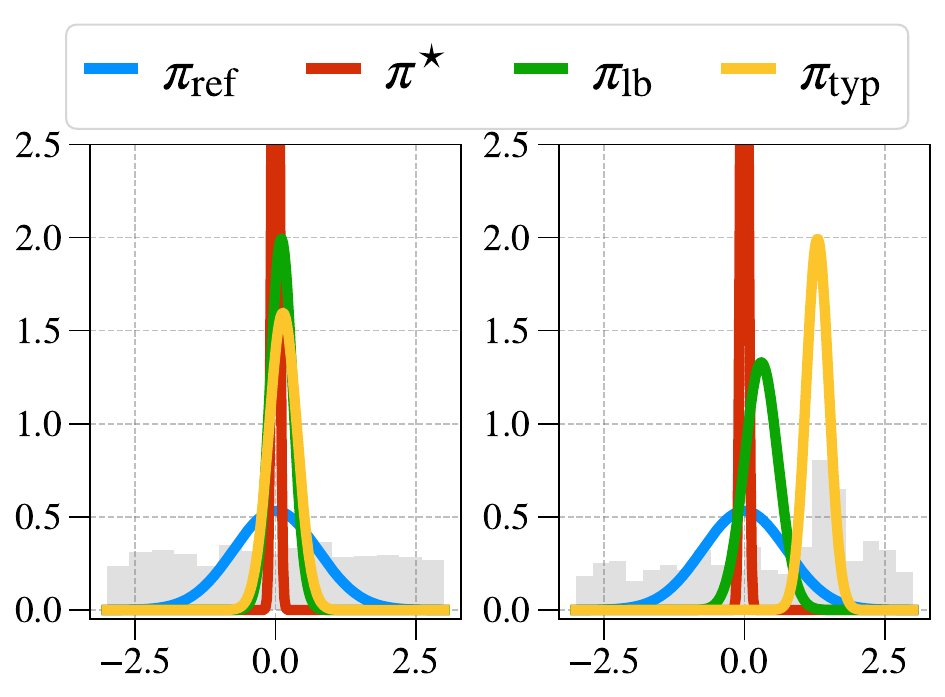}
}
\caption{\MOPO{MOPO}: Impact of preference distribution (gray) over output space with lower-magnitude outputs being preferred with probability 1. $\pi_{\text{lb}}$ constrains lower bound of $\bm{\Gcal}(\rho)$, while $\pi_{\text{typ}}$ constrains $\bm{\Gcal}(\rho)$ directly (typically).}
\label{fig:lowerbound}
\end{figure}

Returning to our discussion of Problem \eqref{eq:dual}, we find that constraining the preference vector $\bm{q}(\cdot)$ naively can result in constraint violation when deployed to the real environment. This is due to the fact that empirical importance sampling weighted preferences $\hat{\bm{q}}(\cdot)$ collected from a finite dataset inevitably have estimation error (see Figure \ref{fig:lowerbound}). For the $K^{th}$ objective, preference estimation error may be tolerated as long as those estimates are useful as policy improvement signals, i.e., it is sufficient to maintain the relative order of preferences. For the remaining $(K-1)$ constrained objectives, Equation \eqref{eq:dual} instead relies on the estimated values directly. Hence, to make a policy robust against these estimation errors, we consider a constrained policy optimization scheme that instead constrains the \emph{lower bound} of the preference estimates $\hat{\bm{q}}(\cdot)$,  i.e.,
\small
\begin{align*}
\max_{\rho} \Fcal(\rho) \; \suchthat \; \texttt{LowerBound}(\bm{\Gcal}(\rho)) \geq \bm{b} \, .
\end{align*}
\normalsize
\vspace{-0.5cm}

Then, the key question is how to estimate the lower bound of the $\rho$-weighted preference vector $\bm{q}(\cdot)$. One natural way is to exploit bootstrap confidence interval estimation \cite{efron1994introduction}, by sampling bootstrap datasets $\Dcal_{i}$ from $\Dcal$ and constructing population statistics for confidence interval estimation. However, this procedure
is computationally expensive. Instead, we take a different, computationally efficient approach. Specifically, we solve a constrained optimization problem for each objective $k \in [K-1]$. For a policy $\pi_{k} \in \Delta_{\Ycal}^{\Xcal}$ and some $\epsilon \in \Rbb_{+}$, the lower bound optimization problem of $\bm{\Gcal}(\rho)$ becomes:

\vspace{-0.3cm}
\begin{equation}
\resizebox{0.65\linewidth}{!}{$
\begin{aligned}
\label{eq:lbop}
\min_{\pi_{k}} \E{y \sim \pi_{k} \\ y' \sim \mu}[\rho(y) \bm{q}_{k}(y \succ y')] \; \suchthat \\ \KL(\pi_{k} \;  || \;  \piref) \leq \epsilon \quad \text{and}\, , \quad \sum_{y \in \Ycal} \pi_{k}(y) = 1 \, .
\end{aligned}
$}
\end{equation}
\vspace{-0.4cm}

In essence, we want to adversarially optimize a distribution $\pi_{k}$ so that it underestimates the preference objective $k$, and simultaneously, we enforce that this $\pi_{k}$ should not be perturbed too much from $\piref$. As $\epsilon$ increases, the degree of underestimation of preference probabilities also increases. Now we simplify the constrained optimization problem into a single unconstrained problem as follows. 

\begin{restatable}{proposition}{lowerbounddual}
\label{prop:lowerbound_dual}
The optimal solution to Problem \eqref{eq:lbop} can be obtained by solving the following optimization problem.
\small
\begin{equation}
\resizebox{0.92\linewidth}{!}{$
\begin{aligned}
\label{eq:lbop_lagrangian}
\chi_{k}^{\star} & = \argmax_{\chi_{k} \geq 0} \; \Lcal_{k}(\chi_{k} \, ; \, \rho) \\ &:= - \chi_{k} \ln \left( \Ebb_{y \sim \piref , y' \sim \mu} \left[ 
 \exp \left( - \chi_{k}^{-1} \rho(y) \bm{q}_{k}(y \succ y')  \right) \right] \right) - \chi_{k} \epsilon \nonumber \\ 
& \text{with}, \;\, \pi_{k}^{\star}(y) \propto \piref(y) \underbrace{\exp \left( - (\chi_{k}^{\star})^{-1} \E{y' \sim \mu}[\rho(y) \bm{q}_{k}(y \succ y')] \right)}_{w(y) \; (\text{unnormalized weight})}.
\end{aligned}
$}
\end{equation}
\normalsize
\end{restatable}
\vspace{-0.4cm}

See Appendix \ref{proof:lowerbound_dual} for proof. Note that each term in Equation \eqref{eq:lbop_lagrangian} can be estimated using samples from the offline dataset $\Dcal$, thus it can be optimized in a fully offline manner. This procedure can be understood as computing the weights for each sample while adopting reweighting, that is, $\texttt{LowerBound}(\bm{\Gcal}(\rho)) = \Ebb_{y \sim \piref, y' \sim \mu}[\tilde{w}(y) \rho(y) \bm{q}_{k}(y \succ y')]$, where $\tilde{w}(y)$ is normalized $w(y)$. Solving this unconstrained optimization problem and plugging it in the main \dual $\,$ Problem \eqref{eq:dual} corresponds to the following iterative updates. 

\vspace{-0.6cm}
\small
\begin{equation}
\resizebox{0.73\linewidth}{!}{$
\begin{aligned}
\label{eq:theoretical_chi_lambda_update}
    \chi^{\star} &\leftarrow \argmax_{\chi \geq 0} \sum_{k=1}^{K-1} \Lcal_{k}(\chi_{k} ; \rho_{\bm{\lambda}}) \quad \text{and} \, , \\ \bm{\lambda}^{\star} &\leftarrow \argmin_{\bm{\lambda} \geq 0} \; \Fcal(\rho_{\bm{\lambda}}) -  \bm{\lambda}^{T} ( \bm{b} - \underbrace{\bm{\Lcal}(\chi^{\star} ; \rho_{\bm{\lambda}})}_{\texttt{LowerBound}(\bm{\Gcal}(\rho))} ),
\end{aligned}
$}
\end{equation}
\normalsize
\vspace{-0.6cm}

where $\bm{\Lcal}(\chi^{\star} ; \rho_{\bm{\lambda}}) = (\Lcal_{1}(\chi^{\star}, \rho_{\bm{\lambda}}), \dots, \Lcal_{K-1}(\chi^{\star}, \rho_{\bm{\lambda}}))^{T}$. Compared to the original dual Problem \eqref{eq:dual}, the additional maximization for $\chi$ is introduced to estimate the lower bound of preference probabilities for the constrained objectives. 
Once the optimal solution $\bm{\lambda}^{\star}$ is computed, $\rho_{\bm{\lambda}^{\star}}^{\star}(y) \equiv \rho^{\star}(y) = \pi^{\star}(y) / \piref(y)$ is also derived from Equation \eqref{eq:dual}.

\textbf{Policy Extraction.}  The current procedure estimates the importance sampling ratio $\rho^{\star}(y)$ of the optimal policy, rather than directly obtaining the policy itself. Since the importance sampling ratio does not provide a direct way to sample an action, we need to extract the optimal policy $\pi^{\star}$ from $\rho^{\star}$ in order to select actions when deployed. For tabular cases, it is straightforward to obtain $\pi^{\star}(y) = \left( \piref(y) \rho^{\star}(y) \right) / \left(\sum_{y' \in \Ycal} \piref(y') \rho^{\star}(y') \right)$. However, the same method cannot directly be applied to large scale optimization problems due to the intractability of computing the normalization constant. For such cases, we instead extract the policy using importance-weighted behavioral cloning by solving the following problem: 

\vspace{-0.3cm}
\small
\begin{align}
\label{eq:policy-extraction}
    \max_{\pi} \E{y \sim \pi^{\star}}[ \log(\pi(y))] = \max_{\pi} \E{y \sim \piref}[ \rho^{\star}(y) \log(\pi(y))],
\end{align}
\normalsize
\vspace{-0.5cm}

which maximizes the log-likelihood of actions to be selected by the optimal policy $\pi^{\star}$. We now turn our attention to the practical implementation of \MOPO{MOPO}.

\vspace{-0.3cm}
\subsection{Practical Algorithm with Function Approximation}
\label{sec:function_approx}
\vspace{-0.2cm}

For this section we consider the practical implementation of \MOPO{MOPO}, using a given offline dataset of preferences $\Dcal$. First, we discuss the function approximations used to parameterize the optimization variables, and then we discuss how to implement the procedure discussed above.

\textbf{Function Approximations.}  We let the optimization variables $\bm{\lambda}, \bm{\chi} \in \Rbb_{+}^{K-1}$, and the policy $\pi_{\psi}$ to be parameterized by $\psi$. The parameter $\bm{\chi} = \{\chi_{k}\}_{k=1}^{K-1}$ is trained by minimizing the following loss:

\vspace{-0.3cm}
\small
\begin{equation}
\resizebox{0.98\linewidth}{!}{$
\begin{aligned}
\min_{\bm{\chi} \geq \bm{0}} \;  \sum_{k=1}^{K-1} \left[ \chi_{k} \ln \left( \Ebb_{y \sim \piref, y' \sim \mu} \left[ 
 \exp \left(  \chi_{k}^{-1} \rho(y) \bm{q}_{k}(y \succ y')  \right) \right] \right) + \chi_{k} \epsilon \right] \; \nonumber.
\end{aligned}
$}
\end{equation}
\normalsize
\vspace{-0.4cm}

Since this involves a logarithm outside of the expectation, to overcome bias we use mini-batch approximation for computational efficiency. The empirical form is then given by:

\vspace{-0.3cm}
\small
\begin{equation}
\resizebox{0.98\linewidth}{!}{$
\begin{aligned}
\min_{\bm{\chi} \geq \bm{0}} J(\bm{\chi} \,; \rho) := \E{\text{batch}(\Dcal) \sim \Dcal} \left[ \sum_{k=1}^{K-1} \left[ \chi_{k} \ln \left( \E{y, y' \sim \text{batch}(\Dcal)} \left[ 
 \exp \left(  \chi_{k}^{-1} \rho(y) \bm{q}_{k}(y \succ y')  \right) \right] \right) + \chi_{k} \epsilon \right] \right]  \, . 
\label{eq:chi_update}
\end{aligned}
$}
\end{equation}
\normalsize
\vspace{-0.6cm}

Finally, following the discussion before, $\bm{\lambda}$ and the policy parameterized by $\psi$ are optimized by:

\vspace{-0.3cm}
\small
\begin{equation}
\resizebox{0.75\linewidth}{!}{$
\begin{aligned}
\label{eq:lambda_policy_update}
    \min_{\bm{\lambda} \geq \bm{0}} & J(\bm{\lambda} \,; \bm{\chi}) := \Fcal(\rho)  - \bm{\lambda}^{T}( \bm{b} - J(\bm{\chi} \,; \rho)) \quad \text{and}, \\ 
    \min_{\psi} & J_{\rho}(\pi_{\psi}) := - \E{y \sim \piref}[\rho(y) \log(\pi_{\psi}(y))],
\end{aligned}
$}
\end{equation}
\normalsize
where all variables are optimized jointly. For the empirical derivation of \MOPO{MOPO} given a \emph{fixed} offline dataset of preferences, please see Appendix \ref{appendix:samplingbasedcop}, where we discuss its practical implementation. However, two caveats still remain. 

\textbf{Lagged reference policy.}  The $\KL$ regularizer in Equation \eqref{eq:main-obj} keeps $\pi_{\psi}$ close to a fixed reference $\piref$.  
Because successive iterates move toward the \pareto{Pareto} front, it is advantageous to regularize against a \emph{stronger} policy than the initial prior.
Analogous to target networks in Q‑learning \cite{mnih2013playing} and recent self‑improvement loops for LLMs \cite{chen2024self, pang2024iterative}, we update the reference every $t_0$ steps:
$\piref \leftarrow \pi_{\psi}^{(t-t_0)}$. All expectations in Equation \eqref{eq:empirical-cop} are then reweighted by the ratio
$\rho_{\text{lag,ref}}(y) = \pi_{\psi}^{(t-t_0)}(y) / \pi_{\mathrm{ref}}(y)$ for all $y \in \Ycal$, requiring no further data collection. We find that this leads to more stable optimization and consistent progression to the \pareto{Pareto} front.

\textbf{Adaptive constraint schedule.} In practice, exact values of constraint thresholds $\bm{b}$ are unknown. We therefore, after every $t_{0}$ steps, set the constraint vector \emph{only} from the policy of the previously optimized iterates: $ \bm{b} = \bm{\beta}^{\top} \bm{\mathcal G} \left(\rho^{(t-t_{0})} \right)$ for some hyperparameter $\bm{\beta} \in (0,1)^{K-1}$ and $\rho^{(t-t_{0})}(\cdot) = \pi_{\psi}^{(t-t_{0})}(\cdot) / \piref(\cdot)$ is the importance sampling ratio. This retains the theoretical lower bound interpretation of the constraints while avoiding a global search across thresholds.

Summarizing the above discussion gives the final
\emph{Multi‑Objective Preference Optimization} (\MOPO{MOPO}) algorithm, shown in
Algorithm \ref{alg:mopo}. At each step we maximize the primary preference objective subject to the time‑varying lower bounds $\bm{b}$, while penalizing divergence from the current reference policy. The result is a scalable, offline algorithm that steadily advances toward \pareto{Pareto}‑optimal solutions.

\begin{figure}[t]
\vspace{-0.2cm}
\begin{algorithm}[H]
\caption{Multi-objective Preference Optimization}
\begin{algorithmic}[1]
   \STATE {\bfseries Input:} Dataset $\Dcal$, batch size $M$, learning rate $\eta$, epochs $T$, lag $t_{0}$, relaxation parameter $\bm{\beta}$.
   \STATE Initialize parameters $\bm{\lambda}^{(0)}, \bm{\chi}^{(0)}, \psi^{(0)}$, $\bm{b}$, $\rho_{\bm{\lambda}^{(0)}}(\cdot)$.
    \FOR{$t = 1,2, \dots ,T$} 
      \STATE Sample $M$ mini-batches from $\Dcal$.
      \STATE { $\bm{\chi}^{(t)}=[\bm{\chi}^{(t-1)}-\eta \nabla_{\bm{\chi}} J(\bm{\chi} \,; \rho_{\bm{\lambda}^{(t-1)}})  ]_{+}$} using Equation \eqref{eq:chi_update}. 
      \STATE {\small $\bm{\lambda}^{(t)} = [ \bm{\lambda}^{(t-1)} - \eta \nabla_{\bm{\lambda}} J(\bm{\lambda} \, ; \bm{\chi}^{(t)})]_{+}$} using Equation \eqref{eq:lambda_policy_update}. 
      \STATE Compute $\rho_{\bm{\lambda}^{(t)}}(\cdot)$ using Equation \eqref{eq:dual}. 
      \STATE Update policy $\psi^{(t)} = \psi^{(t-1)} - \eta \nabla_{\psi} J_{\rho_{\bm{\lambda}^{(t)}}}(\pi_{\psi})$ using Equation \eqref{eq:lambda_policy_update}. 
      \IF{$t \; \texttt{mod} \; t_{0}$ = 0}
      \STATE $\bm{b} = \bm{\beta}^{\top} \bm{\mathcal G} \left(\rho^{(t-t_{0})} \right)$ 
      \STATE $\piref \leftarrow \pi_{\psi}^{(t-t_{0})}$ 
      \ENDIF
    \ENDFOR
    \STATE {\bfseries Output:} Optimal policy $\pi_{\psi}^{(T)}$.
\end{algorithmic}
\label{alg:mopo}
\end{algorithm}
\vspace{-0.75cm}
\end{figure}

\vspace{-0.1cm}
\section{Empirical Results}
\label{sec:experiments}

\vspace{-0.2cm}

We conducted extensive experimental evaluation on the relative empirical performance of the \MOPO{MOPO} algorithm to arrive at the following conclusions: (i) \MOPO{MOPO} exactly recovers the optimal policy under some canonical ordering of preferences, (ii) in contrast to several DPO‑style baselines \cite{ipo}, \MOPO{MOPO} does not overfit to the preference dataset and preserves performance on held‑out comparisons, and (iii) it performs better or nearly as well as baseline algorithms when evaluated on LLMs on real data. 

\vspace{-0.35cm}
\subsection{Synthetic Sanity Check}
\vspace{-0.15cm}

Here we verify if \MOPO{MOPO} solves the optimization problem exactly when ground‑truth optimum is known. We consider a two‑objective, context‑free bandit setting with discrete action set $\Ycal=\{y_1,y_2,y_3\}$ and a uniform reference policy $\pi_{\mathrm{ref}}$. Training data $\Dcal=\{(y_{i},y_{i}',\Ibb(y_{i},y_{i}'))\}_{i=1}^{N}$, where $\Ibb(\cdot, \cdot) \in \{0,1\}^{2}$ is the preference indicator vector, i.e., $\Ibb_{k}(y, y') = 1$ if $y \succ y'$ for the $k^{\text{th}}$ objective, and $0$ otherwise for $k \in \{1,2\}$, of size $N$ consists of one of three canonical preference structures respectively: total order, partial order, and unobserved preferences.

\vspace{-0.5cm}
\small
\begin{align*}
\Dcal_{1}&=\{(y_1,y_2,(1,1)),\,(y_2,y_3,(1,1)),\,(y_1,y_3,(1,1))\} \\
\Dcal_{2}&=\{(y_1,y_2,(1,1)),\,(y_1,y_3,(1,0)),\,(y_2,y_3,(0,0))\} \\
\Dcal_{3}&=\{(y_1,y_2,(1,1)),\,(y_2,y_1,(0,1))\} 
\end{align*}
\normalsize
\vspace{-0.7cm}


\textbf{Learning Protocol.}  Mini‑batches are drawn uniformly \emph{with replacement} from each of $\Dcal_{j}$ for $j \in \{1,2,3\}$ and optimized for 20k steps using Adam \cite{kingma2014adam} with a learning rate of 0.015 and batch size 12. Policy is encoded simply as $\pi_{\psi}(y_{i}) = \text{softmax}(\bm{\psi})_{i}$ using $\bm{\psi} \in \Rbb^{3}$.

\textbf{Results.}  Learned action probabilities for each $\Dcal_{j}$ (column-wise) are seen in Figure \ref{fig:mopo_action_probs}. For $\Dcal_{1}$ the policy converges to the Condorcet winner $y_1$. For $\Dcal_{2}$, in which $y_1$ and $y_3$ are undominated, \MOPO{MOPO} assigns them equal probability. 
Finally, with the inconsistent set $\Dcal_{3}$, \MOPO{MOPO} successively down‑weights the unobserved action $y_3$ as $\tau$ decreases. Across all cases, increasing $\tau$ interpolates between $\piref$ and the optimal policy, confirming controlled regularization.

\begin{figure*}[t]
\centering

\begin{minipage}[h]{0.57\textwidth}
\vspace{0pt} 

\begin{tcolorbox}[
  width=\linewidth, nobeforeafter,
  coltitle=black, fonttitle=\fontfamily{lmss}\selectfont,
  title={$\tau=0.1$}, halign title=flush center,
  colback=backg_blue!5, colframe=darkgreen!10,
  boxrule=2pt, boxsep=2pt,
  top=-3pt, left=-4pt, right=-4pt, bottom=-1pt
]
  \centering
  {\includegraphics[scale=0.12]{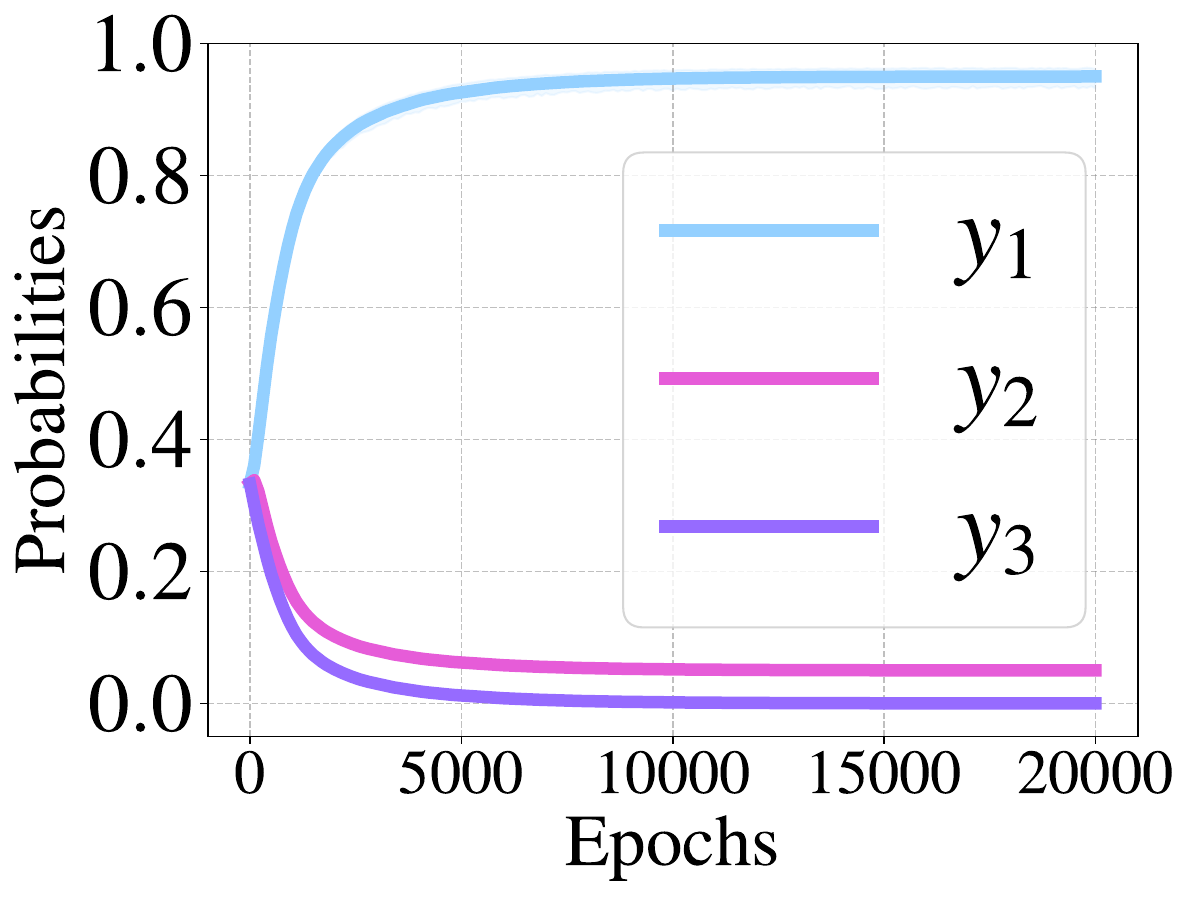}\label{fig:d1_tau-0-1_action_prob}}
  {\includegraphics[scale=0.12]{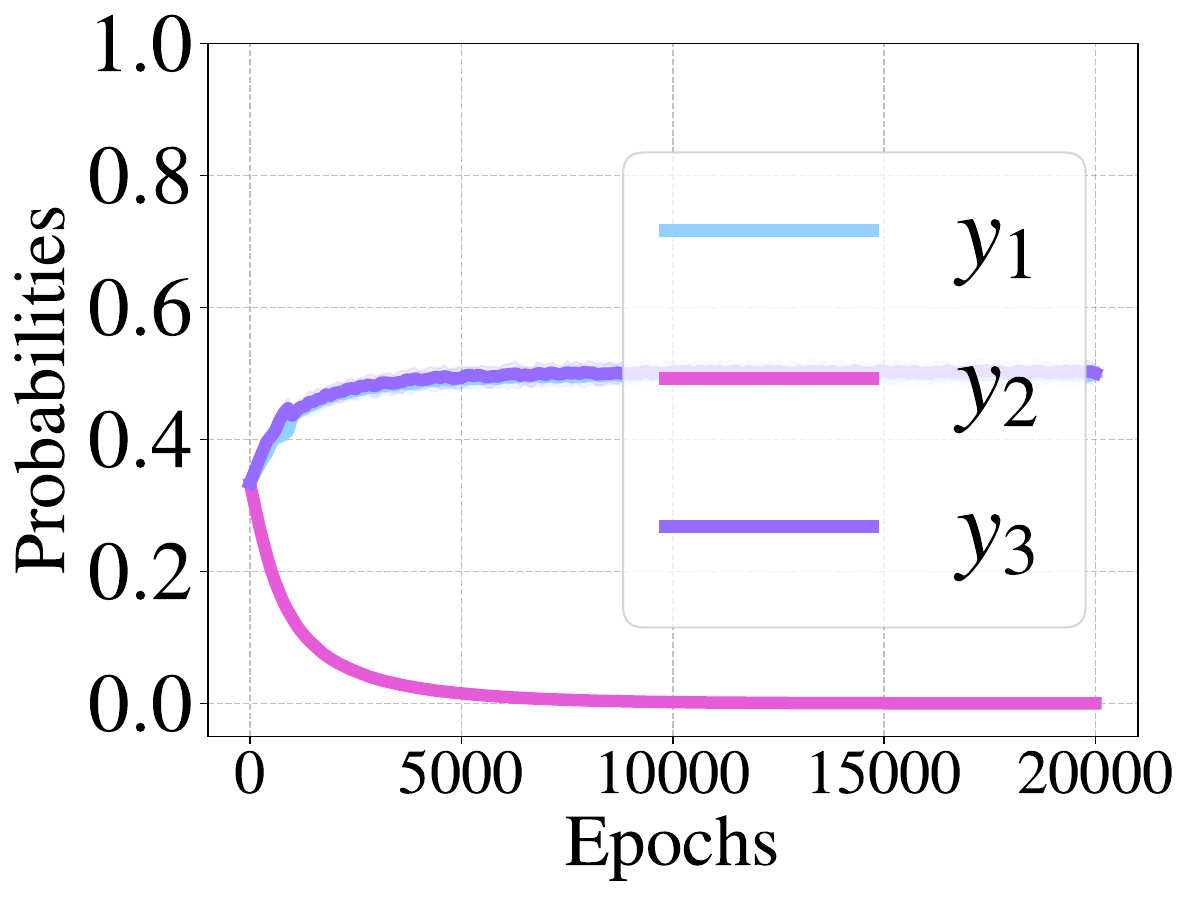}\label{fig:d2_tau-0-1_action_prob}}
  {\includegraphics[scale=0.12]{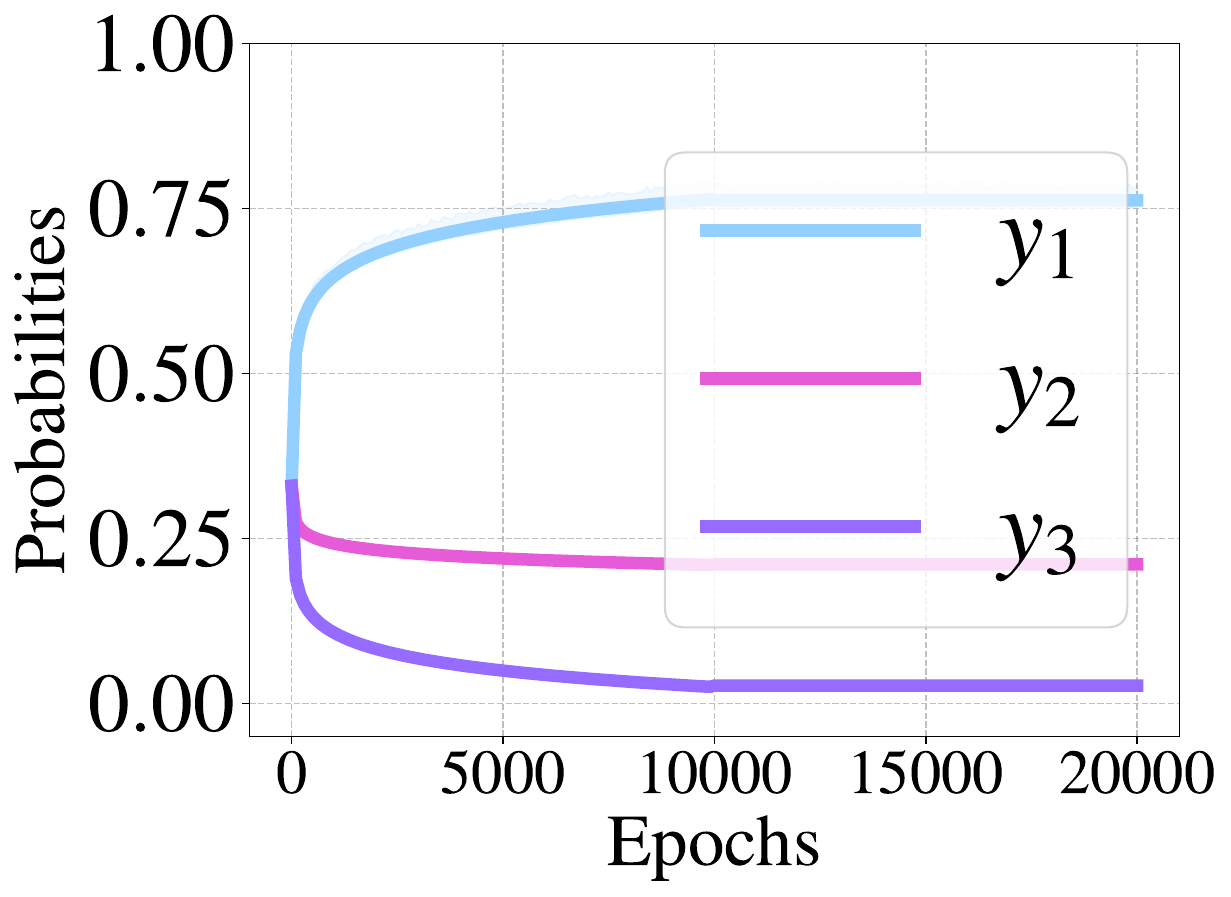}\label{fig:d5_tau-0-1_action_prob}}
\end{tcolorbox}

\begin{tcolorbox}[
  width=\linewidth, nobeforeafter,
  coltitle=black, fonttitle=\fontfamily{lmss}\selectfont,
  title={$\tau=1$}, halign title=flush center,
  colback=backg_blue!5, colframe=skyblue!20,
  boxrule=2pt, boxsep=1pt,
  top=-10pt, left=-4pt, right=-4pt, bottom=-1pt
]
  \centering
  \hspace{0.01cm}
  \subfloat[Dataset $\Dcal_{1}$]{\includegraphics[scale=0.12]{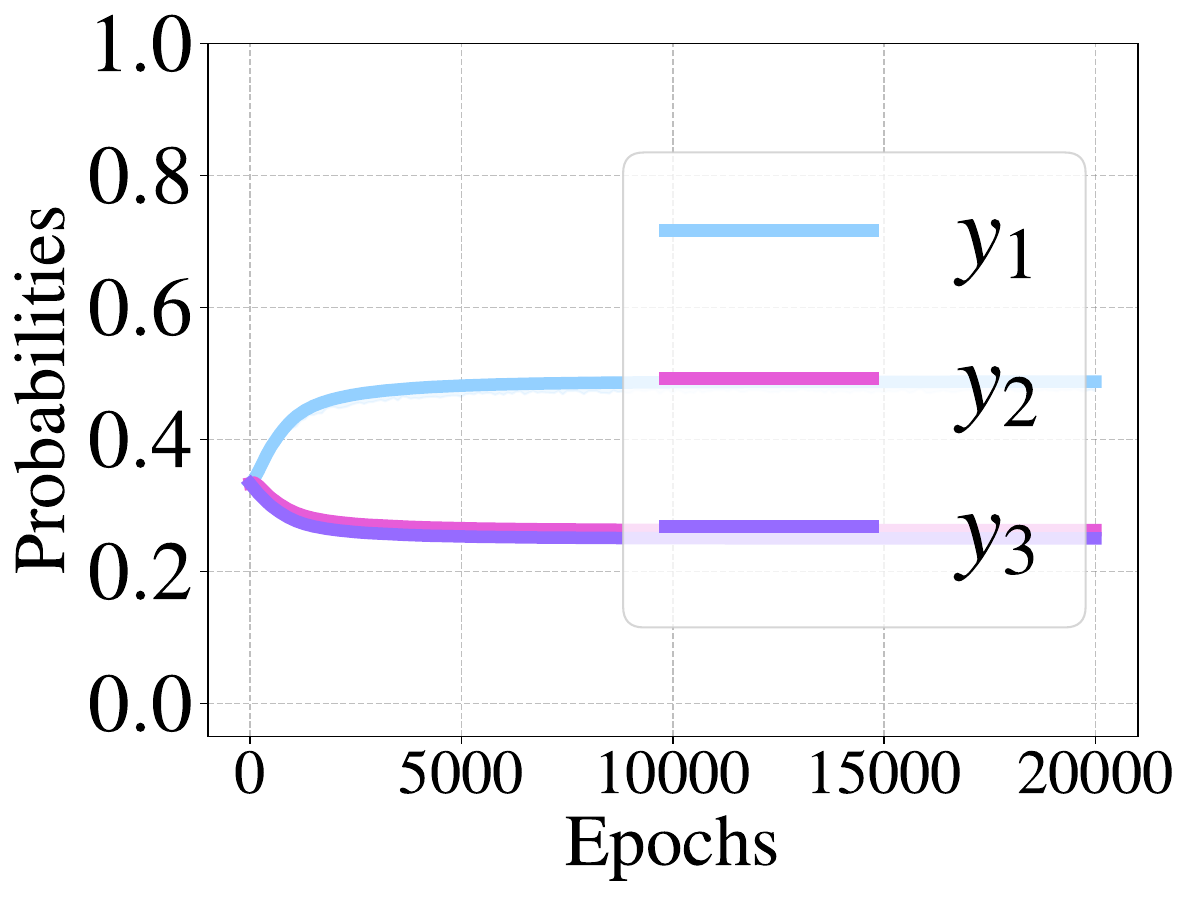}\label{fig:d1_tau-1-0_action_prob}}
  \hspace{0.01cm}
  \subfloat[Dataset $\Dcal_{2}$]{\includegraphics[scale=0.12]{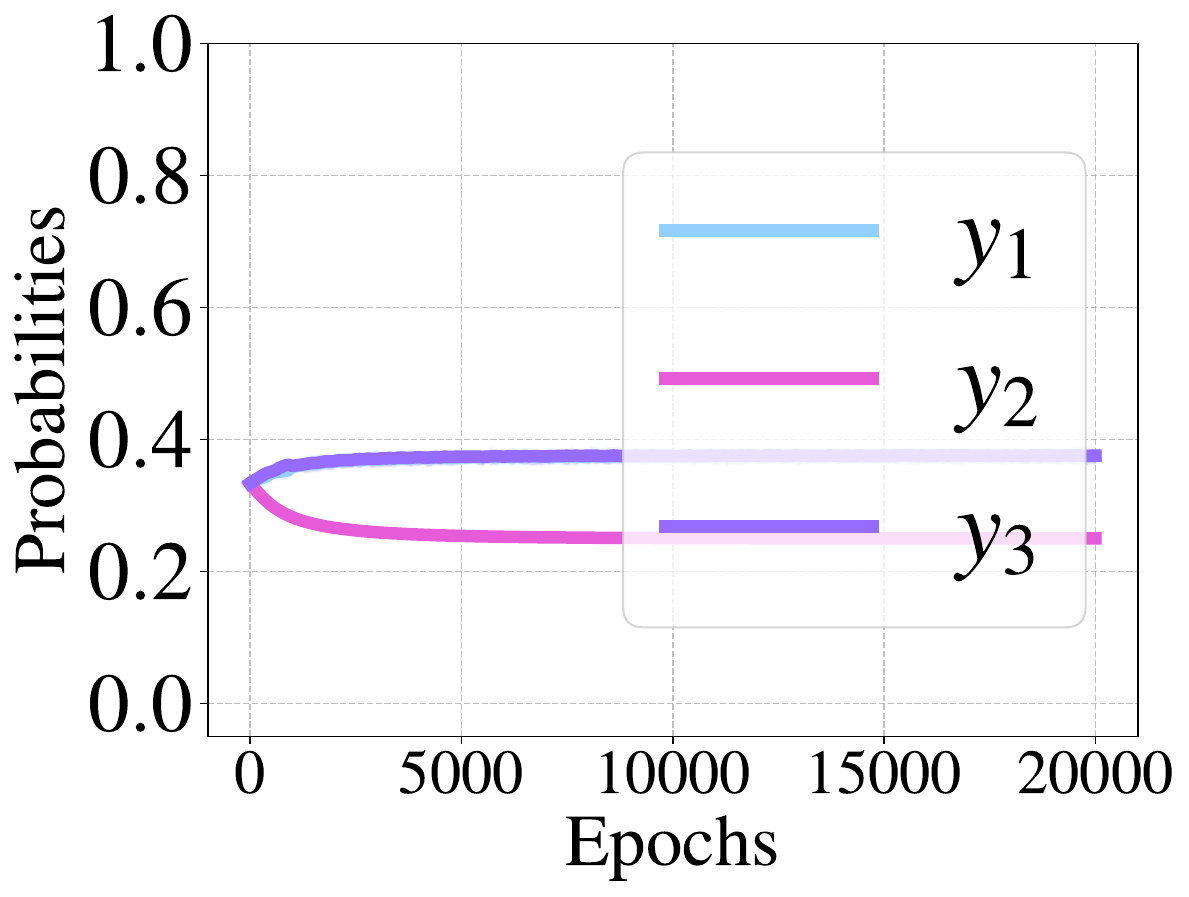}\label{fig:d2_tau-1-0_action_prob}}
  \hspace{0.05cm}
  \subfloat[Dataset $\Dcal_{3}$]{\includegraphics[scale=0.12]{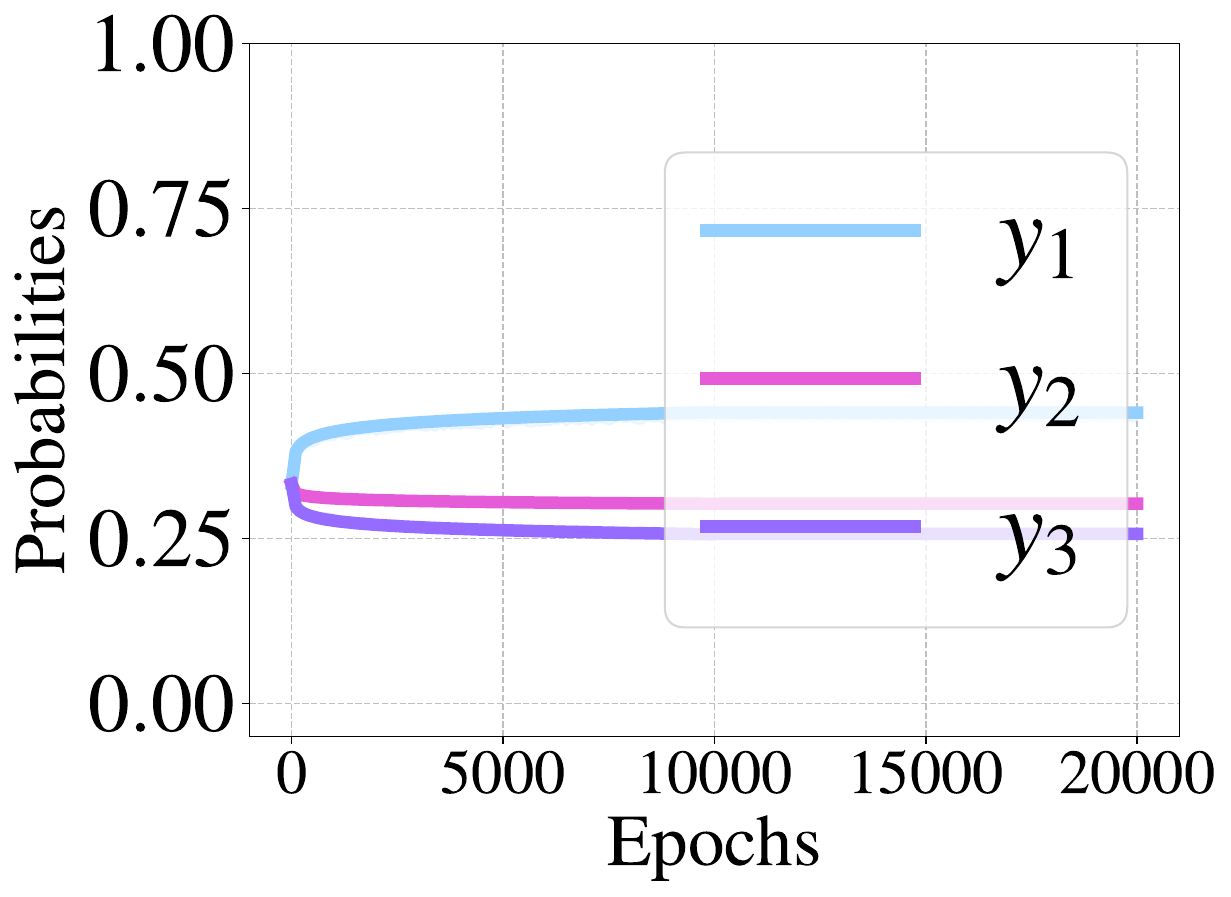}\label{fig:d5_tau-1-0_action_prob}}
\end{tcolorbox}
\caption{Learning curves of action probabilities of \MOPO{MOPO} on various dataset types (read column-wise).}
\label{fig:mopo_action_probs}
\end{minipage} \hfill
\begin{minipage}[h]{0.38\textwidth}
\begin{tcolorbox}[
  width=\linewidth, nobeforeafter,
  coltitle=black, fonttitle=\fontfamily{lmss}\selectfont,
  title={SFT $\to$ Alignment}, halign title=flush center,
  colback=backg_blue!5, colframe=purple!10,
  boxrule=2pt, boxsep=2pt,
  top=-1pt, left=-4pt, right=-4pt, bottom=-1pt
]
  \centering
  \includegraphics[scale=0.28]{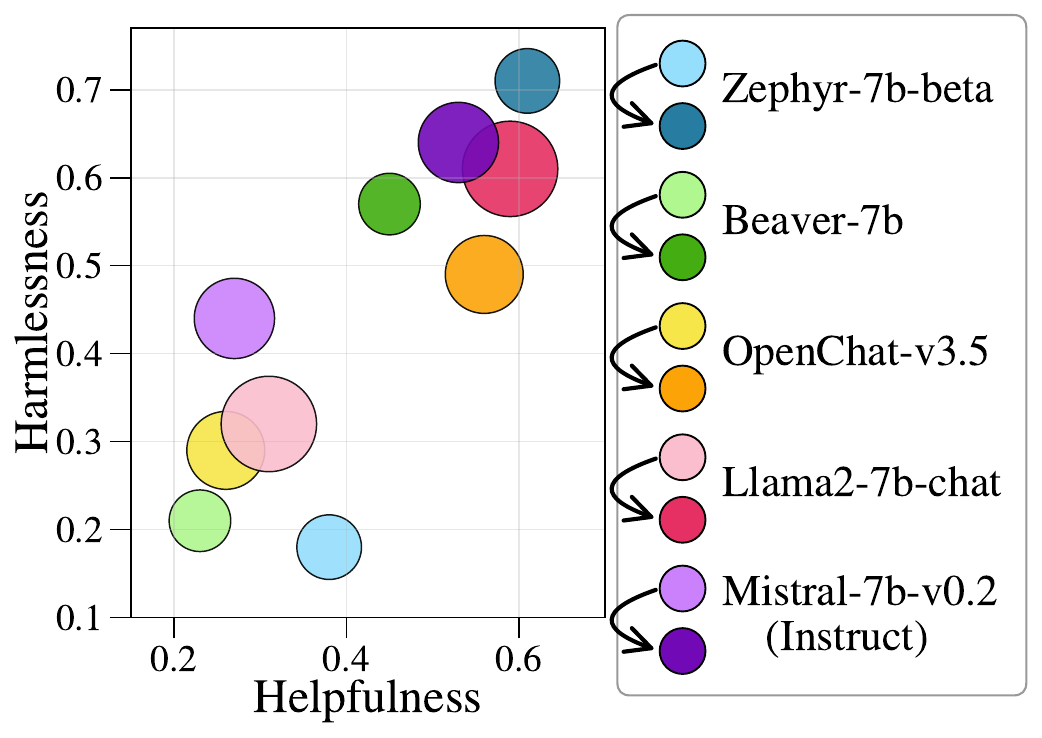}
\end{tcolorbox}
\caption{Open-sourced models before (pastel) and after (saturated) alignment using \MOPO{MOPO}. Circle size represents approximate training data size and annotation cost.}
\label{fig:openmodels}
\end{minipage}
\end{figure*}

\subsection{Experiments on Text Generation Tasks}
\vspace{-0.15cm}

Having established the validity of alignment using \MOPO{MOPO}, we now aim to evaluate the performance of the \MOPO{MOPO} algorithm on text generation tasks that involve diverse rewards.

\textbf{Baselines.} We consider (i) Rewards-in-Context (RiC) \cite{yang2024rewards} with in-context rewards and human preferences (this is a state-of-the-art baseline that outperforms Rewarded Soups \cite{rame2023rewarded} and MORLHF \cite{li2020deep}), (ii) 
Preference-aware Autoregressive Reward Model (PARM) \cite{lin2025parm}, a single unified model trained across all preference dimensions, (iii) Multi-objective DPO (MODPO) \cite{zhou2023beyond}, which is a more nuanced version of scalarizing multiple reward models into one (this corresponds to ``DPO on $w^{T}r$" in Figure \ref{fig:dpo_comparison}), and (iv) DPO on \Djoint{$\Dcal_{J}$}, which trains DPO on context-output pairs which are preferred under both reward models similar to our discussion in Section \ref{sec:preliminaries}. We also compare two verions of \MOPO{MOPO}: (i) using the log-barrier formulation of Remark \ref{remark:logbarrier} called \MOPO{MOPO-LB}, and (ii) using the Lagrangian called \MOPO{MOPO-Lag}.
\vspace{-0.2cm}

\textbf{Benchmarks and Training.} All methods are evaluated based on the quality of their empirical \pareto{Pareto} fronts on the Helpful Assistant task \cite{bai2022training} and the Reddit Summarization task \cite{stiennon2020learning}. The Helpful Assistant task uses the HH-RLHF dataset containing 160k prompts with human preference annotations. Evaluation is performed using three HuggingFace reward models --`helpful', `harmless', and `humor' -- which score responses from different perspectives \cite{wolf2019huggingface}. The Reddit Summary task consists of 14.9k posts and their summaries. We consider three reward models: `pref1' and `no-hallucinate', which evaluate human preference for summaries, and a `faithful' reward that measures the faithfulness of the summary to the original post. We apply these benchmarks to publicly available 7B-level models that have shown strong helpfulness scores \cite{eval-harness, dubois2023alpacafarm}. In Figure \ref{fig:openmodels}, even though we observe that Zephyr-7b-beta \cite{tunstall2023zephyr}, an open-source model fine-tuned over Mistral-7B-v0.1 \cite{jiang2023mistral7b}, Pareto dominates other models when aligned using \MOPO{MOPO}, we conduct experiments on various models to show generalizability of \MOPO{MOPO}. All likelihood maximization problems use parameter efficient fine-tuning with LoRA \cite{hu2022lora} for 10k steps with a batch size of 8. LoRA is applied to the shared transformer backbone, and is optimized together with the policy parameters. See Appendix \ref{appendix:implementation_details} for more details.

\begin{figure*}[ht]

\begin{tcolorbox}[width=\textwidth, nobeforeafter, coltitle = black, fonttitle=\fontfamily{lmss}\selectfont, title={RiC vs PARM vs MODPO vs DPO on \Djoint{$\Dcal_{J}$} vs \MOPO{MOPO}}, halign title=flush center, colback=backg_blue!5, colframe=LinTS!10, boxrule=2pt, boxsep=2pt, grow to left by=-0.5mm, top=-1pt, left=-4pt, right=-4pt, bottom=-1pt]
\centering
{

    {\includegraphics[height=0.2\textwidth, width=0.23\textwidth]{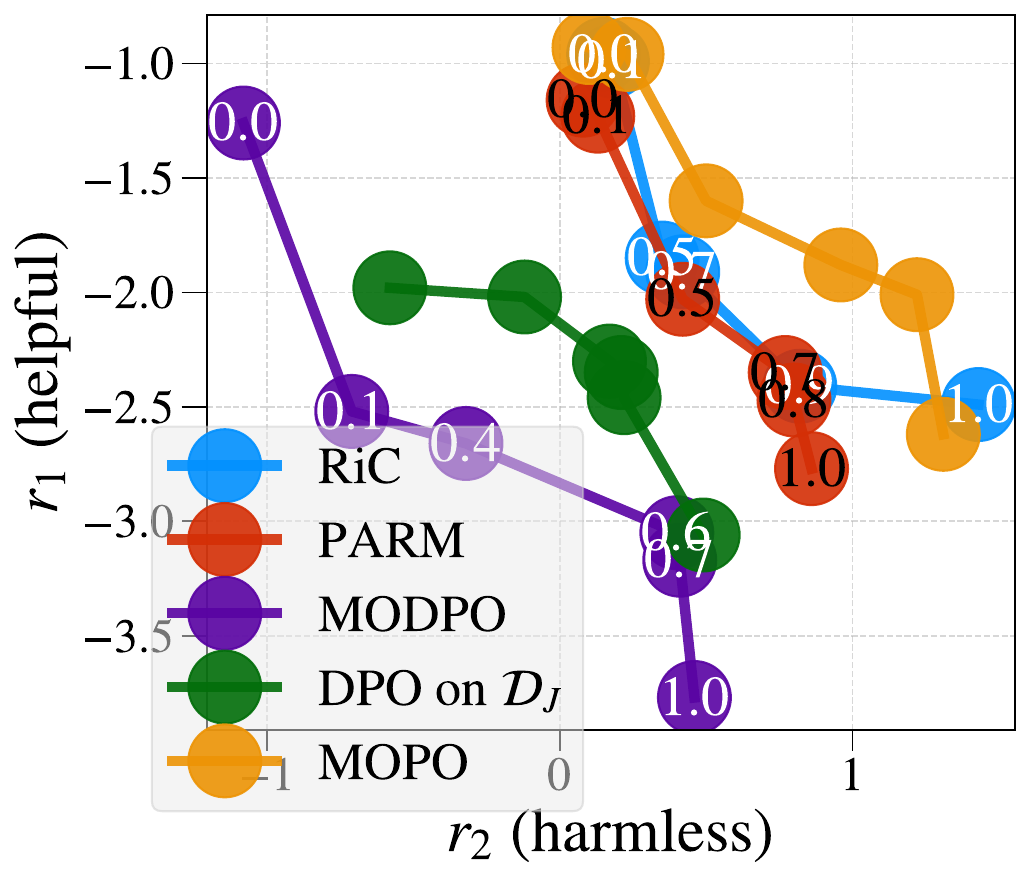}}
}
{
    {\includegraphics[height=0.2\textwidth, width=0.23\textwidth]{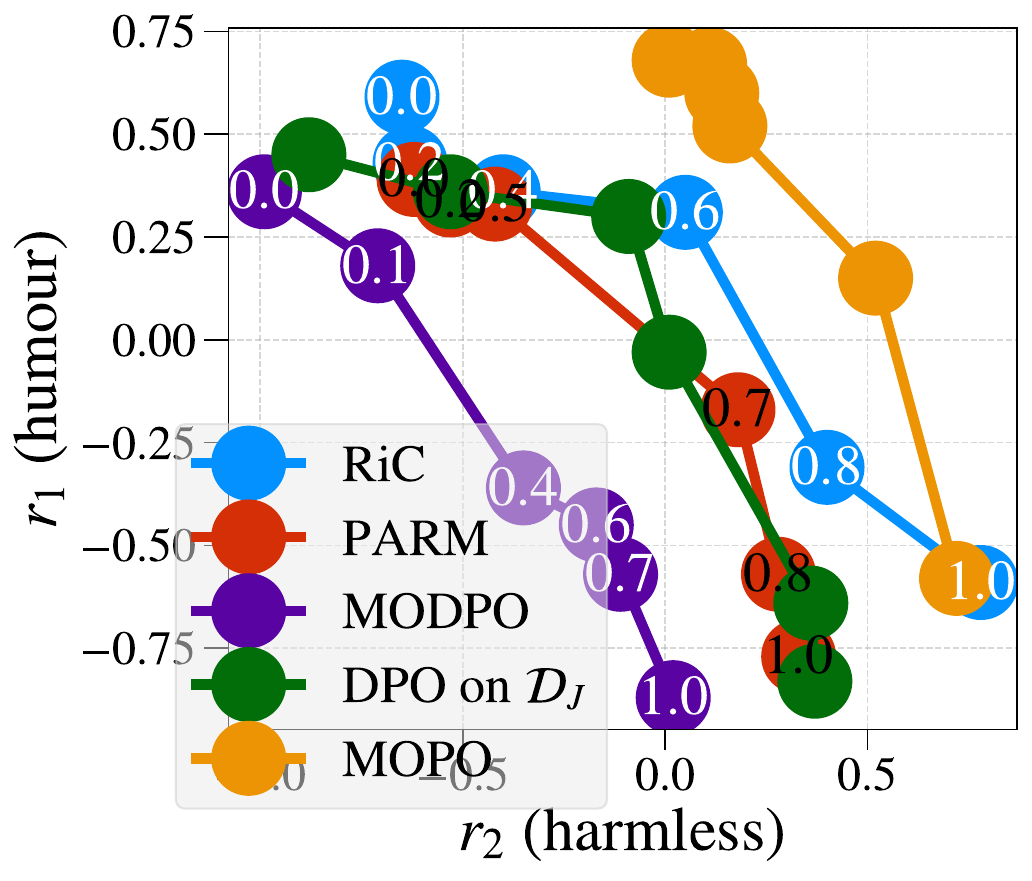}}
}
{
    {\includegraphics[height=0.2\textwidth, width=0.24\textwidth]{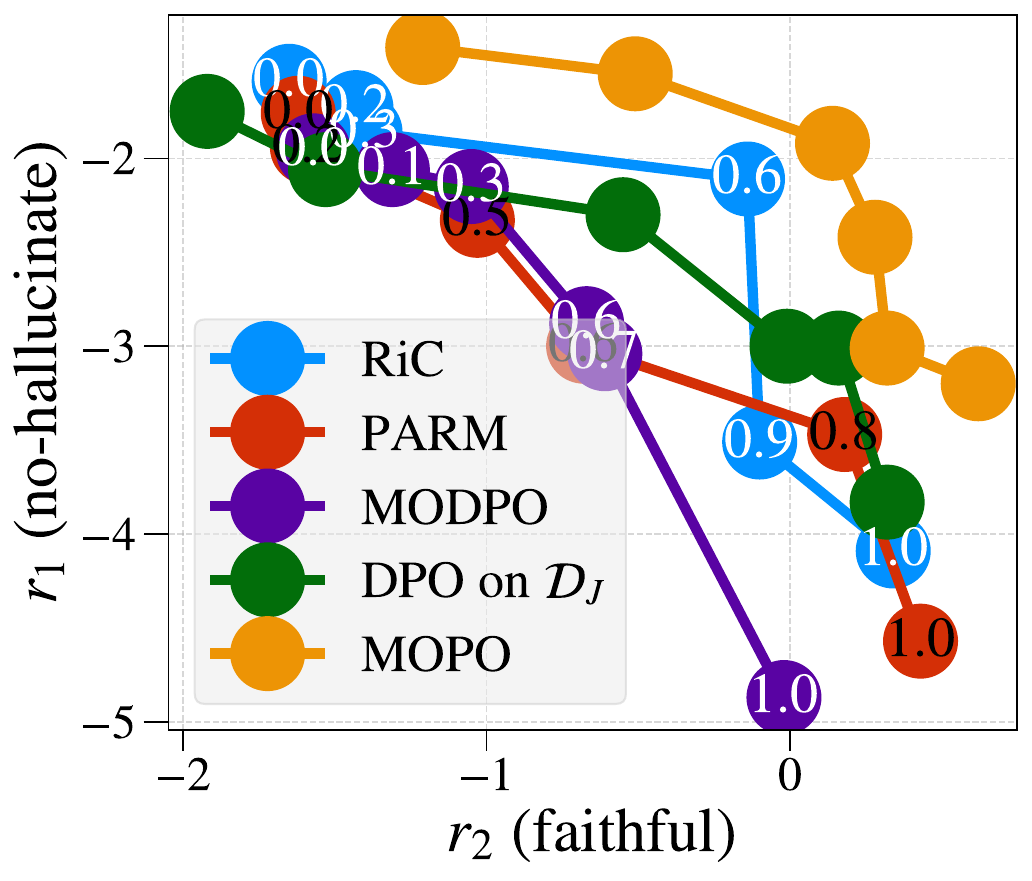}}
}
{
    {\includegraphics[height=0.2\textwidth, width=0.23\textwidth]{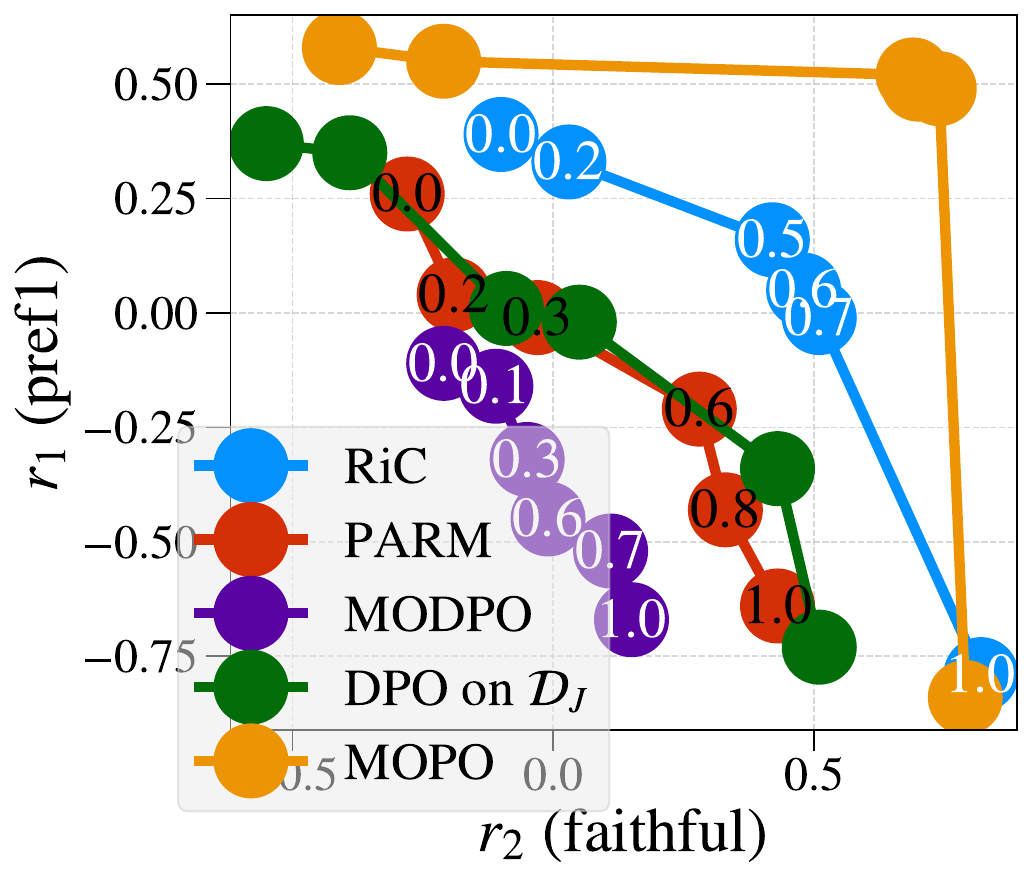}}
}
\end{tcolorbox}
\vspace{-0.5cm}
\caption{Empirical Pareto fronts of phi-1.5 when aligned using baselines on the Helpful Assistant and Reddit Summary tasks.}
\label{fig:phi_pareto}
\end{figure*}

\begin{table*}[t]
\centering
\caption{Results of the Helpful Assistant task and the Reddit Summary task with normalized rewards.
Color intensity reflects per-column magnitude (red = low, green = high).}
\small
\begin{adjustbox}{max width=\textwidth}
\begin{tabular}{cl|c|c|c|c}
\toprule
\textbf{Model} & \textbf{Algorithm} &
\makecell{\texttt{\small helpful-harmless} \\ $(r_{1},r_{2})$} &
\makecell{\texttt{\small humour-harmless} \\ $(r_{1},r_{2})$} &
\makecell{\texttt{\scriptsize no\_hallucinate-faithful} \\ $(r_{1},r_{2})$} &
\makecell{\texttt{\small pref1-faithful} \\ $(r_{1},r_{2})$} \\
\midrule

\multirow{7}{*}{\makecell{\textbf{phi-1.5} \\ \cite{li2023textbooks}}}
& RiC & (\accp{0.23}{0.03}{1}, \accp{0.21}{0.03}{1}) & (\accp{0.31}{0.04}{2}, \accp{0.27}{0.03}{2}) & (\accp{0.40}{0.06}{3}, \accp{0.37}{0.05}{3}) & (\accp{0.39}{0.05}{4}, \accp{0.36}{0.07}{4})  \\
& PARM & (\accp{0.28}{0.02}{1}, \accp{0.24}{0.05}{1}) & (\accp{0.33}{0.05}{2}, \accp{0.32}{0.04}{2}) & \cellcolor{mopogray!60}{(\accp{0.39}{0.05}{3}, \accp{0.41}{0.05}{3})} & \cellcolor{mopogray!60}{(\accp{0.47}{0.07}{4}, \accp{0.49}{0.06}{4})}  \\
& MODPO & (\accp{0.21}{0.04}{1}, \accp{0.20}{0.04}{1}) & (\accp{0.25}{0.04}{2}, \accp{0.24}{0.06}{2}) & (\accp{0.34}{0.07}{3}, \accp{0.35}{0.04}{3}) & (\accp{0.31}{0.06}{4}, \accp{0.30}{0.04}{4})  \\
& DPO on \Djoint{$\Dcal_{J}$} & (\accp{0.22}{0.03}{1}, \accp{0.19}{0.03}{1}) & (\accp{0.27}{0.05}{2}, \accp{0.25}{0.04}{2}) & (\accp{0.35}{0.03}{3}, \accp{0.35}{0.05}{3}) & (\accp{0.31}{0.04}{4}, \accp{0.34}{0.03}{4})  \\
& \MOPO{MOPO-LB} &
(\accp{0.27}{0.04}{1}, \accp{0.24}{0.04}{1}) &
(\accp{0.34}{0.06}{2}, \accp{0.35}{0.05}{2}) &
\cellcolor{mopogray!60}{(\accp{0.46}{0.07}{3}, \accp{0.40}{0.04}{3})} &
(\accp{0.44}{0.03}{4}, \accp{0.47}{0.05}{4}) \\
& \MOPO{MOPO-Lag} &
\cellcolor{mopogray!60}{(\accp{0.29}{0.05}{1}, \accp{0.26}{0.03}{1})} &
\cellcolor{mopogray!60}{(\accp{0.37}{0.05}{2}, \accp{0.36}{0.06}{2})} &
(\accp{0.42}{0.04}{3}, \accp{0.39}{0.05}{3}) &
(\accp{0.47}{0.08}{4}, \accp{0.45}{0.06}{4}) \\
\midrule

\multirow{7}{*}{\makecell{\textbf{OpenChat-v3.5} \\ \cite{wang2023openchat}}}
& RiC & (\accp{0.40}{0.04}{1}, \accp{0.33}{0.03}{1}) & (\accp{0.37}{0.03}{2}, \accp{0.40}{0.04}{2}) & (\accp{0.55}{0.05}{3}, \accp{0.43}{0.04}{3}) & (\accp{0.40}{0.04}{4}, \accp{0.43}{0.03}{4}) \\
& PARM & (\accp{0.38}{0.03}{1}, \accp{0.37}{0.06}{1}) & (\accp{0.36}{0.05}{2}, \accp{0.39}{0.04}{2}) & (\accp{0.60}{0.04}{3}, \accp{0.47}{0.02}{3}) & (\accp{0.39}{0.03}{4}, \accp{0.40}{0.05}{4}) \\
& MODPO & (\accp{0.35}{0.05}{1}, \accp{0.33}{0.04}{1}) & (\accp{0.39}{0.04}{2}, \accp{0.37}{0.05}{2}) & (\accp{0.51}{0.03}{3}, \accp{0.48}{0.04}{3}) & (\accp{0.41}{0.04}{4}, \accp{0.39}{0.02}{4}) \\
& DPO on \Djoint{$\Dcal_{J}$} & (\accp{0.28}{0.02}{1}, \accp{0.26}{0.02}{1}) & (\accp{0.36}{0.03}{2}, \accp{0.35}{0.02}{2}) & (\accp{0.52}{0.04}{3}, \accp{0.47}{0.06}{3}) & (\accp{0.35}{0.01}{4}, \accp{0.37}{0.03}{4}) \\
& \MOPO{MOPO-LB} &
(\accp{0.41}{0.04}{1}, \accp{0.39}{0.03}{1}) &
(\accp{0.40}{0.05}{2}, \accp{0.39}{0.05}{2}) &
\cellcolor{mopogray!60}{(\accp{0.63}{0.02}{3}, \accp{0.50}{0.03}{3})} &
\cellcolor{mopogray!60}{(\accp{0.46}{0.02}{4}, \accp{0.44}{0.03}{4})} \\
& \MOPO{MOPO-Lag} &
\cellcolor{mopogray!60}{(\accp{0.43}{0.05}{1}, \accp{0.41}{0.04}{1})} &
\cellcolor{mopogray!60}{(\accp{0.42}{0.04}{2}, \accp{0.40}{0.04}{2})} &
(\accp{0.61}{0.01}{3}, \accp{0.49}{0.04}{3}) &
(\accp{0.44}{0.04}{4}, \accp{0.42}{0.01}{4}) \\
\midrule

\multirow{7}{*}{\makecell{\textbf{Llama-3.1-8B} \\ \cite{grattafiori2024llama}}}
& RiC & \cellcolor{mopogray!60}{(\accp{0.41}{0.05}{1}, \accp{0.47}{0.03}{1})} & (\accp{0.43}{0.04}{2}, \accp{0.41}{0.02}{2}) & (\accp{0.40}{0.03}{3}, \accp{0.42}{0.05}{3}) & (\accp{0.40}{0.05}{4}, \accp{0.41}{0.03}{4}) \\
& PARM & (\accp{0.40}{0.01}{1}, \accp{0.42}{0.04}{1}) & (\accp{0.35}{0.03}{2}, \accp{0.44}{0.01}{2}) & \cellcolor{mopogray!60}{(\accp{0.44}{0.04}{3}, \accp{0.55}{0.01}{3})} & (\accp{0.49}{0.03}{4}, \accp{0.48}{0.04}{4}) \\
& MODPO & (\accp{0.32}{0.04}{1}, \accp{0.33}{0.03}{1}) & (\accp{0.37}{0.05}{2}, \accp{0.45}{0.04}{2}) & (\accp{0.46}{0.02}{3}, \accp{0.46}{0.03}{3}) & (\accp{0.48}{0.04}{4}, \accp{0.46}{0.06}{4}) \\
& DPO on \Djoint{$\Dcal_{J}$} & (\accp{0.31}{0.03}{1}, \accp{0.30}{0.04}{1}) & (\accp{0.35}{0.04}{2}, \accp{0.40}{0.04}{2}) & (\accp{0.39}{0.04}{3}, \accp{0.40}{0.03}{3}) & (\accp{0.39}{0.01}{4}, \accp{0.39}{0.03}{4}) \\
& \MOPO{MOPO-LB} &
(\accp{0.45}{0.02}{1}, \accp{0.43}{0.05}{1}) &
(\accp{0.42}{0.05}{2}, \accp{0.50}{0.02}{2}) &
(\accp{0.51}{0.03}{3}, \accp{0.49}{0.01}{3}) &
\cellcolor{mopogray!60}{(\accp{0.52}{0.03}{4}, \accp{0.50}{0.05}{4})} \\
& \MOPO{MOPO-Lag} &
\cellcolor{mopogray!60}{(\accp{0.48}{0.02}{1}, \accp{0.46}{0.03}{1})} &
\cellcolor{mopogray!60}{(\accp{0.45}{0.03}{2}, \accp{0.53}{0.05}{2})} &
\cellcolor{mopogray!60}{(\accp{0.54}{0.05}{3}, \accp{0.52}{0.02}{3})} &
(\accp{0.50}{0.05}{4}, \accp{0.49}{0.02}{4}) \\
\midrule

\multirow{7}{*}{\makecell{\textbf{Mistral-7b-v0.2} \\ \textbf{(Instruct)} \\ \cite{jiang2023mistral7b}}}
& RiC & (\accp{0.44}{0.04}{1}, \accp{0.41}{0.03}{1}) & (\accp{0.43}{0.05}{2}, \accp{0.44}{0.02}{2}) & (\accp{0.45}{0.04}{3}, \accp{0.41}{0.03}{3}) & \cellcolor{mopogray!60}{(\accp{0.46}{0.02}{4}, \accp{0.44}{0.05}{4})} \\
& PARM & \cellcolor{mopogray!60}{(\accp{0.48}{0.03}{1}, \accp{0.47}{0.01}{1})} & (\accp{0.41}{0.03}{2}, \accp{0.46}{0.04}{2}) & (\accp{0.43}{0.02}{3}, \accp{0.42}{0.02}{3}) & (\accp{0.43}{0.05}{4}, \accp{0.42}{0.04}{4}) \\
& MODPO & (\accp{0.41}{0.05}{1}, \accp{0.39}{0.04}{1}) & (\accp{0.40}{0.02}{2}, \accp{0.42}{0.02}{2}) & (\accp{0.43}{0.03}{3}, \accp{0.43}{0.05}{3}) & (\accp{0.37}{0.04}{4}, \accp{0.38}{0.03}{4}) \\
& DPO on \Djoint{$\Dcal_{J}$} & (\accp{0.32}{0.01}{1}, \accp{0.30}{0.02}{1}) & (\accp{0.36}{0.04}{2}, \accp{0.39}{0.03}{2}) & (\accp{0.39}{0.05}{3}, \accp{0.40}{0.01}{3}) & (\accp{0.36}{0.03}{4}, \accp{0.35}{0.04}{4}) \\
& \MOPO{MOPO-LB} &
(\accp{0.44}{0.02}{1}, \accp{0.45}{0.04}{1}) &
\cellcolor{mopogray!60}{(\accp{0.49}{0.04}{2}, \accp{0.48}{0.05}{2})} &
(\accp{0.47}{0.04}{3}, \accp{0.45}{0.03}{3}) &
(\accp{0.45}{0.04}{4}, \accp{0.43}{0.03}{4}) \\
& \MOPO{MOPO-Lag} &
(\accp{0.45}{0.04}{1}, \accp{0.47}{0.03}{1}) &
(\accp{0.49}{0.02}{2}, \accp{0.47}{0.04}{2}) &
\cellcolor{mopogray!60}{(\accp{0.48}{0.02}{3}, \accp{0.46}{0.02}{3})} &
(\accp{0.45}{0.02}{4}, \accp{0.42}{0.02}{4}) \\
\midrule

\multirow{7}{*}{\makecell{\textbf{Zephyr-7b-beta} \\ \cite{tunstall2023zephyr}}}
& RiC & (\accp{0.43}{0.02}{1}, \accp{0.46}{0.03}{1}) & (\accp{0.45}{0.02}{2}, \accp{0.47}{0.03}{2}) & (\accp{0.50}{0.03}{3}, \accp{0.48}{0.02}{3}) & (\accp{0.48}{0.02}{4}, \accp{0.44}{0.04}{4}) \\
& PARM & (\accp{0.45}{0.01}{1}, \accp{0.48}{0.02}{1}) & (\accp{0.48}{0.04}{2}, \accp{0.50}{0.02}{2}) & (\accp{0.56}{0.04}{3}, \accp{0.53}{0.03}{3}) & \cellcolor{mopogray!60}{(\accp{0.52}{0.01}{4}, \accp{0.46}{0.03}{4})} \\
& MODPO & (\accp{0.35}{0.01}{1}, \accp{0.33}{0.04}{1}) & (\accp{0.40}{0.03}{2}, \accp{0.40}{0.05}{2}) & (\accp{0.43}{0.01}{3}, \accp{0.44}{0.02}{3}) & (\accp{0.39}{0.03}{4}, \accp{0.40}{0.02}{4}) \\
& DPO on \Djoint{$\Dcal_{J}$} & (\accp{0.34}{0.03}{1}, \accp{0.38}{0.02}{1}) & (\accp{0.38}{0.01}{2}, \accp{0.41}{0.03}{2}) & (\accp{0.41}{0.02}{3}, \accp{0.43}{0.05}{3}) & (\accp{0.36}{0.04}{4}, \accp{0.36}{0.05}{4}) \\
& \MOPO{MOPO-LB} &
\cellcolor{mopogray!60}(\accp{0.51}{0.04}{1}, \accp{0.52}{0.04}{1}) &
(\accp{0.49}{0.02}{2}, \accp{0.48}{0.04}{2}) &
(\accp{0.55}{0.03}{3}, \accp{0.51}{0.01}{3}) &
(\accp{0.48}{0.02}{4}, \accp{0.43}{0.03}{4}) \\
& \MOPO{MOPO-Lag} &
\cellcolor{mopogray!60}{(\accp{0.53}{0.02}{1}, \accp{0.51}{0.03}{1})} &
\cellcolor{mopogray!60}{(\accp{0.52}{0.03}{2}, \accp{0.53}{0.04}{2})} &
\cellcolor{mopogray!60}{(\accp{0.60}{0.02}{3}, \accp{0.55}{0.03}{3})} &
(\accp{0.50}{0.04}{4}, \accp{0.44}{0.02}{4}) \\
\bottomrule
\end{tabular}
\end{adjustbox}
\label{tab:empirical_tasks_comparison}
\end{table*}

\textbf{Evaluation protocol.}  For both datasets, we uniformly sample 3k prompts from the test sets and cluster them into $j=6$ clusters, and compute the average reward for each objective across cluster groups \cite{mukherjee2024multi, yang2024rewards}. Performance is measured by comparing the resulting multi-objective reward values.

See Table \ref{tab:empirical_tasks_comparison} for results over 5 independent runs. Each cell represents reward tuples along with 1 standard deviation of models when aligned with baselines and trained on the corresponding preference dataset. For \MOPO{MOPO}, we let the primary objective be $r_{1}$ and constrain preferences w.r.t. $r_{2}$. For RiC, PARM, and MODPO, values with highest rewards across inference preference vector inputs are shown. According to the results, \MOPO{MOPO} achieves substantial alignment improvement for most objectives and policy models. From a model scale perspective, \MOPO{MOPO} is able to scale from tiny-LLMs (phi-1.5) to larger LLMs as well (Llama-3.1-8B). In Figure \ref{fig:phi_pareto} we use phi-1.5 as the SFT model for alignment tasks to empirically visualize Pareto fronts of all baselines. Each point represents the average rewards evaluated on sampled cluster groups from the test set. For \MOPO{MOPO}, we vary the constraint threshold w.r.t. $r_{2}$ to obtain points on the empirical front. For RiC and PARM, the numbers at the centers of the markers indicate the (normalized) preference for $r_{2}$ in each cluster that achieves the highest reward within that cluster. It is clear from the results that \MOPO{MOPO} consistently approximates the Pareto front as well as or better than PARM and RiC.


\begin{wraptable}{r}{0.225\textwidth}
\vspace{-0.5cm}
\caption{Three objective alignment for Helpful Assistant task with normalized rewards.}
\label{tab:threeobjective}
{
\renewcommand{\arraystretch}{1.1}
\fontsize{7}{9}\selectfont
\addtolength{\tabcolsep}{-0.69em}
\begin{tabular}{c|ccc}
\hline
 & helpful & humour & harmless \\
\hline
RLHF-r1 & \cellcolor{accgood!50}{0.76} & -0.42 & -0.23 \\ 
\hline
RLHF-r2 & -0.81 & \cellcolor{accgood!50}{0.53} & -0.40 \\ 
\hline
RLHF-r3 & -0.79 & -0.92 & \cellcolor{accgood!50}{0.42} \\ 
\hline
RiC & 0.25 & 0.15 & 0.11 \\ \hline
PARM & 0.31 & 0.17 & \cellcolor{accgood!20}{0.23} \\ 
\hline
MODPO & 0.04 & -0.09 & 0.08 \\
\hline
DPO on \Djoint{$\Dcal_{J}$} & 0.18 & 0.09 & 0.11 \\
\hline
\MOPO{MOPO-LB} & 0.30 & 0.19 & 0.18 \\
\hline
\MOPO{MOPO-Lag} & \cellcolor{accgood!20}{0.39} & \cellcolor{accgood!20}{0.22} & 0.17 \\ 
\hline
\end{tabular}
}
\vspace{-0.4cm}
\end{wraptable}

To assess the scalability of \MOPO{MOPO}, we aim to optimize three objectives in the
Helpful Assistant task, i.e., ‘harmless’, ‘helpful’, and ‘humour’. We use Zephyr-7b-beta as our base SFT model. For easy interpretation, we sample 2k prompts from the test set and plot the average rewards. The results in Table \ref{tab:threeobjective} reveal that RLHF \cite{yang2024rewards}, when optimized for a single reward (see Appendix \ref{appendix:implementation_details} for the problem formulation), achieves high performance on the targeted reward but degrades substantially on the remaining objectives. In contrast, multi-objective algorithms yield more uniform performance across all rewards, with \MOPO{MOPO} achieving the most balanced trade-offs. The results demonstrate the effectiveness of \MOPO{MOPO} in scaling to more than two objectives.

\begin{figure}[ht]
\vspace{-0.1cm}
\centering
\includegraphics[width=0.3\textwidth]{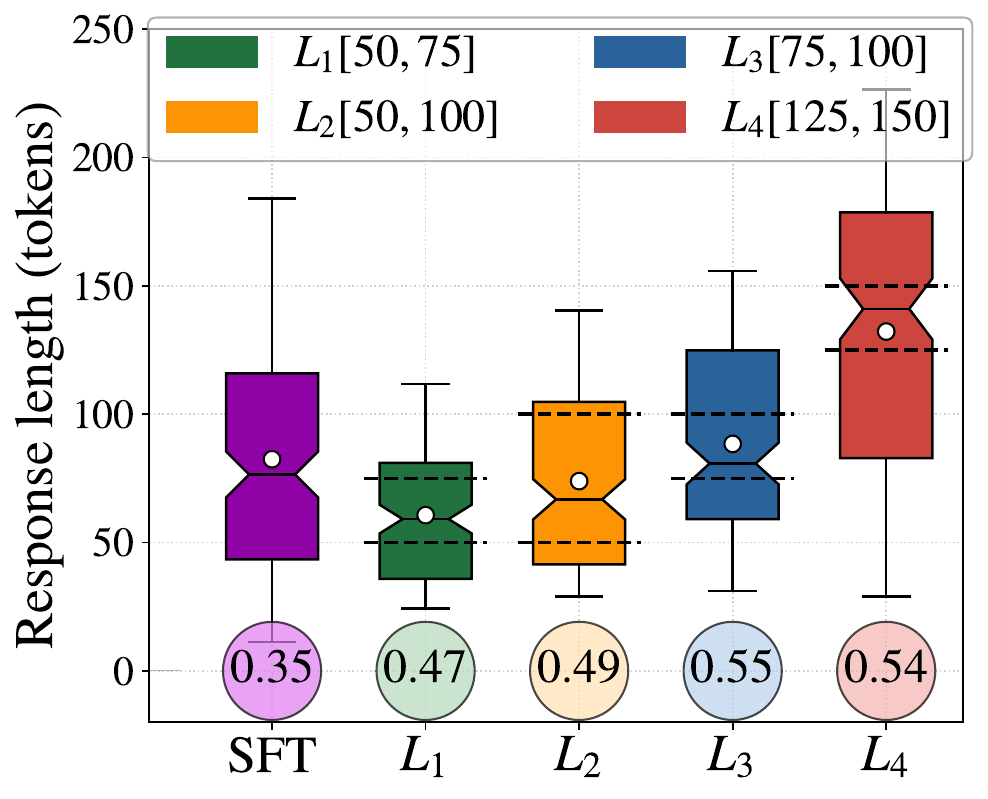}
\caption{\small Response lengths along with helpfulness scores in circles, where notches indicate medians, boxes show $\pm$25\% quantiles, white circles mark the means, and the black dashed lines depict the constraints imposed.}
\label{fig:length}
\vspace{-0.5cm}
\end{figure}

\begin{figure}[ht]
\noindent
\begin{tcolorbox}[width=0.45\textwidth, nobeforeafter, coltitle = black, fonttitle=\fontfamily{lmss}\selectfont, title={Effect of constraint relaxation parameter $\beta$}, halign title=flush center, colback=backg_blue!5, colframe=purple!10, boxrule=2pt, boxsep=2pt, grow to left by=-0.5mm, top=0pt, left=-4pt, right=-4pt, bottom=-1pt]
    \centering
{
    {\includegraphics[scale=0.125]{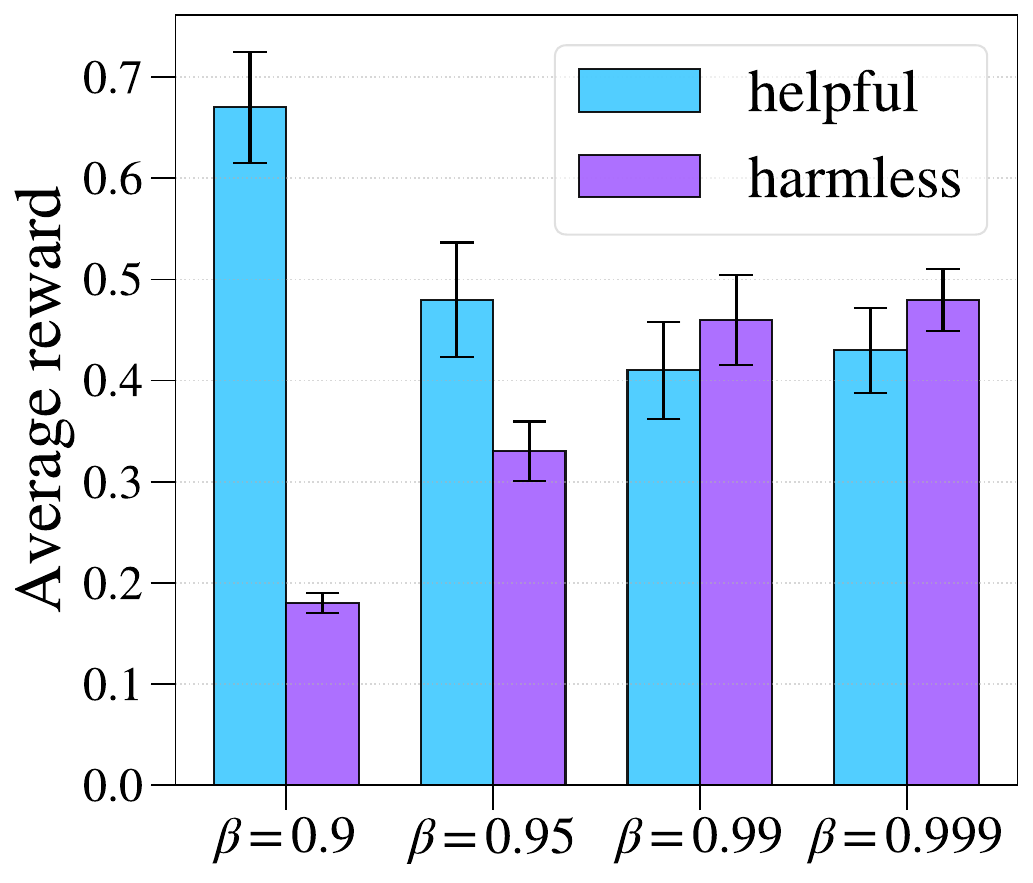}
    }
}
{
    {\includegraphics[scale=0.125]{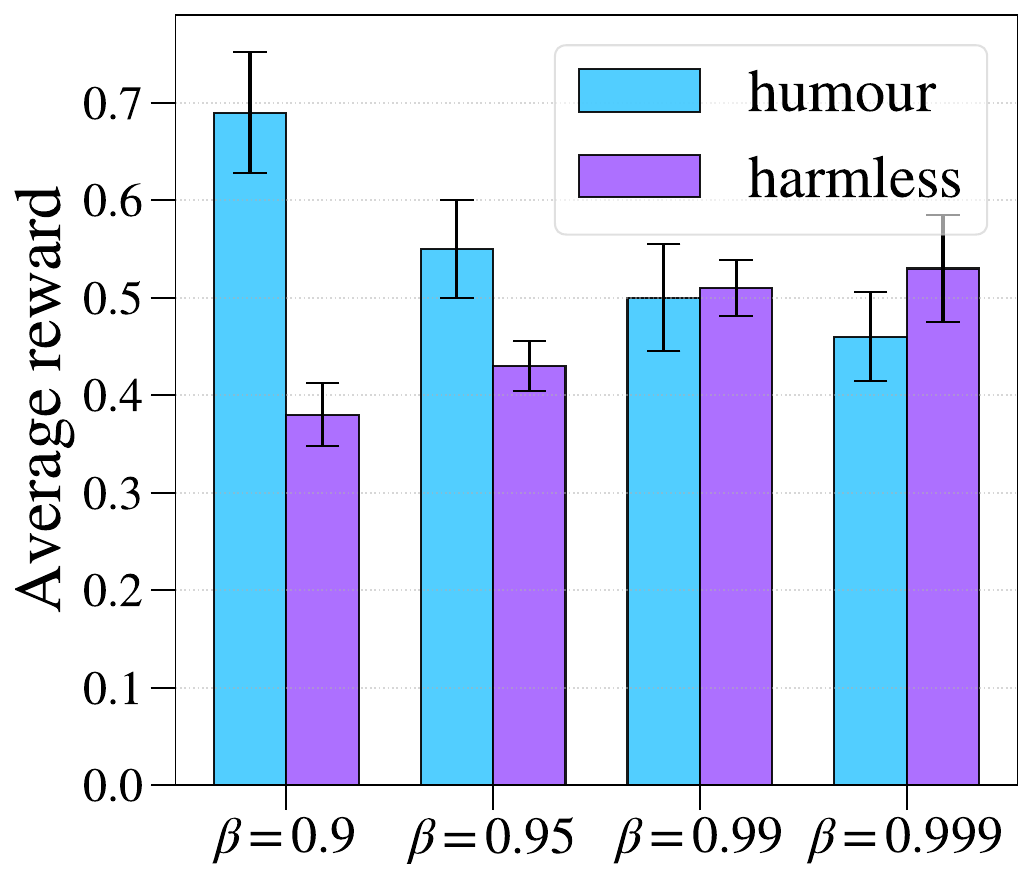}
    }
}
{
    {\includegraphics[scale=0.125]{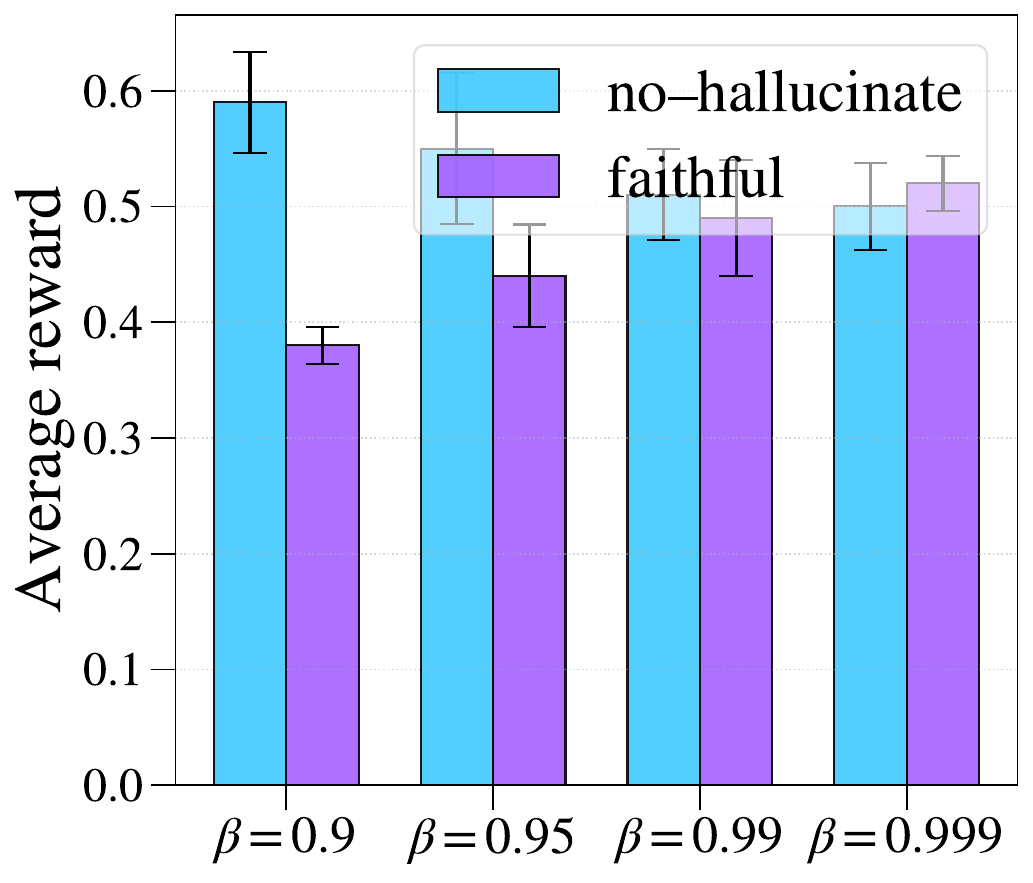}
    }
} 
\end{tcolorbox}
\begin{tcolorbox}[width=0.45\textwidth, nobeforeafter, coltitle = black, fonttitle=\fontfamily{lmss}\selectfont, title={Effect of lagged policy updates by $t_{0}$}, halign title=flush center, colback=backg_blue!5, colframe=skyblue!20, boxrule=2pt, boxsep=2pt, grow to left by=-0.5mm, top=0pt, left=-4pt, right=-4pt, bottom=-1pt]
\centering
{
    {\includegraphics[scale=0.125]{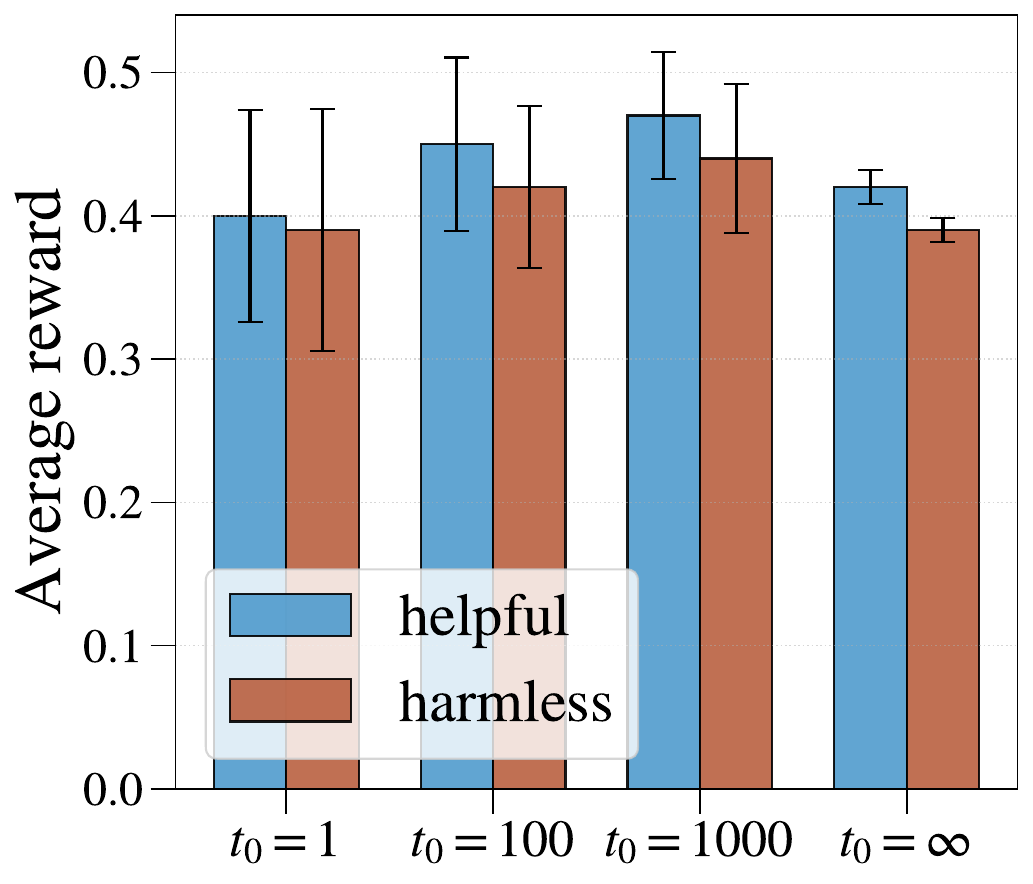}
    }
} \hspace{0.02cm}
{
    {\includegraphics[scale=0.125]{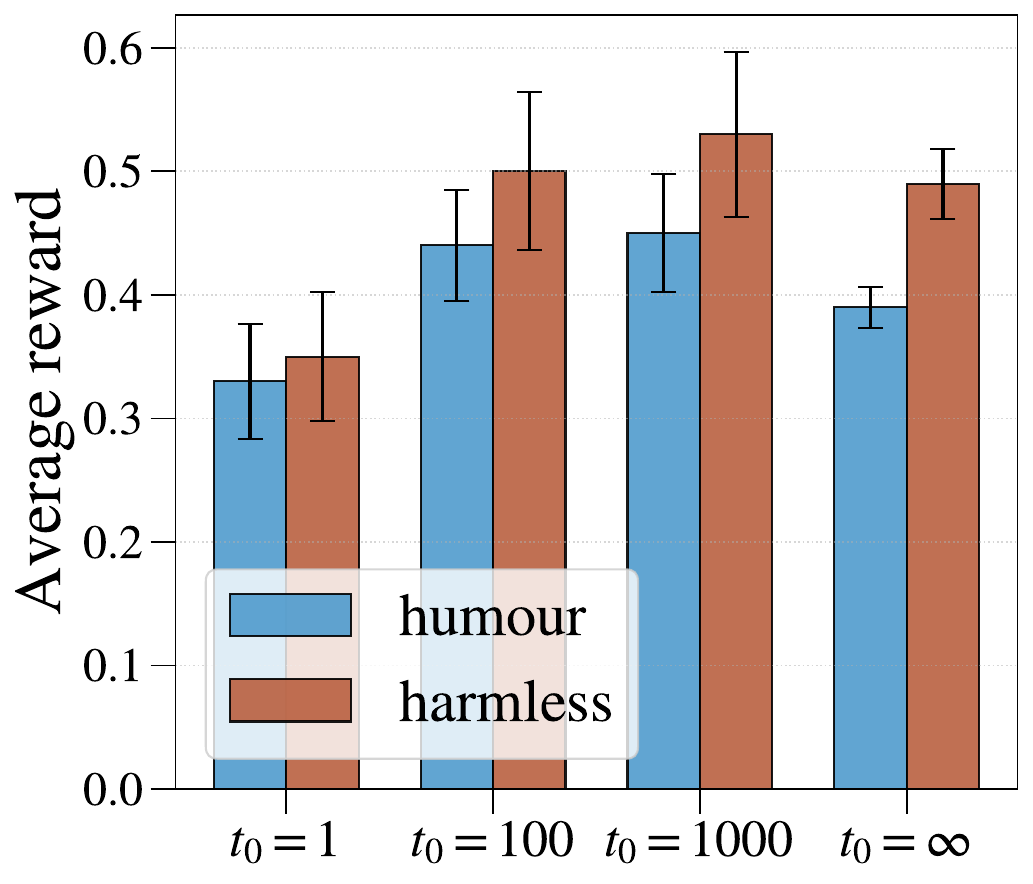}
    }
} \hspace{0.05cm}
{
    {\includegraphics[scale=0.125]{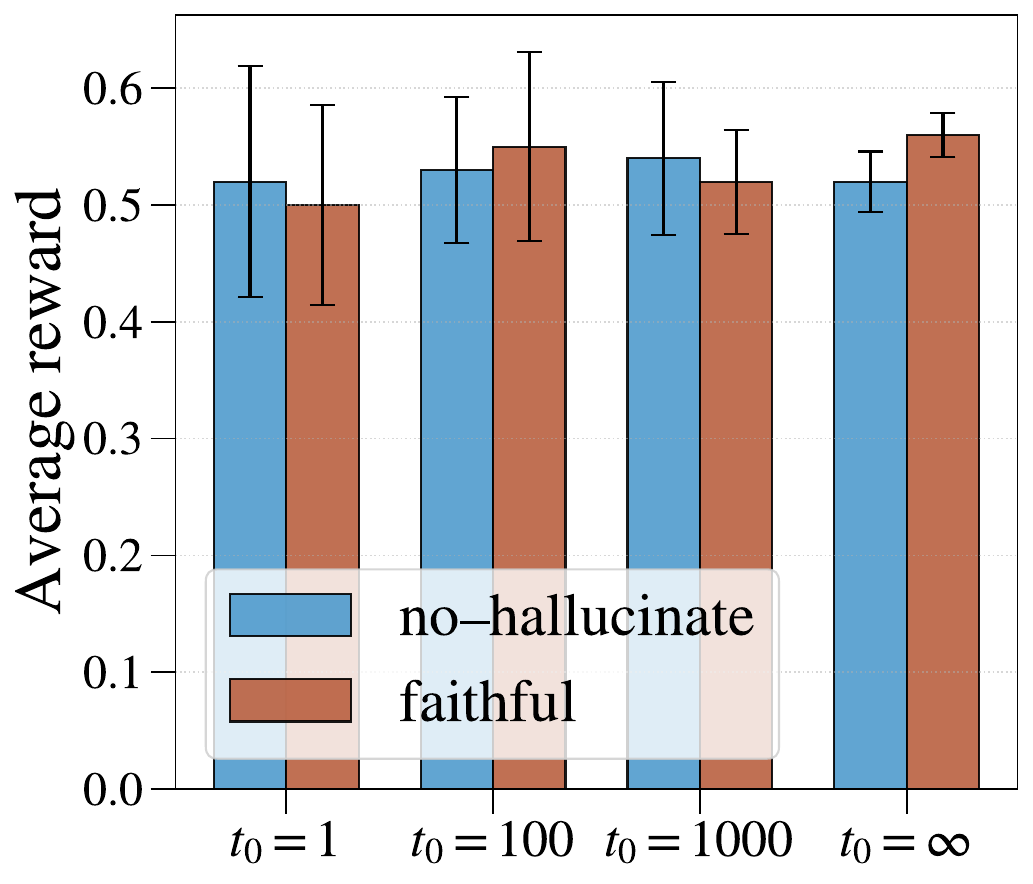}
    }
} \hspace{0.05cm}
\vspace{-0.35cm}
\end{tcolorbox}
\caption{Normalized rewards of Llama-3.1-8B aligned using \MOPO{MOPO} under varying optimization parameters on the Helpful Assistant and Reddit Summary tasks.}
\label{fig:mopo_ablation}
\vspace{-0.5cm}
\end{figure}

\subsection{Secondary Evaluation}
\vspace{-0.25cm}

We now take a deeper dive into \MOPO{MOPO}'s performance and discuss its dependence on various factors.

\textbf{Verbosity-constrained tasks.} Consider tasks in which the lengths of LLM responses need to be contained in the range $[ l_{\text{low}} , l_{\text{high}} ]$ to control verbosity (while maintaining helpfulness); for example, in summarization tasks \citep{makino2019global}. In this case, the natural choice for reward functions is to directly constrain the rewards $r_{1} ( y | \cdot ) = | y | \geq l_{\text{low}}$ and $r_{2} ( y | \cdot ) = - | y | \geq - l_{\text{high}}$. We illustrate the distributions of the generated response lengths (in tokens) by Zephyr-7b-beta aligned using \MOPO{MOPO}, and report the corresponding helpfulness scores in Figure \ref{fig:length}. We observe that the mean response lengths are in the required range in each case, satisfying the imposed constraints while improving helpfulness.


\textbf{Sensitivity analysis.} Updates in the \MOPO{MOPO} algorithm are governed by optimization hyperparameters such as the constraint relaxation factor $\beta$ and the lag interval $t_{0}$. In practice, robustness to these hyperparameters is desirable, as their tuning can significantly affect performance. Figure \ref{fig:mopo_ablation} shows that \MOPO{MOPO} remains stable and effective even under suboptimal choices of $\beta$ and $t_{0}$. See Appendix \ref{appendix:additional_exp} for full results.

\vspace{-0.15cm}
\section{Conclusion}
\label{sec:conclusion}
\vspace{-0.15cm}

In this paper, we introduced \MOPO{MOPO}, an offline, multi-objective constrained optimization algorithm that learns from preference data and maximizes a primary objective while enforcing tunable lower bound constraints on secondary objectives. On synthetic benchmarks \MOPO{MOPO} accurately recovers the true \pareto{Pareto} front. Experiments on real-world datasets show that \MOPO{MOPO} matches or surpasses baselines, and ablation studies prove robustness to hyperparameters. An important future direction is to develop a rigorous theoretical analysis of \MOPO{MOPO}.

\vspace{-0.2cm}
\section*{Impact Statement}
This paper presents work whose goal is to advance the field of Machine
Learning. There are many potential societal consequences of our work, none
which we feel must be specifically highlighted here.


\clearpage
\newpage

\bibliography{references}
\bibliographystyle{utils/icml2026}


\newpage
\appendix
\onecolumn

\section{Appendix}
\label{sec:appendix}

\begin{definition}
\label{def:kl}
For any two policies $P,Q \in \Delta_{\Ycal}^{\Xcal}$ such that \texttt{Supp}$(P)$ = \texttt{Supp}($Q$), their $\KL$ divergence is defined as:
$$
\KL \left(P \, || \, Q \right)= \E{x \sim \nu \\ y \sim P(\cdot | x)} \left[ \log \left(\frac{P(y \given x)}{Q(y \given x)} \right) \right] \; .
$$
\end{definition}

\subsection{A Detailed Motivating Example}
\label{appendix:motivating_example}

In this example, we empirically demonstrate the necessity of \emph{principled} multi-objective optimization methods that account for multi-dimensional preferences. We benchmark various approaches for multi-objective alignment and show that existing state-of-the-art techniques consistently fail to reach the \pareto{Pareto} front. To ensure clarity, we conduct experiments on synthetic datasets where the true \pareto{Pareto} front is known, allowing for precise evaluation of alignment quality.

Now define the input space $\Xcal := \Ucal [0,1]$ and output space $\Ycal := \Ucal [0,1]$. For input $x \in \Xcal$ and outputs $y, y' \in \Ycal$, the Bradley-Terry preference model $\mathsf{BT}(r(\cdot))$ \citep{bradleyterry} w.r.t. to a reward model $r(\cdot)$ provides preference $z$ as,

\small
\begin{equation}
    \Pr(z = y \,  \given  \, x, y, y') = r(x,y) / \left( r(x,y) + r(x,y') \right)
\label{eq:appendix_bt_model}
\end{equation}
\normalsize

\paragraph{Reward models.} For $(x,y) \in \Xcal \times \Ycal$, we consider two pairs of reward functions $r : \Xcal \times \Ycal \to \Rbb$:
\begin{align}
\label{eq:appendix_reward_models}
    r_{1}^{A}(x,y) = e^{x} + \sqrt{y} - y \quad \text{and} \quad r_{2}^{A}(x,y) = - \sin(x) - y^{2}. \nonumber \\ 
    r_{1}^{B}(x,y) = (x+y)^{2} \quad \text{and} \quad r_{2}^{B}(x,y) = \log  (\frac{1+x}{1+y} ). 
\end{align} 

\paragraph{Dataset construction.} For arbitrary $N \in \Nbb$, generate $x_{i}, y_{i}, y_{i}' \sim \Ucal [0,1]$ for $i \in [N]$. Let $z_{i}^{(1)} \sim \mathsf{BT}(r_{1})$ and $z_{i}^{(2)} \sim \mathsf{BT}(r_{2})$ with $z_{i}^{(1)}, z_{i}^{(2)} \in \{y_{i}, y_{i}'\}$. We now construct four datasets: (i) \Done{$\Dcal_{1}$} = $ \{ (x_{i}, y_{i}, y_{i}', z_{i}^{(1)}  ) \}_{i=1}^{N}$ incorporating preferences w.r.t. reward model $r_{1}(\cdot)$ \textit{only}, (ii) \Dtwo{$\Dcal_{2}$} = $ \{ ( x_{i}, y_{i}, y_{i}', z_{i}^{(2)}  )  \}_{i=1}^{N}$ incorporating preferences w.r.t. reward model $r_{2}(\cdot)$ \textit{only}, (iii) \Djoint{$\Dcal_{J}$} = $ \{  (x_{i},y_{i},y_{i}',z  ) : z = z_{i}^{(1)} = z_{i}^{(2)}  \}_{i=1}^{N}$ incorporating preferences only if they are consistent with reward models $r_{1}(\cdot)$ \textit{and} $r_{2}(\cdot)$, and (iv) \Dcombined{$\Dcal_{C}$} = $ \{  (x_{i}, y_{i}, y_{i}', z_{i}^{(C)}  ) : z_{i}^{(C)} \sim \mathsf{BT}  (w r_{1} + (1-w) r_{2}  )  \}_{i=1}^{N}$ for some $w \in [0,1]$, which incorporates preferences based on some convex weighting of both reward models.

Another approach of solving the multi-objective preference problem is a constrained optimization approach \COP{\small COP}, where we can solve for the optimal policy as $\pi_{\text{\COP{\small COP}}}(x) = \argmax_{y} \; r_{1}(x,y) \suchthat r_{2}(x,y) \geq b$ for some $b \in \Rbb$. Now, given the four datasets and the constrained optimization approach, we wish to compare learning the optimal policy as described by a \pareto{Pareto} frontier in the $(r_{1}, r_{2})$ space. We train a neural network policy with DPO \cite{rafailov2023direct} for each of the four datasets, and solve a constrained optimization problem for the \COP{\small COP} approach. See Figure \ref{fig:appendix_dpo_comparison}(a) for empirical results under reward model sets $A$ (left) and $B$ (right).

While it is somewhat trivial to see why learning from \Done{$\Dcal_{1}$} and 
\Dtwo{$\Dcal_{2}$} alone yields suboptimal rewards, the case for the jointly preferred dataset \Djoint{$\Dcal_{J}$} and a convex weighted reward model dataset \Dcombined{$\Dcal_{C}$} is not obvious. The issue with \Djoint{$\Dcal_{J}$} is that it only contains samples where $r_{1}$ and $r_{2}$ agree, effectively discarding all points that exhibit a meaningful trade-off between the two objectives. This results in a sparse and biased preference signal that does not span the entire \pareto{Pareto} front. In contrast, \Dcombined{$\Dcal_{C}$} encodes preferences with respect to a \emph{single} scalarized reward model, which inherently biases learning toward one specific convex combination of the objectives. While more sophisticated approaches have been proposed for learning from multi-dimensional preferences - such as RiC \cite{yang2024rewards}, MODPO \cite{zhou2023beyond}, Rewarded Soups \cite{rame2023rewarded}, and SIPO \cite{li2025self} - they remain fundamentally limited in their expressivity. Ultimately, each method relies on learning with respect to a \emph{single} scalarized reward signal of the form $\tilde{r} = f(r_{1}, r_{2})$, where the function $f$ varies across methods. As a result, these approaches do not recover the full structure of the underlying preference landscape and cannot characterize the Pareto front in the multi-objective setting. 

In contrast, constrained optimization (\COP{\small COP}) over $r_{1}$ and $r_{2}$ yields solutions that lie close to the true \pareto{Pareto} frontier. This highlights the need for optimization methods that explicitly account for trade-offs across objectives, rather than collapsing them into a single reward signal, in order to fully leverage multi-dimensional preference data.  \MOPO{MOPO} follows this principle by directly optimizing within the multi-objective space, and empirically achieves solutions that approach the \pareto{Pareto} front as in Figure \ref{fig:appendix_dpo_comparison}(b).

We further empirically validate the correctness of \MOPO{MOPO} and consider whether it is able to generalize and regularize effectively w.r.t. the reference policy. See Figure \ref{fig:mopo_training_pareto} for comparison of the policy learned through \MOPO{MOPO} under various regularization values. We observe that even with an uninformed $\piref$, \MOPO{MOPO} is able to push toward the \pareto{Pareto} frontier and is limited only by the strength of regularization.

\begin{remark}
    Note that reward models are only used for evaluation, and are not assumptions or requirements to finetune policies using \MOPO{MOPO}. \MOPO{MOPO} learns strictly from preference data, without: (i) assuming the existence of a mapping from preferences to pointwise rewards, and (ii) learning this mapping (reward model) from preference data.
\end{remark}

\begin{figure*}[ht]
\centering
\subfloat[\COP{COP} comparison with traditional approaches]{{
    \includegraphics[height=0.21\textwidth, width=0.23\textwidth]{images/dpo_comparison_set_1.pdf}
}
{
    \includegraphics[height=0.21\textwidth, width=0.23\textwidth]{images/dpo_comparison_set_2.pdf}
} 
}
\subfloat[\MOPO{MOPO} approximates the \pareto{Pareto} front]{
{
    \includegraphics[height=0.21\textwidth, width=0.23\textwidth]{images/mopo_dpo_comparison_set_A.pdf}
}
{
    \includegraphics[height=0.21\textwidth, width=0.23\textwidth]{images/mopo_dpo_comparison_set_B.pdf}
}
}
\caption{Illustration of how a \COP{COP} approach, and hence \MOPO{MOPO}, achieves \pareto{Pareto}-optimal alignment in comparison with DPO on \Done{$\Dcal_{1}$} , \Dtwo{$\Dcal_{2}$} , \Djoint{$\Dcal_{J}$} , and \Dcombined{$\Dcal_{C}$} under two sets of reward models.}
\label{fig:appendix_dpo_comparison}
\end{figure*}

\begin{figure*}[ht]
\noindent
\begin{tcolorbox}[width=0.325\textwidth, nobeforeafter, coltitle = black, fonttitle=\fontfamily{lmss}\selectfont, title={$\tau=0.1$}, halign title=flush center, colback=backg_blue!5, colframe=darkgreen!10, boxrule=2pt, boxsep=2pt, grow to left by=-0.5mm, top=0pt, left=-4pt, right=-4pt, bottom=-1pt]
\centering
{
    \includegraphics[scale=0.25]{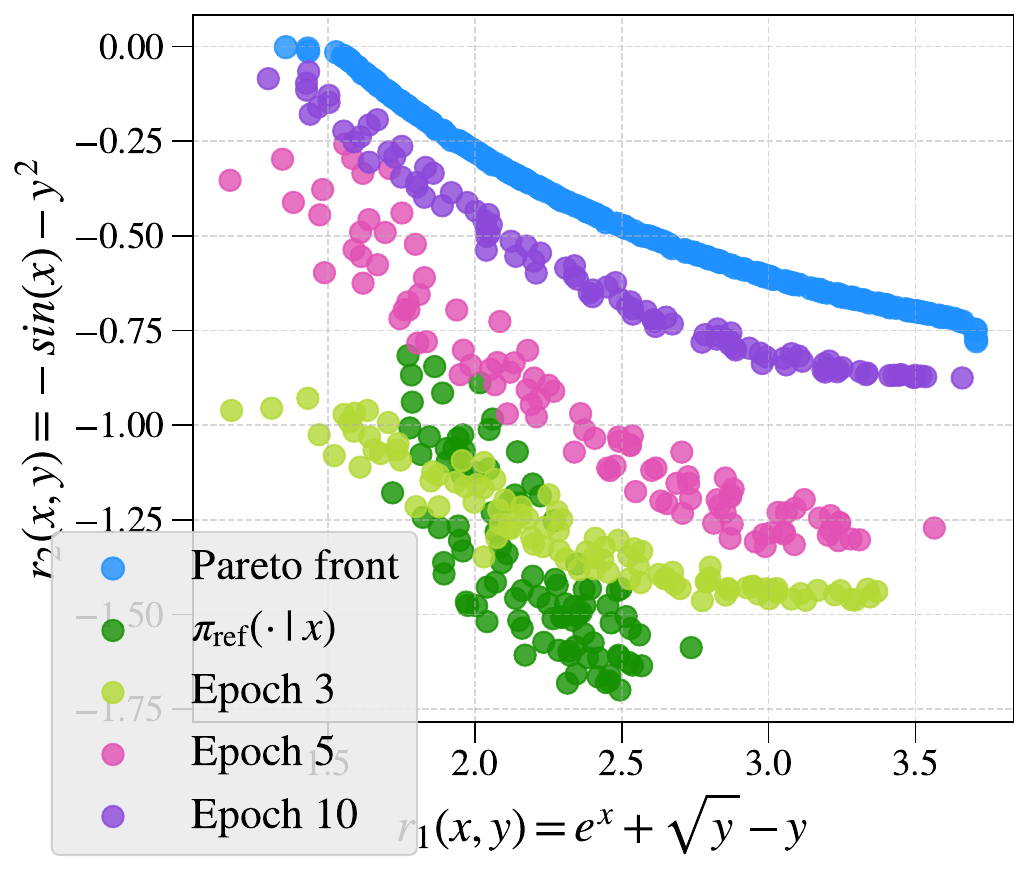}
}
{
    \includegraphics[scale=0.25]{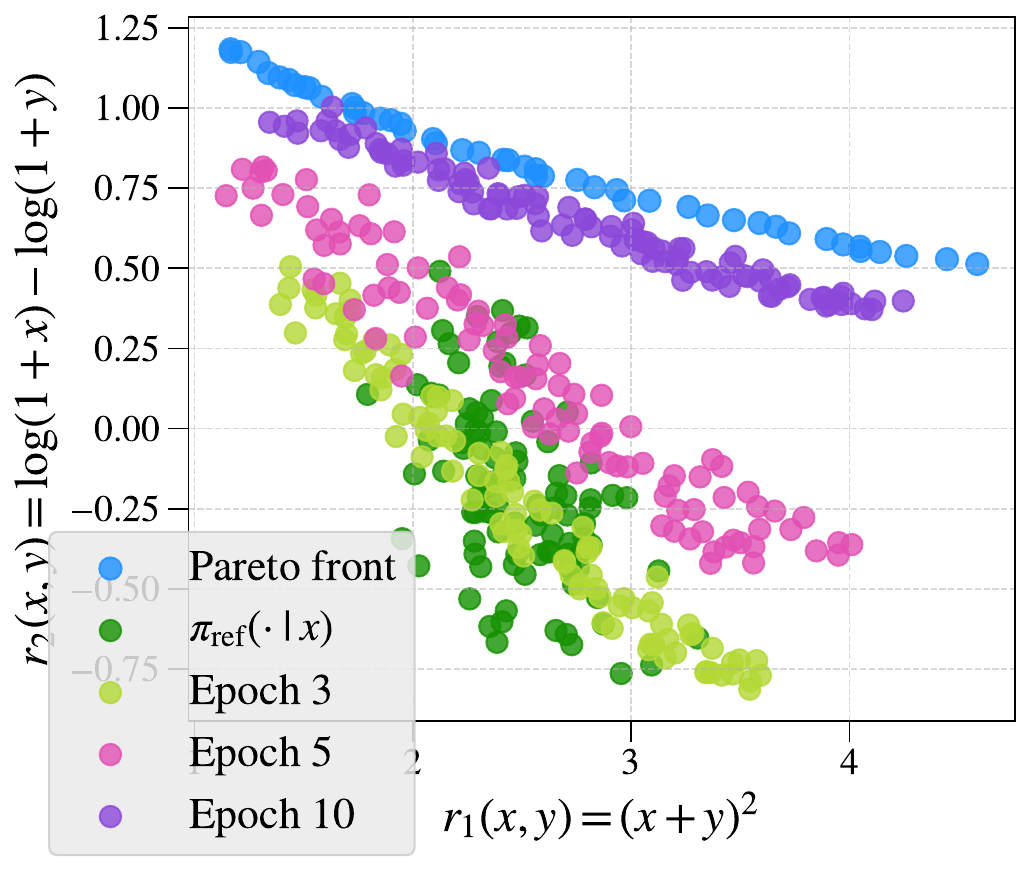}
}
\end{tcolorbox}
\begin{tcolorbox}[width=0.325\textwidth, nobeforeafter, coltitle = black, fonttitle=\fontfamily{lmss}\selectfont, title={$\tau=0.5$}, halign title=flush center, colback=backg_blue!5, colframe=purple!10, boxrule=2pt, boxsep=2pt, grow to left by=-0.5mm, top=0pt, left=-4pt, right=-4pt, bottom=-1pt]
    \centering
{
    \includegraphics[scale=0.25]{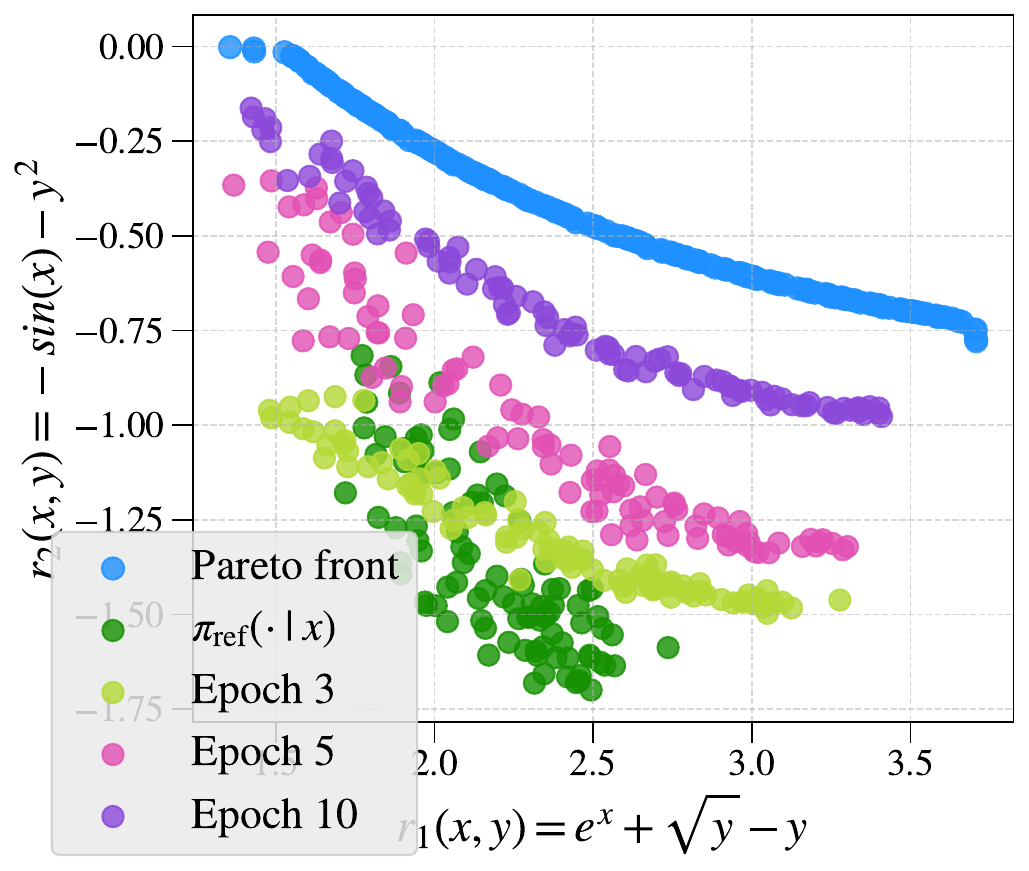}
}
{
    \includegraphics[scale=0.25]{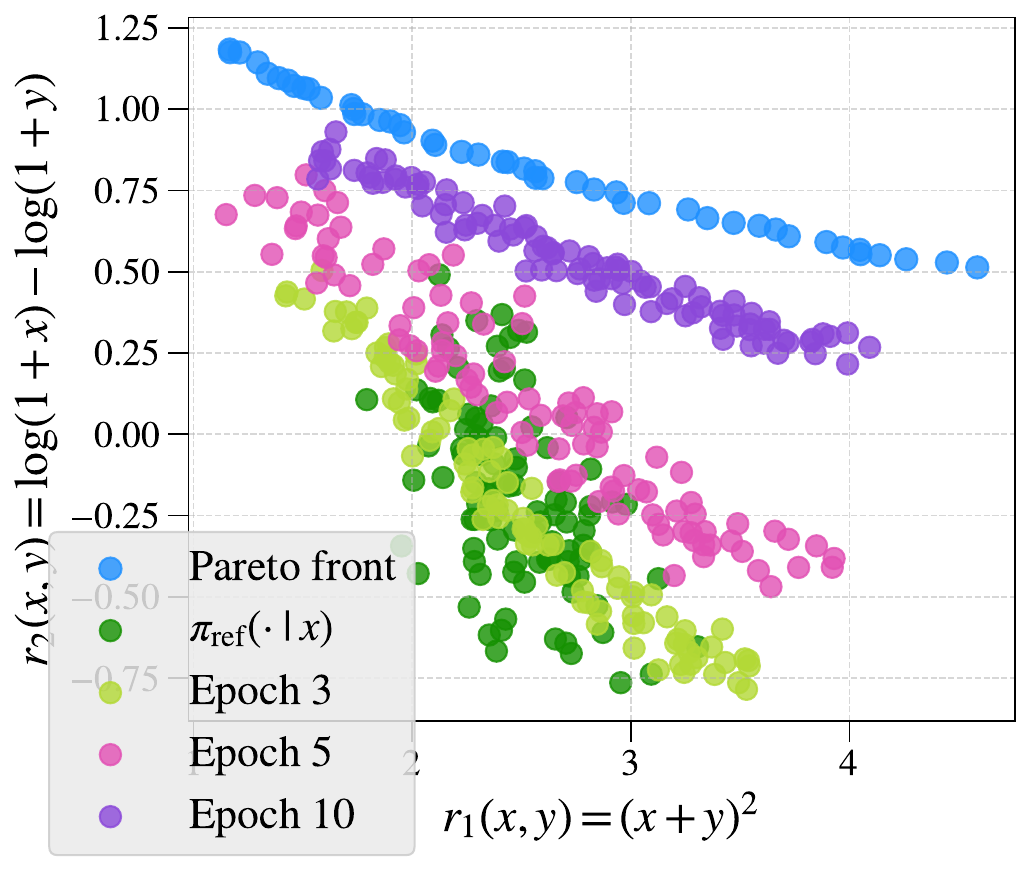}
}
\end{tcolorbox}
\begin{tcolorbox}[width=0.325\textwidth, nobeforeafter, coltitle = black, fonttitle=\fontfamily{lmss}\selectfont, title={$\tau=1$}, halign title=flush center, colback=backg_blue!5, colframe=skyblue!25, boxrule=2pt, boxsep=2pt, grow to left by=-0.5mm, top=0pt, left=-4pt, right=-4pt, bottom=-1pt]
    \centering
{
    \includegraphics[scale=0.25]{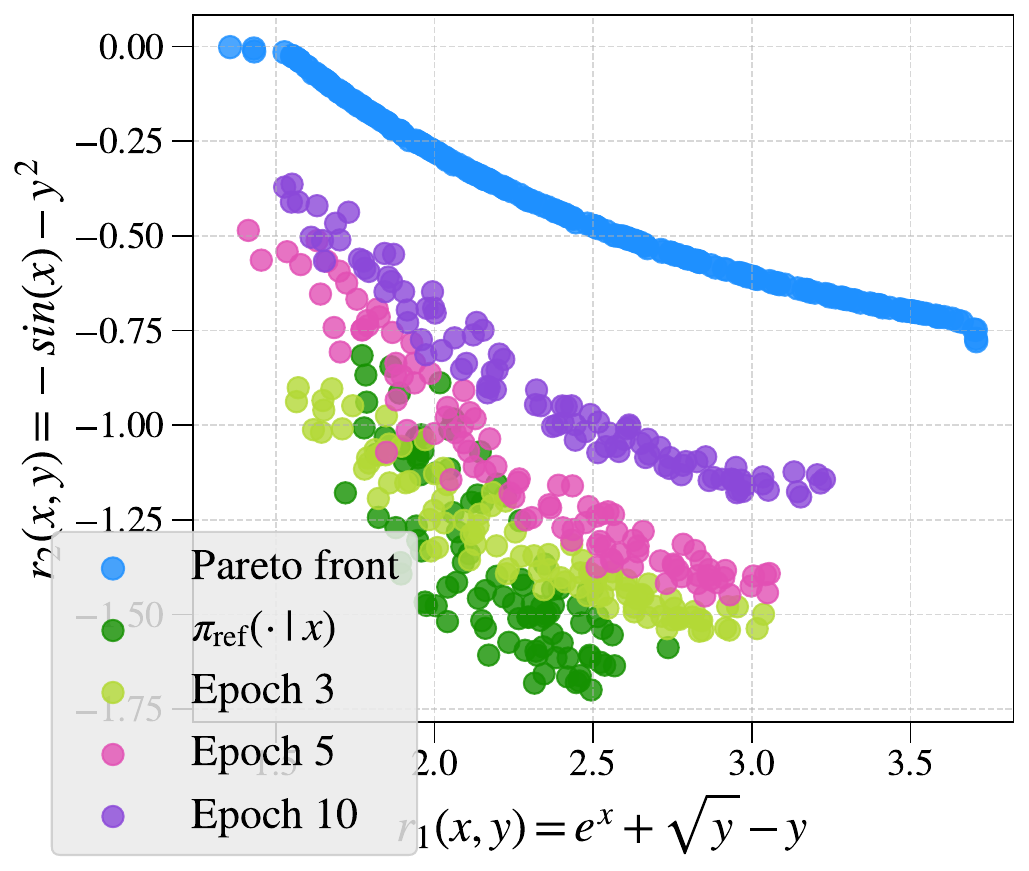}
}
{
    \includegraphics[scale=0.25]{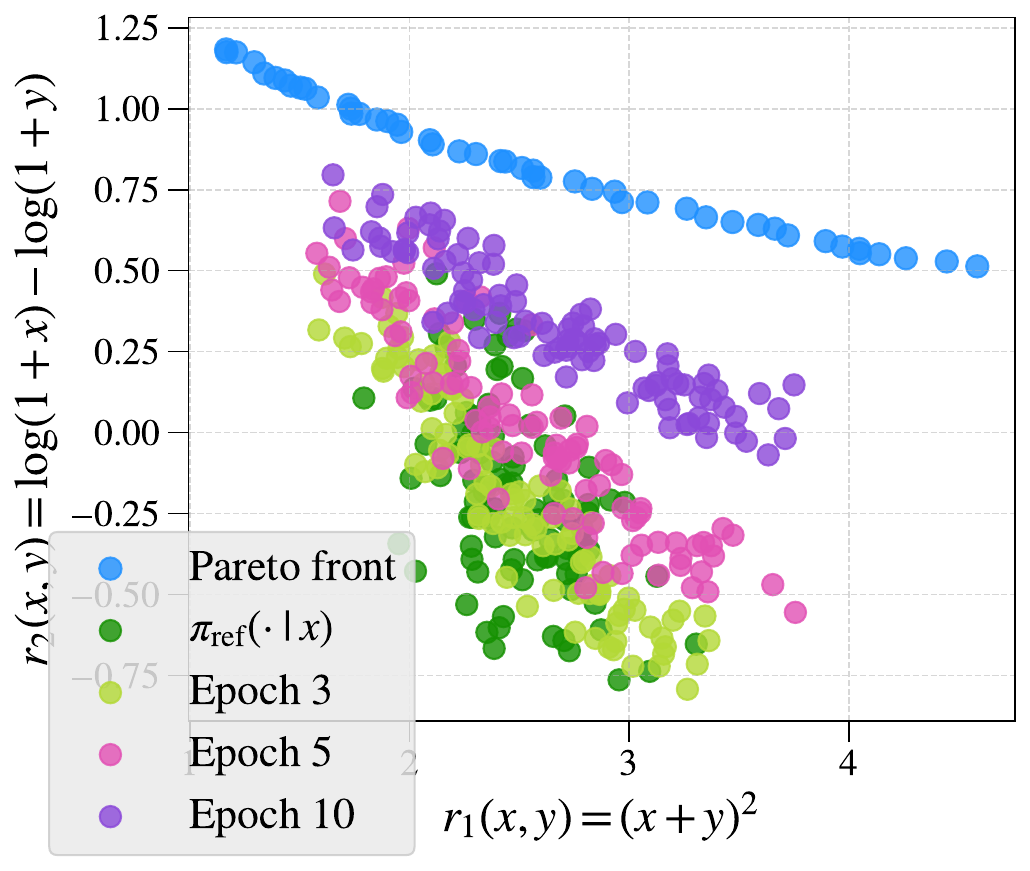}
}
\end{tcolorbox}
\caption{Comparison of the $\KL$-regularized policy learned using \MOPO{MOPO} with the reference policy $\piref$ and the \pareto{Pareto} frontier, visualized in the reward space for two reward model pairs, $A$ (top) and $B$ (bottom).}
\label{fig:mopo_training_pareto}
\end{figure*}

\subsection{Sub-optimality of baselines.}
\label{sec:suboptimalbaselines}

To begin, we list some common divergence metrics that have been used in literature to characterize two probability distributions. Then, we follow with some definitions before listing the main proofs for the sub-optimality of baselines.

\subsubsection{Divergence measures and closed-form policies}
\label{sec:divergence}
We acknowledge that commonly used $f$-divergence measures have been introduced in \cite{fDPO, shi2024decoding} and show them here for completeness:

\begin{center}
\begin{tabular}{lllc}
\toprule
Divergence measure & $f(x)$ & $\nabla f(x)$ & \barrierf \\
\midrule
Reverse KL-divergence & $x\log x$ & $\log x+1$ &\ding{52} \\
Forward KL-divergence & $-\log x$ & $-1/x$ &\ding{52}\\
JSD & $x\log x-(x+1)\log \frac{x+1}{2}$ & $\log\frac{2x}{1+x}$ &\ding{52}\\
$\alpha$-divergence & $\frac{x^{1-\alpha}-(1-\alpha)x-\alpha}{\alpha(1-\alpha)}$ & $(1-x^{-\alpha})/\alpha$&\ding{52}\\
Jeffery divergence & $x\log x-\log x$ & $\log x-\frac{1}{x}+1$&\ding{52}\\
Total Variation & $\vert x-1\vert/2$ & $\operatorname{sgn}(x-1)/2$&\ding{55}\\
Chi-squared & $(x-1)^2$ & $2(x-1)$&\ding{55}\\
\bottomrule
\end{tabular}
\end{center}
Here we show the optimal sampling policies for multi-objective w.r.t. these divergence measures:
\begin{center}
\begin{tabular}{ll}
\toprule
Divergence measure & Optimal policy \\
\midrule
Reverse KL-divergence & $\left(\prod_{i=1}^K\pi_i(y\vert x)^{w_i}\right)\cdot \exp(-Z(x))$ \\
Forward KL-divergence & $\piref(y\vert x)\cdot\left(Z(x)+\sum_{i=1}^K \frac{w_i\piref(y\vert x)}{\pi_i(y\vert x)}\right)^{-1}$ \\
JSD & $\piref(y\vert x)\cdot \left(-1+\exp(Z(x))\prod_{i=1}^K\left(\frac{\piref(y\vert x)}{\pi_i(y\vert x)}+1\right)^{w_i}\right)^{-1}$ \\
$\alpha$-divergence & $\piref(y\vert x)\cdot\left(\alpha Z(x)+\sum_{i=1}^K w_i\left(\frac{\piref(y\vert x)}{\pi_i(y\vert x)}\right)^\alpha\right)^{-\frac{1}{\alpha}}$ \\
\bottomrule
\end{tabular}
\end{center}

\subsubsection{Definitions}
We first begin with some definitions. 

\begin{definition}[$f$-divergence~\cite{fdivergence, agnihotri2023averageconstrained, agnihotri2024online}]
\label{def:fdivergence}
For probability measures $P$ and $Q$, let $\mu$ be a dominating measure of $P$ and $Q$ (\textit{i.e.} $P,Q\ll \mu$), and let $p,q$ be the Radon-Nikodym derivative~\cite{probtextbook} $\frac{dP}{d\mu}$, $\frac{dQ}{d\mu}$ respectively. For simplicity, here we assume $q>0$ almost surely. Then $f$-divergence from $P$ to $Q$ is defined as
\begin{align*}
    I_f(p\Vert q):=\int qf\left(\frac{p}{q}\right)d\mu\;,
\end{align*}
where $f$ is convex on $\mathbb R_+$, satisfying $f(1)=0$. Most useful divergence measures are included in $f$-divergences, and the commonly used ones and corresponding $f$ are introduced in Appendix~\ref{sec:divergence}.
\end{definition}

\begin{definition}[Barrier function~\cite{convexoptimization, agnihotri2026best}]
\label{def:barrierf}
Given conditions satisfied in Definition \ref{def:fdivergence}, if additionally $0\notin\operatorname{dom}(\nabla f)$, then $f$ is a \barrierf. If a \barrierf $f$ is continuously differentiable and strongly convex on $\mathbb R_+$, then $f$ is a strongly convex and smooth barrier function~(abbreviated as \sbarrierf).
\end{definition}

\begin{definition}[Expected calibration error \cite{guo2017calibration,fDPO}]
\label{def:ece}
Denote the ground truth distribution as $\mathbb P$, context as $X$ and response as $Y$. The expected calibration error of a stochastic policy $\pi$ is defined as
\begin{align*}
    \operatorname{ECE}(\pi):=\underset{\subxy}{\mathbb E} \big\vert\mathbb P(Y=y\vert X=x)-\pi(y\vert x)\big\vert\;.
\end{align*}
\end{definition}

\begin{hypothesis}[Reducible reward misspecification \cite{modelsoup,rame2023rewarded,personalizedsoup}]
\label{def:hypo}
    Let $\theta_{k}$ be the parameter of the optimal policy for objective value $J_{k}$, $\forall k \in [K]$, and $\theta^{\star}_{w}$ be the parameter of the optimal policy for the interpolated objective $\sum_{k=1}^{K} w_k \cdot J_{k}\;$, then this hypothesis claims that
    \begin{align*}
        \theta^*_w\in\left\{\sum_{k=1}^{K} \lambda_{k} \cdot \theta_{k},\lambda \in \Delta^{K-1} \right\} \; , \; \forall w \in \Delta^{K-1}\;.
    \end{align*}
\end{hypothesis}

Extending the results of \cite{fDPO} to the  multi-objective setting, we prove the necessity of $f$ being barrier functions to find an optimal policy $\pi^\star$ for multi-objective alignment. We refer the reader to \cite{shi2024decoding} for a complete discussion.

\begin{restatable}[]{theorem}{thmwontwork}
\label{thm:wontwork}
    If $f$ is not a \barrierf, then for $ \forall C \in \mathbb R_+$, $ N\in\mathbb Z_{\ge 4}$, $K \in \mathbb Z_{\ge 2}$, $\mathcal Y=\{y_i\}_{i=1}^N$, any multi-objective decoding or merging algorithm $\mathcal A:\mathcal S^{K+1}\times \Delta^{K-1}\rightarrow  \mathcal S$, there exists a reference policy $\piref$, policies $\{\mathcal \pi_i\}_{i=1}^K$ and $\pi'$, reward functions $\{\mathcal R_i\}_{i=1}^K$, preference weightings $w\in \Delta^{K-1}$ and $\tau\in\mathbb R_+$, s.t. $\pi_i$ is the optimal policy for $\mathcal R_i$ w.r.t. $\tau\cdot I_f(\cdot\Vert \piref)$, $\forall i \in [K]$, but
    \begin{align*}
        \underset{y\sim \pi_{\mathcal A,w}}{\mathbb E}\left[\sum_{i=1}^K w_i\mathcal R_i(y)\right]\le \underset{y\sim \pi'}{\mathbb E}\left[\sum_{i=1}^K w_i\mathcal R_i(y)\right]-C\;, \mbox{and }
    \end{align*}
    \begin{align*}
        \underset{y\sim \pi_{\mathcal A,w}}{\mathbb E}\left[\sum_{i=1}^K w_i\mathcal R_i(y)\right]-\tau I_f(\pi_{\mathcal A,w}\Vert \piref)\le \underset{y\sim \pi'}{\mathbb E}\left[\sum_{i=1}^K w_i\mathcal R_i(y)\right]-\tau I_f(\pi'\Vert \piref)-C\;,
    \end{align*}
    where $\pi_{\mathcal A,w}(y):=\mathcal A\big(\piref,\pi_1,\pi_2,\ldots,\pi_K,w\big)(y)\;$.
\end{restatable}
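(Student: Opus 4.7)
The plan is to exploit the precise way a non-barrier $f$ differs from a barrier one. By Definition~\ref{def:barrierf}, non-barrier means $0\in\operatorname{dom}(\nabla f)$, so $\nabla f(0)$ (or an element of $\partial f(0)$) is finite. Consequently, the KKT conditions for the $\tau I_f(\cdot\Vert\piref)$-regularized maximizer of a reward $R$ reduce, at any zero-support action $y$, to a \emph{finite} inequality $R(y)\le \lambda + \tau \nabla f(0)$, creating a nontrivial interval of reward values that all induce the same optimal policy. I will use this freedom to hide information from any $\mathcal A$.

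First, fix $\piref$ uniform on $\mathcal Y=\{y_1,\dots,y_N\}$, take interior weights $w\in\mathrm{int}(\Delta^{K-1})$, and choose pairwise disjoint nonempty subsets $T_1,\dots,T_K\subset\mathcal Y\setminus\{y_1\}$ (feasible since $N\ge4$ and $K\ge2$). For each $i$, set $R_i(y)=0$ on $T_i$ and $R_i(y)=-M$ for $y\notin T_i$, with $M$ large. A direct KKT computation shows the regularized optimum $\pi_i$ is uniform on $T_i$ with multiplier $\lambda_i=-\tau\nabla f(N/|T_i|)$, and that $R_i(y_1)$ can be freely chosen in $(-\infty,\,\tau(\nabla f(0)-\nabla f(N/|T_i|))]$ without altering $\pi_i$.

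Next, form two reward tuples $R^{\mathrm{hi}}$ and $R^{\mathrm{lo}}$ agreeing everywhere except at $y_1$: in $R^{\mathrm{hi}}$ set $R_i^{\mathrm{hi}}(y_1)$ to the upper threshold above, and in $R^{\mathrm{lo}}$ set $R_i^{\mathrm{lo}}(y_1)=-L$ for $L\gg M$. Both tuples produce the same $\pi_1,\dots,\pi_K$, hence an identical algorithm output $\pi_{\mathcal A,w}$; write $p:=\pi_{\mathcal A,w}(y_1)$. For $M$ large, $y_1$ is the unique maximizer of $\sum_i w_i R_i^{\mathrm{hi}}$, while for $L\gg M$ it becomes the unique minimizer of $\sum_i w_i R_i^{\mathrm{lo}}$. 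Taking $\pi'=\delta_{y_1}$ in the ``hi'' case yields a gap of at least $(1-p)\,M\,\min_i w_i$ minus bounded terms, while taking $\pi'=\delta_{y_j}$ for $y_j\in T_i$ with smallest weight in the ``lo'' case yields a gap of at least $p\,L$ minus $O(M)$ terms. Since $\max(p,1-p)\ge 1/2$, at least one of the two scenarios yields a gap $\ge C$ for any prescribed $C$, on choosing $M$ then $L$ sufficiently large. The regularized statement follows verbatim because $I_f(\pi_{\mathcal A,w}\Vert\piref)$ and $I_f(\pi'\Vert\piref)$ are scenario-independent, so the regularization contribution cancels from the gap.

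The main obstacle is the KKT derivation when $f$ is not strictly convex or not differentiable at $0$: then $\nabla f(0)$ must be interpreted as any $s\in\partial f(0)\cap\mathbb R$ (nonempty by hypothesis), and one must certify the candidate $\pi_i$ as a true optimizer rather than merely KKT-stationary, using convexity of the regularized objective. Once this subdifferential case is dispatched, the core information-theoretic step---that $\mathcal A$ literally cannot distinguish $R^{\mathrm{hi}}$ from $R^{\mathrm{lo}}$---completes the proof.
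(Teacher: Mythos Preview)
Your approach is correct in spirit and genuinely different from the paper's. Both exploit the same key fact---that a non-barrier $f$ leaves a finite interval of reward values at any zero-support action, all inducing the same optimal policy---but the adversarial constructions differ. The paper fixes a single scenario: it sets $w=(1/2,1/2,0,\dots,0)$, $\pi_1=\delta_{y_1}$, $\pi_2=\delta_{y_2}$, $\pi_j=\piref$ for $j\ge3$, then \emph{after} observing $\pi_{\mathcal A,w}$ picks $k=\argmin_{j\ge3}\pi_{\mathcal A,w}(y_j)$ (so $\pi_{\mathcal A,w}(y_k)\le 1/(N-2)$) and assigns $y_k$ the high combined reward. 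Your two-scenario device instead isolates one action $y_1$ outside all supports, makes it either best (``hi'') or worst (``lo'') without changing any $\pi_i$, and case-splits on $p=\pi_{\mathcal A,w}(y_1)$. The paper's construction is a one-shot pigeonhole; yours is an indistinguishability argument that makes the information-theoretic obstruction explicit and arguably cleaner.

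Two small issues to patch. First, your claim that one can always choose $K$ pairwise disjoint nonempty $T_1,\dots,T_K\subset\mathcal Y\setminus\{y_1\}$ is false when $K>N-1$; the theorem allows $K$ and $N$ to vary independently. The fix is the paper's: keep only two ``active'' objectives with disjoint $T_1,T_2$, set $\mathcal R_j\equiv 0$ and $\pi_j=\piref$ for $j\ge3$, and use $w=(1/2,1/2,0,\dots,0)$ (you do not actually need $w$ interior---only $\max_j w_j<1$, which still holds). Second, your sentence ``the regularization contribution cancels from the gap'' is imprecise: $I_f(\pi_{\mathcal A,w}\Vert\piref)$ and $I_f(\pi'\Vert\piref)$ need not be equal (and incidentally the paper's own claim that $\pi_{\mathcal A,w}$ is a one-point distribution is unjustified). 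What is true is that both divergences are bounded by $\max_{x\in[0,N]}|f(x)|<\infty$ (finiteness of $f(0)$ follows from $0\in\operatorname{dom}(\nabla f)$), so the regularized gap differs from the unregularized one by a fixed constant absorbable into the choice of $M,L$.
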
 

\begin{remark}[Motivating example]
    Here we provide a motivating example where $f\equiv 0$: let $K=4$, $\mathcal R_1(y_1)=\mathcal R_2(y_2)=1$, $\mathcal R_1(y_2)=\mathcal R_2(y_1)=-1$, $\mathcal R_1(y_{3+k})=\mathcal R_2(y_{3+k})=0$, $\mathcal R_1(y_{4-k})=\mathcal R_2(y_{4-k})=1/2$, where $k\in\{0,1\}$. Then the optimal policy for $\mathcal R_1$ is $\pi_1(y_i):=\delta_{1i}$, for $\mathcal R_2$ is $\pi_2(y_i):=\delta_{2i}$, and for $\mathcal R_1/2+\mathcal R_2/2$ is $\pi^\star(y_i):=\delta_{4-k,i}$. Thus $\pi_{\mathcal A,w}$ cannot fit $\pi^\star$ both for $k=0,1$.
\end{remark}

\begin{proof}
Since $f$ is not a \barrierf, $0\in \text{dom}(\nabla f)$. Now we can define $p:=\underset{x\in [0,N]}{\max}\nabla f(x)$, $q:=\underset{x\in [0,N]}{\min}\nabla f(x)$, $r:=\underset{x\in [0,N]}{\max}f(x)-\underset{x\in [0,N]}{\min}f(x)$, $s:=\frac{N-2}{N-3}\cdot C$. Let $w=(0.5, 0.5,\underbrace{0,\ldots,0}_{N-2})$, and we pick $k=\underset{j\in\{3,4,\ldots, N\}}{\argmin}\;\pi_{\mathcal A,w}(y_j)$. Let $\piref(y_i)=\frac{1}{N}$, $\pi_1(y_i)=\delta_{1i}$, $\pi_2(y_i)=\delta_{2i}$, $\pi_j(y_i)=\frac{1}{N}$ and $\pi'(y_i)=\delta_{ik}$, $\forall i\in [N],\;j\in \{3,4,\ldots, K\}$. And set $\mathcal R_1(y_i)=\begin{cases}2p+2r+2s&i=1\\4q-2p-2r-2s&i=2\\p+q+r+s&i=k\\2q&\text{o/w}\end{cases}$, $\mathcal R_2(y_i)=\begin{cases}4q-2p-2r-2s&i=1\\2p+2r+2s&i=2\\p+q+r+s&i=k\\2q&\text{o/w}\end{cases}$, and $\mathcal R_j\equiv 0$, $\forall j\in \{3,4,\ldots, K\}$.

Let $\tau=1$, then the optimization objective for $\mathcal R_1$ w.r.t. $I_f$ is $J_1(\pi):=\underset{y\sim \pi}{\mathbb E} \left[\mathcal R_1(y)\right]-I_f(\pi\Vert\piref)$, and the Lagrangian dual is \begin{align*}
    \mathcal L_1(\pi):=\sum_{i=1}^N \left(-\mathcal R_1(y_i)\cdot\pi(y_i)+\frac{1}{N}f\left(N\cdot \pi(y_i)\right)\right)+\lambda \left(\sum_{i=1}^N \pi(y_i)-1\right)-\sum_{i=1}^N \mu_i\pi(y_i)\;.
\end{align*}
As the objective is convex and the constraints are affine, we can directly apply the \emph{Karush-Kuhn-Tucker conditions}~\cite{convexoptimization}:
\begin{align}
    \nabla \mathcal L_1(\pi_1^\star)&=0\;,\label{eq:6}\\
    \sum_{i=1}^N\pi_1^\star(y_i)&=1\;,\notag\\
    \pi_1^\star(y_i)&\ge 0\;,\notag\\
    \mu_i^\star&\ge 0\;,\notag\\
    \mu_i^\star\pi_1^\star(y_i)&=0\label{eq:17}\;.
\end{align}
Equation \eqref{eq:6} implies
\begin{align*}
    -\mathcal R_1(y_i)+\nabla f(N\cdot \pi_1^\star(y_i))+\lambda^\star-\mu_i^\star=0\;.
\end{align*}
If $\pi_1^\star(y_1)>0$, we have 
\begin{align*}
    \lambda^\star&=\mathcal R_1(y_1)-\nabla f(N\cdot \pi_1^\star(y_1))\\
    &\ge p+2r+2s\;,
\end{align*}
and then for $\forall j\ne 1\;$,
\begin{align*}
\mu_j^\star&=-\mathcal R_1(y_j)+\nabla f(N\cdot \pi_1^\star(y_j))+\lambda^\star\\
&\ge -p-q-r-s+q+p+2r+2s\\
&= r+s\\
&>0\;.
\end{align*}
Combining it with Equation \eqref{eq:17} yields $\pi_1^\star(y_j)=0$ for $\forall j\ne 1$, which is exactly $\pi_1$. Note that we have
\begin{align*}
    J(\pi_1)\ge 2p+2r+2s-\underset{x\in [0,N]}{\max}f(x)\;.
\end{align*}
For any $\pi'$ with $\pi'(y_1)=0$, we have
\begin{align*}
    J(\pi')&\le p+q+r+s-\underset{x\in[0,N]}{\min}f(x)\\
    &=p+q+2r+s-\underset{x\in[0,N]}{\max}f(x)\\&<J(\pi_1)\;.
\end{align*}
Thus $\pi_1$ is the optimal policy for $\mathcal R_1$ w.r.t. $I_f(\cdot\Vert \piref)$. Similarly, $\pi_2$ is the optimal policy for $\mathcal R_2$ w.r.t. $I_f(\cdot\vert \piref)$. By convexity of $f$, the minimum of $I_f(\pi\Vert\piref)$ is obtained when $\pi=\piref$, and thus $\pi_j$ is the optimal policy for $\mathcal R_j$ w.r.t. $I_f(\cdot\Vert\piref)$, for $\forall j\in\{3,4,\ldots, K\}$. Therefore, all conditions are well satisfied by this construction. Note that
\begin{align}
    \underset{y\sim \pi'}{\mathbb E}\left[\sum_{i=1}^K w_i\mathcal R_i(y)\right]&=p+q+r+s\;.\label{eq:7}
\end{align}

While by the selection of $k$, we have
\begin{align}
    \underset{y\sim \pi_{\mathcal A,w}}{\mathbb E}\left[\sum_{i=1}^K w_i\mathcal R_i(y)\right]\le \frac{(N-3)\cdot 2q+p+q+r+s}{N-2}\;.\label{eq:8}
\end{align}

Comparing Equation \eqref{eq:7} with Equation \eqref{eq:8}, we have
\begin{align*}
    \underset{y\sim \pi_{\mathcal A,w}}{\mathbb E}\left[\sum_{i=1}^K w_i\mathcal R_i(y)\right]&\le \underset{y\sim \pi'}{\mathbb E}\left[\sum_{i=1}^K w_i\mathcal R_i(y)\right]-\frac{N-3}{N-2}s\\
    &=\underset{y\sim \pi'}{\mathbb E}\left[\sum_{i=1}^K w_i\mathcal R_i(y)\right]-C\;.
\end{align*}

Note that $\piref$ is a uniform distribution and both $\pi_{\mathcal A,w},\pi'$ are one-point distributions, thus $I_f(\pi_{\mathcal A,w}\Vert \piref)=I_f(\pi'\Vert \piref)$. We have
\begin{align*}
    \underset{y\sim \pi_{\mathcal A,w}}{\mathbb E}\left[\sum_{i=1}^K w_i\mathcal R_i(y)\right]-I_f(\pi_{\mathcal A,w}\Vert\piref)\le \underset{y\sim \pi'}{\mathbb E}\left[\sum_{i=1}^K w_i\mathcal R_i(y)\right]-I_f(\pi'\Vert \piref)-C\;.\tag*{\qedhere}
\end{align*}
\end{proof}

\subsubsection{Baselines are not Pareto-optimal}

Given the necessity of $f$ being a barrier-function, we now show how parameter-merging paradigm algorithms (\cite{guo2024controllable,yang2024rewards,personalizedsoup, zhou2023beyond}) fail to achieve \pareto{Pareto}-optimality. The optimality of parameter-merging paradigm primarily relies on reduced reward mis-specification hypothesis (see Hypothesis \ref{def:hypo}). The following theorem demonstrates that this hypothesis does not hold for almost all $f$-divergence regularized policies. 

\begin{restatable}[]{theorem}{thmrewardsoup}
\label{thm:rewardsoup}
For any $f$-divergence satisfying one of the following conditions: 
(i) $f$ is not a \barrierf; (ii) $I_f$ is Reverse KL-divergence; (iii) $f$ is a \sbarrierf, with finite roots of
    \begin{align*}
        2\nabla f\left(\frac{3\sqrt{1-2x}}{2\sqrt{1-2x}+\sqrt{x}}\right)-2\nabla f\left(\frac{3\sqrt{x}}{2\sqrt{1-2x}+\sqrt{x}}\right)-\nabla f(3-6x)+\nabla f(3x)\;,
    \end{align*}
$\exists N,K\in \mathbb N$, $\mathcal Y=\{y_i\}_{i=1}^N$, $\tau\in\mathbb R_+$, a neural network $nn=\operatorname{softmax}(h_\theta(z_0))$ where $z_0\in \mathbb R^{n}$ and $h_\theta:\mathbb R^{n}\rightarrow \mathbb R^{N}$ is a continuous mapping, preference weightings $w\in\Delta^{K-1}$, reference policy $\piref$, and the objectives $J_1,J_2,\ldots,J_K$ representing reward functions $\mathcal R_1,\mathcal R_2,\ldots,\mathcal R_K$ w.r.t. $\tau\cdot I_f(\cdot\Vert\piref)$, s.t. Hypothesis \ref{def:hypo} does not hold.
\end{restatable}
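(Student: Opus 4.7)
The plan is to split the proof by the three sufficient conditions stated in the theorem, since each requires a different flavor of argument, but all proceed by constructing a reward instance and a neural parameterization under which no convex combination of the individual optimal parameters can reproduce the optimal policy for the interpolated objective. Throughout, I will work with $K=2$, small $N$, uniform $\piref$, and a fixed $z_0$; the parameterization $\pi = \operatorname{softmax}(h_\theta(z_0))$ will be chosen separately in each case.

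\textbf{Case (i): $f$ is not a \barrierf.} This is essentially a corollary of Theorem~\ref{thm:wontwork}. For any $\lambda \in \Delta^{K-1}$, the merged parameter $\sum_k \lambda_k \theta_k$ induces a policy $\operatorname{softmax}(h_{\sum_k \lambda_k \theta_k}(z_0))$, which is a legitimate output of a merging algorithm $\mathcal A$ indexed by $\lambda$. Theorem~\ref{thm:wontwork} then shows that, for any prescribed gap $C$, we can exhibit rewards and optimal policies such that \emph{every} output of such an $\mathcal A$ is at least $C$ suboptimal on the regularized objective, while the true $\pi^\star_w$ is optimal; letting $C>0$ be any positive constant rules out all $\lambda$.

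\textbf{Case (ii): reverse KL.} The closed form $\pi^\star_k(y) \propto \piref(y)\exp(\mathcal R_k(y)/\tau)$ and $\pi^\star_w(y) \propto \piref(y)\exp(\sum_k w_k \mathcal R_k(y)/\tau)$ makes the interpolated optimum a linear combination of individual optima in logit space. Consequently, if $h_\theta$ were linear in $\theta$, simply taking $\lambda=w$ would already satisfy the hypothesis, so the counterexample must exploit nonlinearity of $h_\theta$. I would take $h_\theta(z_0) = \theta \odot \theta$ (element-wise square) so that the KKT-optimal parameter vectors satisfy $\theta_{k,i}^2 = \mathcal R_k(y_i)/\tau + c_k$ up to sign. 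Expanding $(\sum_k \lambda_k \theta_{k,i})^2 = \sum_k \lambda_k^2 \theta_{k,i}^2 + 2\sum_{k<l}\lambda_k\lambda_l \theta_{k,i}\theta_{l,i}$ and matching this against $\sum_k w_k \mathcal R_k(y_i)/\tau$ up to an additive constant forces both $\lambda_k^2 = w_k$ (so $\lambda\notin\Delta^{K-1}$ for generic $w$) and the cross terms $\theta_{k,i}\theta_{l,i}$ to be constant in $i$, which I would arrange to fail by choosing rewards whose KKT roots $\pm\sqrt{\mathcal R_k/\tau + c_k}$ give a non-constant pairwise product.

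\textbf{Case (iii): \sbarrierf\ $f$.} Here the optimal policy is only implicit: strong convexity makes $\nabla f$ invertible, and the KKT conditions give $\pi^\star_k(y_i)/\piref(y_i) = (\nabla f)^{-1}(\mathcal R_k(y_i)/\tau + \lambda^\star_k)$ for a Lagrange multiplier $\lambda^\star_k$ enforcing normalization. I would choose $N=3$, $K=2$, $\piref$ uniform, a linear $h_\theta$, and a symmetric one-parameter family of rewards indexed by $x\in(0,1/2)$ chosen so that $\pi^\star_1, \pi^\star_2, \pi^\star_w$ have the symmetric mass profiles $(\sqrt{1-2x},\sqrt{x},\sqrt{1-2x})$-type and $(1-2x,x,1-2x)$-type that appear inside the $\nabla f$ arguments in the statement. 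The single scalar equation obtained by equating a parameter-averaged logit vector with the logit vector of $\pi^\star_w$ (after eliminating Lagrange multipliers via normalization) reduces, after algebra, to exactly the displayed functional equation in $x$. Since its root set is finite by hypothesis, any $x$ outside this set yields an instance where no $\lambda\in\Delta^{K-1}$ realizes $\pi^\star_w$.

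The main technical obstacle will be case (iii): deriving the specific displayed functional equation requires engineering the reward/parameterization pair so that, after writing the three KKT stationarity conditions and the three normalization constraints for $\pi^\star_1,\pi^\star_2,\pi^\star_w$, the mismatch between parameter averaging and policy averaging collapses to exactly that one-variable identity. Cases (i) and (ii) are essentially a reduction to an earlier result and an elementary polynomial argument respectively; case (iii) is the genuinely new content, and the finite-root hypothesis is precisely the non-degeneracy needed to guarantee a generic $x$ witnessing the failure.
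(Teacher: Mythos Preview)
Your overall decomposition matches the paper's. Case (i) is identical: both reduce immediately to Theorem~\ref{thm:wontwork}.

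For case (ii), you take a different route from the paper. The paper uses $K=3$ and a two-layer ReLU network $h_\theta(z_0)=W^{(2)}_\theta\sigma(W^{(1)}_\theta z_0)$ with $\mathcal R_i(y_j)=\delta_{ij}$, so that the merged logits are $(\lambda_1^2,\lambda_2^2,\lambda_3^2)$; taking $w=(0,1/3,2/3)$ forces $\sqrt{t}+\sqrt{t+1/3}+\sqrt{t+2/3}=1$, which has no solution. Your $K=2$ elementwise-square construction achieves the same quadratic phenomenon more directly, but your claim that matching ``forces $\lambda_k^2=w_k$'' is not automatic from the expansion---it needs a reward choice making $(\theta_{1,i}^2)_i$, $(\theta_{2,i}^2)_i$, $(\theta_{1,i}\theta_{2,i})_i$, $(1)_i$ independent enough, which you should spell out. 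The paper's use of $K=3$ rewards with disjoint supports kills the cross terms and avoids this bookkeeping.

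In case (iii) there is a genuine gap. The paper takes $\mathcal R_1(y_i)=\delta_{1i}$, $\mathcal R_2(y_i)=\delta_{2i}$, linear $h_\theta$, and fixes $w=(1/2,1/2)$. The reason the mismatch collapses to \emph{one} scalar equation is a symmetry step you omit: since the construction is invariant under $y_1\leftrightarrow y_2$ and $\pi^\star_{(1/2,1/2)}(y_1)=\pi^\star_{(1/2,1/2)}(y_2)$, equating merged logits at $y_1,y_2$ forces $(\lambda_1-\lambda_2)(\log a-\log b)=0$, hence $\lambda=(1/2,1/2)$. Without this, you have a one-parameter family of equations in $\lambda$, not a single equation in $x$. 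Your stated mass profiles are also off: with $b=x$ and $a=1-2x$, one has $\pi^\star_1=(a,b,b)=(1-2x,x,x)$ and the parameter-averaged policy at $\lambda=(1/2,1/2)$ is proportional to $(\sqrt{ab},\sqrt{ab},b)\propto(\sqrt{1-2x},\sqrt{1-2x},\sqrt{x})$; the four $\nabla f$ arguments in the theorem are exactly $3\times$ these components. Finally, the paper keeps the rewards fixed and varies $\tau$ (which parameterizes $x=b$ via $\nabla f(3a)-\nabla f(3b)=1/\tau$), so ``finite roots'' directly yields a $\tau$ for which the equation fails---your ``one-parameter family of rewards indexed by $x$'' is the same idea but less cleanly tied to the free parameter $\tau$ in the statement.
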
 

\begin{proof}

(i) If $f$ is not a \barrierf, Hypothesis \ref{def:hypo} does not hold immediately from Theorem \ref{thm:wontwork}.

(ii) If $I_f$ is Reverse KL-divergence, we let $N=3$, $K=3$, and $h_\theta(z_0)=W_{\theta}^{(2)}\sigma\left( W_\theta^{(1)}z_0\right)$, where $\sigma$ is $\text{ReLU}(\cdot)$. We set $\mathcal R_i(y_j)=\delta_{ij}$, $\piref(y_i)=1/3$ for $\forall i,j\in [3]$, $z_0=1$ and $\tau=1$. Then the optimal policies are $W_{\theta_1}^{(1)}=e_1$, 
$W_{\theta_1}^{(2)}=\left(\begin{matrix}100\\000\\000\end{matrix}\right)$ for $\mathcal R_1$ w.r.t. $\KL{\cdot}{\piref}$, $W_{\theta_2}^{(1)}=e_2$, 
$W_{\theta_2}^{(2)}=\left(\begin{matrix}000\\010\\000\end{matrix}\right)$ for $\mathcal R_2$ w.r.t. $\KL{\cdot}{\piref}$, and $W_{\theta_3}^{(1)}=e_3$, $W_{\theta_3}^{(2)}=\left(\begin{matrix}000\\000\\001\end{matrix}\right)$ for $\mathcal R_3$ w.r.t. $\KL{\cdot}{\piref}$. Thus we have $h_{\sum_{j=1}^3\lambda_j\theta_j}(z_0)=\left(\lambda_1^2,\lambda_2^2,\lambda_3^2\right)^\top$. Given $w=(0,1/3,2/3)$, the optimal policy $\pi^\star$ should output $\pi^\star(y_1)=\frac{1}{1+\exp(1/3)+\exp(2/3)}$, $\pi^\star(y_2)=\frac{\exp(1/3)}{1+\exp(1/3)+\exp(2/3)}$ and $\pi^\star(y_3)=\frac{\exp(2/3)}{1+\exp(1/3)+\exp(2/3)}$. Note that
\begin{align*}
    \sqrt{t}+\sqrt{t+1/3}+\sqrt{t+2/3}>1\;,\;\forall t\in\mathbb R_+\;,
\end{align*}
thus there is no solution $\lambda\in\Delta^2,t\in\mathbb R_+$ for $\left(\lambda_1^2,\lambda_2^2,\lambda_3^2\right)^\top=\left(t,t+\frac{1}{3},t+\frac{2}{3}\right)^\top$, \textit{i.e.} there is no $\lambda$ s.t. $
    \operatorname{softmax}\left(h_{\sum_{j=1}^3\lambda_j\theta_j}(z_0)\right)=\big(\pi^\star(y_1),\pi^\star(y_2),\pi^\star(y_3)\big)$, \textit{i.e.} Hypothesis \ref{def:hypo} does not hold.

(iii) If $f$ is a \sbarrierf, with finite roots of
    \begin{align*}
        2\nabla f\left(\frac{3\sqrt{1-2x}}{2\sqrt{1-2x}+\sqrt{x}}\right)-2\nabla f\left(\frac{3\sqrt{x}}{2\sqrt{1-2x}+\sqrt{x}}\right)-\nabla f(3-6x)+\nabla f(3x)\;,
    \end{align*} 
we let $N=3$, $K=2$, $h_\theta(z_0)=W_\theta(z_0)$, $z_0=1$, $\mathcal R_1(y_i)=\delta_{1i}$, $\mathcal R_2(y_i)=\delta_{2i}$ and $\piref(y_i)=1/3$, for $\forall i\in [3]$. The optimal policy for $J_1$ is $\pi_{\theta_1}(y_i)=\frac{1}{3}(\nabla f)^{(-1)}\left(\frac{1}{\tau}\delta_{1i}-Z\right)$, and the optimal policy for $J_2$ is $\pi_{\theta_2}(y_i)=\frac{1}{3}(\nabla f)^{(-1)}\left(\frac{1}{\tau}\delta_{2i}-Z\right)$, where $Z$ is the normalization factor. And these policies can be learned by setting $W_{\theta_i}=\big(\log\pi_{\theta_i}(y_1),\log\pi_{\theta_i}(y_2),\log\pi_{\theta_i}(y_3)\big)^\top$. 

We set $a:=\pi_{\theta_1}(y_1)=\frac{1}{3}(\nabla f)^{(-1)}(\frac{1}{\tau}-Z)$, $b:=\pi_{\theta_1}(y_2)=\pi_{\theta_1}(y_3)=\frac{1}{3}(\nabla f)^{(-1)}(-Z)$. Thus we have 
\begin{align}
    \nabla f(3a)-\nabla f(3b)&=\frac{1}{\tau}\label{eq:11}\;,\\
     a+2b&=1\label{eq:12}\;.
\end{align}

The optimal policy for $w_1\cdot J_1+w_2\cdot J_2$ (see \cite{shi2024decoding} for proof) is 
\begin{align}
    \pi^\star_w(y_i)&=\frac{1}{3}(\nabla f)^{(-1)}\left(-Z_w^{\star}+\frac{w_1}{\tau}\delta_{1i}+\frac{w_2}{\tau}\delta_{2i}\right)\;,\label{eq:9}
\end{align}
where $Z_w^\star$ is the normalization factor. By linearly merging the weights of $\pi_{\theta_1}$ and $\pi_{\theta_2}$, we have
\begin{align}
    \pi_{\lambda_1 \theta_1+\lambda_2\theta_2}(y_i)&=\operatorname{softmax}\left(\lambda_1W_{\theta_1}(z_0)+\lambda_2W_{\theta_2}(z_0)\right)(y_i)\notag\\
    &=\frac{1}{Z_\lambda}\left((\nabla f)^{(-1)}\left(\frac{1}{\tau}\delta_{1i}-Z\right)\right)^{\lambda_1}\left((\nabla f)^{(-1)}\left(\frac{1}{\tau}\delta_{2i}-Z\right)\right)^{\lambda_2}\;,\label{eq:10}
\end{align}
where $Z_\lambda$ is the normalization factor. 

With symmetry, Equation \eqref{eq:9}, \eqref{eq:10} and Hypothesis \ref{def:hypo} indicate that $\pi_{\frac{1}{2} \theta_1+\frac{1}{2}\theta_2}=\pi^\star_{(\frac{1}{2},\frac{1}{2})}$, thus
\begin{align*}
    \frac{1}{3}(\nabla f)^{(-1)}\left(-Z_{(0.5,0.5)}^\star+\frac{1}{2\tau}\right)&=\frac{\sqrt{a}}{2\sqrt{a}+\sqrt{b}}\;,\\
    \frac{1}{3}(\nabla f)^{(-1)}\left(-Z_{(0.5,0.5)}^\star\right)&=\frac{\sqrt b}{2\sqrt{a}+\sqrt b}\;,
\end{align*}
and combining them with Equation \eqref{eq:11} yields
\begin{align}
    2\nabla f\left(\frac{3\sqrt{a}}{2\sqrt{a}+\sqrt{b}}\right)-2\nabla f\left(\frac{3\sqrt{b}}{2\sqrt{a}+\sqrt{b}}\right)&=\nabla f(3a)-\nabla f(3b)\;.\label{eq:13}
\end{align}

Given the condition, the solution set $(a,b)$ to Equation \eqref{eq:12},~\eqref{eq:13} is finite, thus there exists $\tau\in \mathbb R_+$ s.t. Equation \eqref{eq:11} does not hold, implying that Hypothesis \ref{def:hypo} does not hold.
\end{proof}

\subsection{Constraint Threshold Initialization}
\label{appendix:constraintspec}

We begin this section by describing an equivalence between the preference-learning based optimization problem and a reward-learning based optimization problem. Note that \MOPO{MOPO} does not assume this equivalence, and directly works with preference data, and this equivalence is established for analysis only. Following this equivalence discussion, we specify a provable method for setting constraint thresholds $\bm{b}$ such that the optimal solution learned by \MOPO{MOPO} is a \pareto{Pareto}-optimal solution.

For all preference instances in the preference dataset, there exists an underlying, unknown reward model based on which preferences are provided. For all contexts-output pairs $(x,y) \in \Xcal \times \Ycal$, let the reward model for the $k^{th}$ objective be $r_{k}(x,y) \in \Rbb$ for $k \in [K]$.

\paragraph{Preference–reward link.} For every objective $k \in [K]$ there exists a \emph{strictly increasing} function $\phi_{k} : \Rbb \to \Rbb$ and a strictly increasing transfer function $\sigma:\Rbb \to (0,1)$ such that for all contexts $x \in \Xcal$ and actions $y,y' \in \Ycal$ we have, 
\begin{equation}
    p_{k}(y \succ y' \mid x) \;=\; \sigma \bigl(\phi_{k}\bigl(r_{k}(x,y)-r_{k}(x,y')\bigr)\bigr).
\label{as:link}
\end{equation}

Now, with respect to the $k^{th}$ objective, let the expected reward $R_{k}(\pi)$ and preference-based objective value $F_{k}(\pi)$ for a policy be given by,

\begin{align*}
 R_{k}(\pi) &:= \mathbb{E}_{x\sim\nu,\;y\sim\pi(\cdot\mid x)}[r_{k}(x,y)] \;\text{for}\,
    \;k \in [K] , \\
 F_{k}(\pi) &:= \mathbb{E}_{x\sim\nu,\;y\sim\pi(\cdot\mid x),\;y'\sim\mu(\cdot\mid x)}
         \bigl[p_{k}(y\succ y' \mid x)\bigr], \quad k \in [K-1], \\
 F_{K}(\pi) &:= \mathbb{E}_{x\sim\nu,\;y\sim\pi(\cdot\mid x),\;y'\sim\mu(\cdot\mid x)}
         \bigl[p(y\succ y' \mid x)\bigr]
        -\tau\, \KL \bigl(\pi\;\|\;\pi^{\mathrm{ref}}\bigr).
\end{align*}

\begin{lemma}[Order preservation]
\label{lem:orderpreservation}
Under Equation \eqref{as:link} and some $u \in \Rbb$, the mapping $ H_{k}(u) :=    \mathbb{E}_{z\sim\mathrm{Unif}[-u,u]}\bigl[\sigma \circ \phi_{k}(z)\bigr]$ is strictly increasing. Moreover, for every policy $\pi$
$$
    F_{k}(\pi)
    \;=\;
    H_{k} \bigl(R_{k}(\pi)-R_{k}(\mu)\bigr), \qquad k \in [K-1],
$$
so that for any $\pi,\pi'$, we have
$
    R_{k}(\pi)\ge R_{k}(\pi')
    \Longleftrightarrow
    F_{k}(\pi)\ge F_{k}(\pi').
$
\end{lemma}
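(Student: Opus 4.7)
The plan is to prove the lemma in two stages: first establish the functional identity $F_k(\pi) = H_k(R_k(\pi) - R_k(\mu))$, then establish the strict monotonicity of $H_k$; the claimed equivalence of orderings then follows immediately by composing a monotone scalar function with a scalar comparison.

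For the identity, I would substitute the preference--reward link \eqref{as:link} directly into the definition of $F_k(\pi)$ to obtain
\begin{equation*}
F_k(\pi) = \mathbb{E}_{x \sim \nu,\, y \sim \pi(\cdot \mid x),\, y' \sim \mu(\cdot \mid x)} \bigl[\sigma \circ \phi_k\bigl(r_k(x, y) - r_k(x, y')\bigr)\bigr].
\end{equation*}
Writing $\Delta_k := r_k(x, y) - r_k(x, y')$ and noting that by linearity of expectation $\mathbb{E}[\Delta_k] = R_k(\pi) - R_k(\mu) =: u$, the representation amounts to collapsing the full joint expectation into a function of the scalar $u$ alone, via the uniform-kernel parametrization built into $H_k$. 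For the strict monotonicity of $H_k$, I set $g := \sigma \circ \phi_k$, which is strictly increasing as the composition of two strictly increasing functions, and differentiate $H_k(u) = \frac{1}{2u}\int_{-u}^{u} g(z)\, dz$ by the Leibniz rule:
\begin{equation*}
H_k'(u) = \frac{1}{2u}\bigl[g(u) + g(-u) - 2 H_k(u)\bigr].
\end{equation*}
Strict positivity reduces to showing that the endpoint-sum $g(u) + g(-u)$ strictly exceeds twice the interval average of $g$ on $[-u, u]$; I would argue this via the strict monotonicity of $g$ together with its bounded range (since $\sigma$ maps into $(0, 1)$), which rules out the degenerate equality case. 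The claimed equivalence then follows at once: setting $u = R_k(\pi) - R_k(\mu)$ and $u' = R_k(\pi') - R_k(\mu)$, the equivalence $R_k(\pi) \ge R_k(\pi') \iff u \ge u' \iff H_k(u) \ge H_k(u') \iff F_k(\pi) \ge F_k(\pi')$ is immediate.

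The principal obstacle I anticipate lies in the representation step: the expectation of $g(\Delta_k)$ over the joint distribution of $(x, y, y')$ depends \emph{a priori} on the full distribution of $\Delta_k$, not merely on its mean $u$, so collapsing it to a function of $u$ alone is not automatic. Making the identity rigorous will require either an additional structural assumption on the conditional distribution of $\Delta_k$ (e.g.\ that it is uniform on a symmetric interval whose half-width is a canonical function of $u$), or a reinterpretation of $H_k$ as an appropriate summary statistic rather than a literal expectation under $\mathrm{Unif}[-u, u]$. Once this structural step is pinned down, the monotonicity calculation and the final equivalence are routine.
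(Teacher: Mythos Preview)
Your monotonicity argument for the literal $H_k(u) = \tfrac{1}{2u}\int_{-u}^{u} g(z)\,dz$ does not go through. The claim that $g(u) + g(-u)$ strictly exceeds $2H_k(u)$ for strictly increasing bounded $g$ is false: take $g$ to be the logistic sigmoid (or any $g$ with $g(z) + g(-z) \equiv c$); then $g(u) + g(-u) = 1$ while $H_k(u) = \tfrac{1}{2}$ identically, so $H_k'(u) = 0$. Under the stated scale-family definition $\mathrm{Unif}[-u,u]$, the map $H_k$ is in fact constant whenever $g - \tfrac{1}{2}$ is odd, which is exactly the typical situation for sigmoid-type links---so the monotonicity claim under this reading is not merely hard to prove but false, and no amount of boundedness of $\sigma$ rescues it.

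The paper resolves this by tacitly working with a \emph{location} family rather than a scale family: its proof takes $z := r_k(x,y) - r_k(x,y') \sim \mathrm{Unif}[U - \delta,\, U + \delta]$ with $U = R_k(\pi) - R_k(\mu)$ and $\delta$ fixed independently of $U$, i.e.\ effectively $H_k(u) = \mathbb{E}_{w \sim \mathrm{Unif}[-\delta,\delta]}\bigl[g(u + w)\bigr]$. With this reading, strict monotonicity of $H_k$ is immediate from strict monotonicity of $g$ (pointwise $g(u_1 + w) < g(u_2 + w)$ for $u_1 < u_2$), and your Leibniz computation is not needed. The remark the paper places immediately after the lemma makes this location-shift interpretation explicit, replacing $\mathrm{Unif}[-u,u]$ by any symmetric density $\rho_u(z) = \rho(z-u)$. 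Your diagnosis of the representation step is exactly right and matches the structural assumption the paper imposes (that the reward gap $\Delta_k$ is symmetrically distributed about its mean with a shape independent of $u$); once you adopt the location-family reading of $H_k$, both the identity $F_k(\pi) = H_k(U)$ and the monotonicity become one-line observations.
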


\begin{proof}
Fix $k$.  By Equation \eqref{as:link}, $\sigma \circ \phi_{k}$ is strictly increasing, hence so is its odd
extension $z\mapsto\sigma \circ \phi_{k}(z)$. For $U:=R_{k}(\pi)-R_{k}(\mu)$ let
$z := r_{k}(x,y)-r_{k}(x,y')$. Because $(y,y') \sim (\pi,\mu)$ are independent, $z$ is symmetrically distributed around $U$ and $z \sim \mathrm{Unif}[U-\delta,U+\delta]$ for some $\delta>0$ that does not depend on $U$.  Taking expectation yields $F_{k}(\pi)=H_{k}(U)$, and strict monotonicity of $H_{k}$ follows from strict monotonicity of $\sigma \circ \phi_{k}$.
\end{proof}

Note there that we introduced $H_{k}(u) = \mathbb{E}_{z\sim\mathrm{Unif}[-u,u]}    \bigl[\sigma \circ \phi_{k}(z)\bigr]$. The uniform law is chosen purely for notational brevity;
the proof requires only that the base distribution be symmetric and shifted by the reward gap
$u = R_{k}(\pi) - R_{k}(\mu)$. Consequently, one may replace $\mathrm{Unif}[-u,u]$ by any symmetric density $\rho_{u}(z) = \rho(z-u)$, and define $H_{k}(u) =
\mathbb{E}_{z\sim\rho_{u}} \bigl[\sigma \circ \phi_{k}(z)\bigr]$. Strict monotonicity of $\sigma \circ \phi_{k}$ then guarantees that the Lemma \ref{lem:orderpreservation} holds. Now, let $\bm{b}\in[0,1]^{K-1}$ be the probability thresholds in \COP{COP}, let $\mathbf{F}_{1:K-1}(\pi) = \left( F_{1}(\pi), \dots, F_{K-1}(\pi) \right)$ and $\mathbf{R}_{1:K-1}(\pi) = \left( R_{1}(\pi), \dots, R_{K-1}(\pi) \right)$, and define component-wise

$$
c_{k} := H_{k}^{-1}(b_{k}) + R_{k}(\mu), \quad k\in[K-1], \quad \bm{c}:=(c_{1},\dots,c_{K-1}).
$$ 

By Lemma \ref{lem:orderpreservation}$, F_{k}(\pi)\ge b_{k}\;\Leftrightarrow\;R_{k}(\pi)\ge c_{k}.$ Hence the preference‐space constrained problem
\[
    \max_{\pi}\,F_{K}(\pi) \suchthat \mathbf{F}_{1:K-1}(\pi) \geq \bm{b}
\tag{\COP{COP}}
\]
is \emph{equivalent} to the reward‐space problem
\[
    \max_{\pi}\, \bigl\{R_{K}(\pi)-\tau\,\mathrm{KL}(\pi\;\|\;\pi^{\mathrm{ref}})\bigr\} \suchthat \mathbf{R}_{1:K-1}(\pi) \geq \bm{c}.
\tag{\COP{COP-R}}
\]

Given this equivalence, we now describe the procedure of setting appropriate constraint thresholds $\bm{c}$ for the \COP{COP-R} problem, following which constraint thresholds $\bm{b}$ for the original \COP{COP} can be obtained element-wise via $c_{k} := H_{k}^{-1}(b_{k}) + R_{k}(\mu)$. This setting of constraint thresholds ensures that the optimal solution of the \COP{COP} problem (Problem \eqref{eq:first-obj}) is also a \pareto{Pareto}-optimal solution. We begin with a definition, following which we state the main result for $\bm{c}_{k}$, which holds for all $k \in [K-1]$.

\begin{definition}[Insertion Index]
\label{def:insertion-index}
Let $\mathsf{P}_{k} \;=\; \bigl(\mathsf{P}_{k}(0),\,\mathsf{P}_{k}(1),\,\dots,\,\mathsf{P}_{k}(M-1)\bigr)$ be an ascending (sorted) list of the $k$-th objective values from the \pareto{Pareto} front consisting of $M$ points. For any new value $\alpha \in \Rbb$, the \emph{insertion index} $j$ is the smallest integer $0 \leq j < M$ satisfying $\mathsf{P}_{k}(j) \;\ge\; \alpha \quad (\text{if such a } j \text{ exists})$, and set $j = M$ if no such index exists.
\end{definition}

\begin{restatable}{proposition}{paretooptimal}
\label{prop:pareto_optimal}
For the initial point $\pi_{0}$ of Problem \eqref{eq:first-obj}, let the insertion index of $\alpha_{k} := R_{k}(\pi_{0})$ in $\mathsf{P}_{k}$ be $j_{k}$. If $\bm{c}_k \geq \mathsf{P}_k(\max(0,j_{k}-1)) \forAll k \in [K-1]$, then the optimal solution of Problem \COP{COP-R}, if it exists, is a \pareto{Pareto}-optimal solution.
\end{restatable}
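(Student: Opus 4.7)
The plan is to work in reward space via Lemma~\ref{lem:orderpreservation}, which turns (\COP{COP}) into the equivalent reward-space problem (\COP{COP-R}); by the order-preservation that lemma provides, Pareto-optimality of $\pi^{\star}$ in preference space is equivalent to Pareto-optimality of its reward profile, so it suffices to argue the latter. I would then proceed by contradiction: assume $\pi^{\star}$ solves (\COP{COP-R}) but is not Pareto-optimal, so there exists a dominating policy $\pi'$ with $R_{k}(\pi') \geq R_{k}(\pi^{\star})$ for every $k \in [K]$ and strict inequality at some coordinate $k^{\dagger}$. Walking along the dominance partial order, I may assume $\pi'$ itself lies on the Pareto front, so each $R_{k}(\pi')$ appears as an entry of the sorted list $\mathsf{P}_{k}$.

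The role of the insertion index condition is to guarantee such a $\pi'$ is admissible in (\COP{COP-R}). For every $k \in [K-1]$, feasibility of $\pi'$ follows from the chain $R_{k}(\pi') \geq R_{k}(\pi^{\star}) \geq c_{k}$; but I would additionally invoke $c_{k} \geq \mathsf{P}_{k}(\max(0, j_{k}-1))$ to rule out the degenerate scenario in which $c_{k}$ is placed above the entire Pareto front in some coordinate, leaving no Pareto-optimal policy feasible and making the dominance step vacuous. By anchoring each $c_{k}$ to the largest Pareto value lying just below $\alpha_{k} = R_{k}(\pi_{0})$, the condition ensures that at least one Pareto-optimal candidate dominating $\pi^{\star}$ remains feasible, and so the contradiction argument has a target to work with.

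Finally, I would exploit optimality of $\pi^{\star}$ against $\pi'$ through the KKT conditions (reminiscent of Proposition~\ref{prop:main_dual}): $\pi^{\star}$ maximizes the Lagrangian $R_{K}(\pi) + \sum_{k \in [K-1]} \lambda_{k}^{\star} R_{k}(\pi) - \tau \, \KL(\pi \,||\, \piref)$ for some dual vector $\lambda^{\star} \geq \mathbf{0}$ obeying complementary slackness $\lambda_{k}^{\star}(R_{k}(\pi^{\star}) - c_{k}) = 0$. When $k^{\dagger} = K$, or $k^{\dagger} \in [K-1]$ with $\lambda_{k^{\dagger}}^{\star} > 0$, the dominating $\pi'$ strictly improves the linear part of the Lagrangian; replacing $\pi'$ by its $\piref$-information projection onto the constant-reward level set yields a feasible policy with the same reward profile but minimal KL, which strictly beats $\pi^{\star}$ on the Lagrangian and contradicts optimality. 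The remaining sub-case, in which strict dominance lives only at a slack-constraint coordinate, is squeezed by strict concavity of the KL-regularized objective, forcing $\pi' = \pi^{\star}$ and contradicting strictness of the dominance. I expect the main obstacle to be precisely this reconciliation between reward-only Pareto dominance and a KL-regularized objective: the clean contradiction requires the information-projection (or equivalent Gibbs-structure) argument, and writing that down rigorously is where the proof does its real work.
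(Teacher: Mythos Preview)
Your KKT/Lagrangian route has a genuine gap. If some $\pi'$ Pareto-dominates the optimum $\pi^{\star}$ in $(R_1,\dots,R_K)$, then $\pi'$ is indeed feasible for \COP{COP-R}, but nothing controls $\KL(\pi'\,\|\,\piref)$: a reward-dominating policy can have arbitrarily larger KL, so it need not beat $\pi^{\star}$ in the objective $R_K-\tau\,\KL$ or in the Lagrangian, and no contradiction arises. The ``information projection onto the constant-reward level set'' does not rescue this: by the Gibbs form, $\pi^{\star}$ is already the unique global maximizer of the Lagrangian, so any policy sharing $\pi'$'s reward profile---projected or not---still has Lagrangian value at most that of $\pi^{\star}$, which is consistent rather than contradictory. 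The slack-constraint sub-case is no better; strict concavity only gives uniqueness of the Lagrangian maximizer, not that a reward-dominating $\pi'$ must coincide with $\pi^{\star}$. Tellingly, your contradiction never uses $\pi_0$ or the hypothesis on $\bm{c}$ in an essential way, so if it worked it would show that \emph{every} optimal solution of \COP{COP-R} is Pareto-optimal regardless of thresholds---which is false (take $\bm{c}$ so small that all constraints are slack; the unconstrained KL-regularized maximizer of $R_K$ can certainly be dominated in the remaining coordinates).

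You have also misread the insertion-index condition. It is a \emph{lower} bound on $c_k$, so it cannot ``rule out $c_k$ being above the Pareto front''; and in the paper it is the heart of the argument, not a degeneracy guard. The paper's proof is purely combinatorial and never touches KKT or the KL term. From a dominating Pareto-optimal $\hat{\pi}$ it first extracts a coordinate $k\in[K-1]$ with $R_k(\pi_0)\ge R_k(\hat{\pi})$, using $R_K(\hat{\pi})\ge R_K(\pi_0)$ together with incomparability of $\hat{\pi}$ and $\pi_0$. It then splits on whether $R_k(\hat{\pi})$ exceeds $c_k$. If $R_k(\hat{\pi})\le c_k$, this contradicts $\hat{\pi}$ dominating the feasible optimum. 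If $R_k(\hat{\pi})>c_k\ge\mathsf{P}_k(\max(0,j_k-1))$, then $R_k(\hat{\pi})$---an entry of the sorted list $\mathsf{P}_k$---lies strictly above $\mathsf{P}_k(\max(0,j_k-1))$ yet weakly below $R_k(\pi_0)$, contradicting the definition of the insertion index $j_k$. The initial point $\pi_0$ is thus central, not incidental.
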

\begin{proof}
\label{proof:pareto_optimal}

We prove by contradiction. First, define a solution element by the tuple $(\pi, \bm{\mathsf{P}}^\pi)$, which refers to a policy $\pi$ along with its corresponding reward vector $\bm{\mathsf{P}}^{\pi} = \left(R_{1}(\pi), \dots, R_{K}(\pi) \right)$. Now suppose that the optimal solution $P^{'}=(\pi^{\prime},\bm{\mathsf{P}}^{\pi^{\prime}})$ of Problem \COP{COP-R} is not a \pareto{Pareto}-optimal solution. By the definition of \pareto{Pareto}-optimal solution, there exists a solution $\hat{P}=(\hat{\pi}, \bm{\mathsf{P}}^{\hat{\pi}})$ in $\Pi_{\pareto{P}}$ that dominates $P^{\prime}$, i.e., $R_{k}(\pi') \leq R_{k}(\hat{\pi}) \forAll k \in [K]$. Given $P_{0}=(\pi_{0}, \bm{\mathsf{P}}^{\pi_{0}})$, we have $R_{K}(\hat{\pi}) \geq R_{K}(\pi^{\prime}) \geq R_{K}(\pi_{0})$ by definition. Since both $P_{0}$ and $\hat{P}$ do not dominate each other, since $R_{K}(\hat{\pi}) \geq R_{K}(\pi_{0})$, there exists $k \in [K-1]$ such that $R_{k}(\pi_{0}) \geq R_{k}(\hat{\pi})$.

Now consider the values of $\bm{c}_{k}$ and $R_{k}(\hat{\pi})$ for some objective $k$. Note that $\bm{c}_{k} \geq \mathsf{P}_{k}(\max(0,j_{k}-1))$. If $R_{k}(\pi_{0}) \geq R_{k}(\hat{\pi}) > \bm{c}_{k}$, then $R_{k}(\pi_{0}) \geq R_{k}(\hat{\pi}) > \mathsf{P}_{k}(\max(0,j_{k}-1))$, which is conflicting with the condition that $\mathsf{P}_{k}(\max(0,j_{k}-1))$ is the $(\max(0,j_{k}-1))^{th}$ objective value in $\mathsf{P}_{k}$. If $R_{k}(\hat{\pi}) \leq \bm{c}_{k}$, it conflicts with the condition that $\hat{P}$ dominates $P'$. Therefore, such a $\hat{P}$ does not exist, and hence, $P'$ is a \pareto{Pareto}-optimal solution.
\end{proof}

\begin{figure*}[ht]
\centering
{
    \includegraphics[scale=0.5]{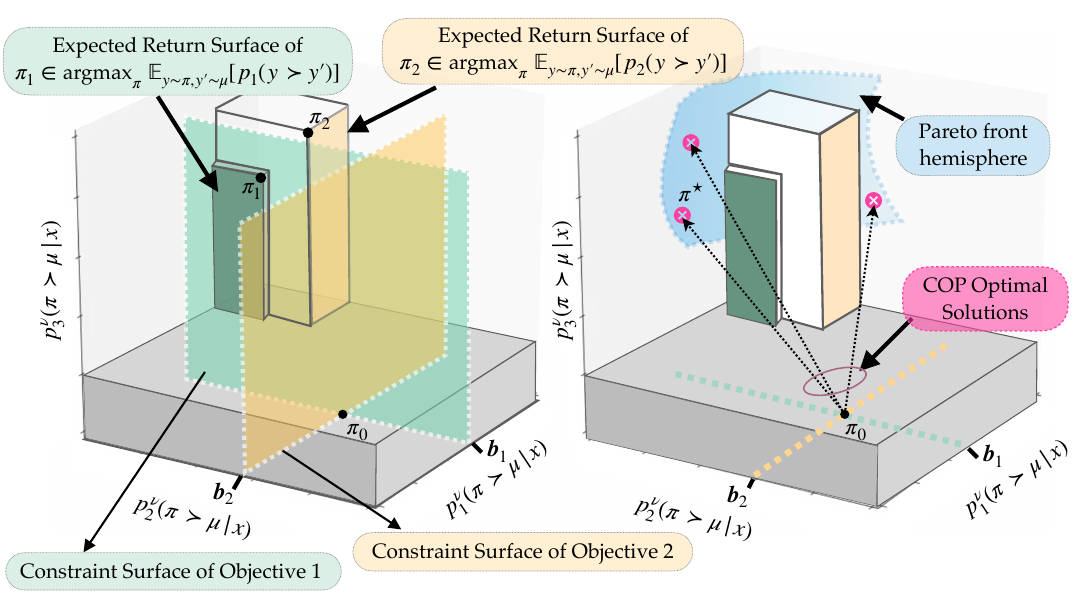}
}
\caption{\small Visualization of criteria for specifying constraint values. 
The expected return surface of $\pi_{1}$ ($\pi_{2}$), call it $\Scal_{1}$ ($\Scal_{2}$), in objective 1(2) is the $\max(0,j_{k} - 1)^{th}$ value in list $\mathsf{P}_{1}$ ($\mathsf{P}_{2}$) respectively. Therefore, specifying constraints values $\bm{b}_{1} \geq \Scal_{1}$ and $\bm{b}_{2} \geq \Scal_{2}$ is sufficient for the optimal solution of Equation \eqref{eq:first-obj} to be a \pareto{Pareto}-optimal solution.}
\label{fig:mopo-constraint-satisfy}
\vspace{-0.25cm}
\end{figure*}

Proposition \ref{prop:pareto_optimal} formalizes the criteria for specifying appropriate constraint values and provides the condition for which the optimal solution of Problem \eqref{eq:first-obj} is a \pareto{Pareto} optimal solution. See Figure \ref{fig:mopo-constraint-satisfy} for the visualization. Proposition \ref{prop:pareto_optimal} gives a \emph{sufficient} condition under which the solution of Problem \eqref{eq:main-obj} is \pareto{Pareto} optimal. However, in practice this condition is (i) overly conservative and may exclude many feasible \pareto{Pareto} points, and (ii) computationally expensive as it requires re‑evaluating \emph{all} policies for non‑dominated sorting at every optimization step. Please see Section \ref{sec:function_approx} for an empirically validated practical constraint specification procedure.

\subsection{Log-barrier Function Optimization}
\label{appendix:logbarrier}

For some $\sigma, s > 0$, consider the following log barrier function,
\begin{equation*}
\Bcal_{\sigma,s}(z) = \begin{cases}
    - \sigma \log(-z) \, , & z \leq -s \\
    \frac{\sigma}{s} z + (1-\log(s))\sigma \, , & z > -s
\end{cases} \;\; , \; \;  \text{with}  \; \; \partial_{z}  \Bcal_{\sigma,s}(z) = \frac{\sigma}{\max(-z,s)} \, .
\end{equation*}

For all $z \in \Rbb$, this is a convex, continuous, and differentiable function. Importantly, for $s = \sigma^{2}$, this barrier function converges to the characteristic function $\chi \{ z \leq 0 \}$ as $\sigma \rightarrow 0$, i.e., it takes the value 0 when $z \leq 0$ and $\infty$ otherwise; the condition $s = \sigma^{2}$ is sufficient, but not necessary for constraint satisfaction \citep{kervadec2022constrained}. This convergence to the characteristic function is visually depicted in Figure \ref{fig:logbarrier}, showing the change in the log barrier function as we gradually decrease $\sigma$.

\begin{figure}[ht]
    \centering
    \includegraphics[scale=0.3]{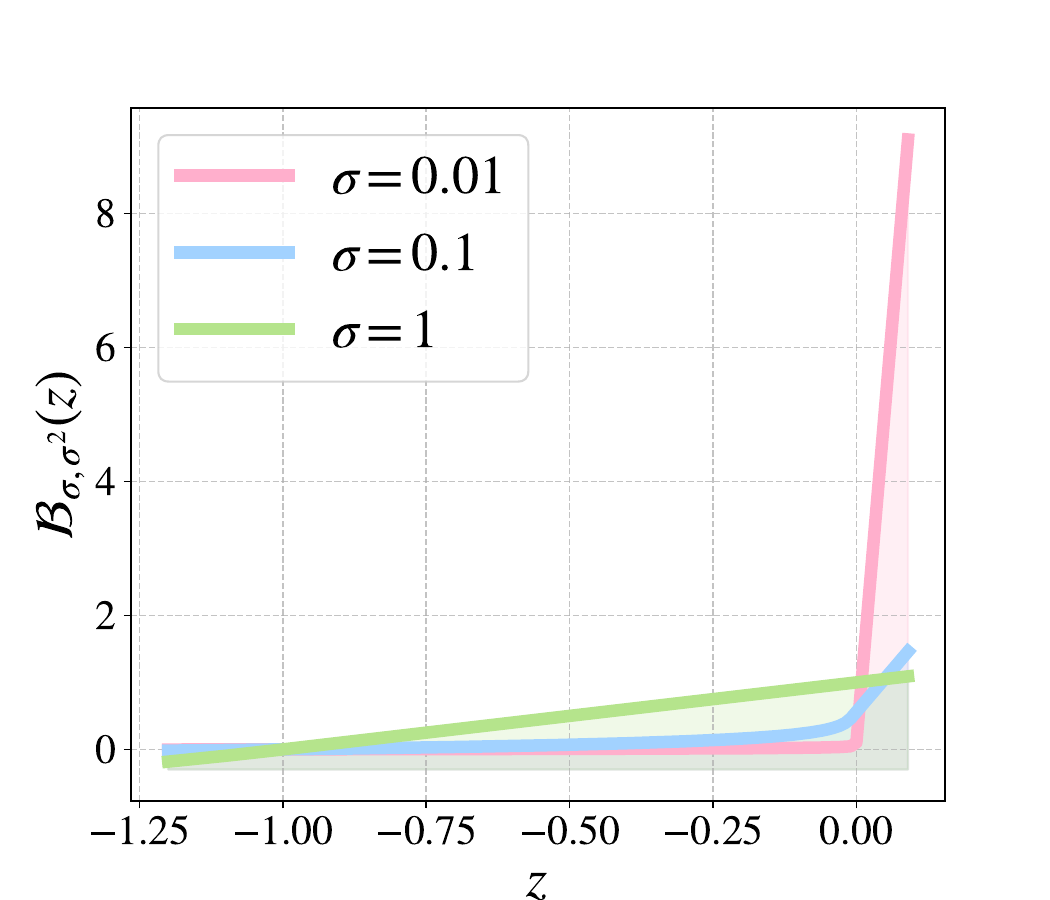}
    \caption{
        \textbf{The relaxed logarithmic barrier.}
        We depict the convergence of the relaxed logarithmic barrier $\mathcal{B}_{\sigma , \sigma^{2}} ( z )$ to the characteristic function $\chi \{ z \leq 0 \}$ as $\sigma \rightarrow 0$.
        We gradually decrease $\sigma$ from 1 to 0.01.
        Consequently, $\mathcal{B}_{\sigma , \sigma^{2}} ( z )$ gets closer to 0 for $z \leq 0$ and increases to $\infty$ otherwise.
    }
    \label{fig:logbarrier}
\end{figure}

In Section \ref{sec:algorithm}, we had defined

\begin{equation}
\label{eq:logbarrier_appendix}
        \Lcal_{\text{LB}}(\rho, \bm{\sigma}) = \Fcal(\rho) - \sum_{k=1}^{K-1} \Bcal_{\bm{\sigma}_{k},\bm{\sigma}_{k}^{2}}(\bm{b}_{k} - \bm{\Gcal}_{k}(\rho)) \; .
\end{equation}

Since the log barrier converges to the characteristic function as $\sigma \rightarrow 0$, we want to find the maximizer of $\Lcal_{\text{LB}}(\rho, \bm{\sigma})$ for small $\bm{\sigma}$. However, doing so directly leads to instabilities as the objective function is ill-conditioned. Instead, it is common practice to follow an iterative procedure: one finds the maximizer for a fixed $\bm{\sigma}$, reduces $\bm{\sigma}$, and repeats \citep{curtis2024stochastic}. Specifically, the procedure is instantiated with initial values $\rho_{0}$, $\bm{\sigma}_{0}$, and $0 < \bm{\gamma}_{k} < 1$ for $\bm{\gamma} = \{\bm{\gamma}_{k}\}_{k=1}^{K-1}$. On the $t$-th iteration, $\bm{\sigma}_{k}^{(t)} \leftarrow \bm{\gamma}_{k} \bm{\sigma}_{k}^{(t-1)}$ is reduced and $\rho^{(t)} \leftarrow \argmax_{\rho} \Lcal_{\text{LB}}(\rho, \bm{\sigma})$ (with initialization $\rho^{(t-1)}$). In doing so, the constraints are gradually enforced, nudging the LLM to satisfy them over the optimization procedure while avoiding instabilities. As $\{ \bm{\sigma}^{(t)} \} \searrow 0$, the weights $\{ \rho^{(t)} \}$ converge to the maximizer of the constrained problem.

It is impossible to maximize $\Lcal_{\text{LB}}(\rho, \bm{\sigma})$ exactly in many practical applications. Instead, at each iteration, one can take a single optimization step toward the solution.
Doing so is amenable to stochastic gradient methods and mitigates computational overhead: the optimization proceeds as normal while the value of $\bm{\sigma}$ is reduced over the course of the procedure. One can guarantee the convergence of this procedure to the optimal solution in some settings; for example, \cite{curtis2024stochastic} prove convergence when dealing with box constraints.
However, convergence in a scenario like ours is not guaranteed. Nevertheless, we will experimentally demonstrate its use for our constrained problems.

We employ stochastic gradient methods and derive the gradient of our objective function directly:
\begin{equation}
    \label{eq:log-barrier-objective-grad}
    \partial_{\rho} \Lcal_{\text{LB}}(\rho, \bm{\sigma}) =  \partial_{\rho} \Fcal(\rho) \ + \ \sum_{k = 1}^{K-1} \frac{\bm{\sigma}_{k} \partial_{\rho} \bm{\Gcal}_{k}(\rho)}{\max \left(\bm{\Gcal}_{k}(\rho) - \bm{b}_{k} , \bm{\sigma}_{k}^{2} \right)}.
\end{equation}
This follows immediately from Equation \eqref{eq:logbarrier_appendix}. See \cite{schulman2016high} for a more detailed review.

\textbf{Connection to Lagrange multipliers.} The log barrier and the Lagrangian are intrinsically connected; this becomes evident when comparing Equation \eqref{eq:log-barrier-objective-grad} with the (gradient of the) Lagrangian in Equation \eqref{eq:dual}. In particular, we define the multipliers:

\begin{equation*}
    \bm{\lambda}_{k} \ = \ \frac{\bm{\sigma}_{k}}{\max \left(\bm{\Gcal}_{k}(\rho) - \bm{b}_{k} , \bm{\sigma}_{k}^{2} \right)} \; .
\end{equation*}

They can be interpreted as Lagrange multipliers: for active constraints, $\bm{\lambda}_{k} = 1 / \bm{\sigma}_{k}$ is non-zero; for inactive constraints, $\bm{\lambda}_{k} = \bm{\sigma}_{k} / \left(  \bm{\Gcal}_{k}(\rho) - \bm{b}_{k} \right)$ vanishes to $0$ as $\bm{\sigma}_{k} \rightarrow 0$. Hence, the KKT complementary slackness condition is satisfied by design.

\subsection{Main Text Proofs}

\begin{restatable}{proposition}{maindualappendix}
The dual formulation of Problem \eqref{eq:main-obj} is given by,
\small
\begin{equation}
\begin{aligned}
\label{eq:dual_appendix}
    \text{\dual} & \triangleq \min_{\bm{\lambda} \geq \bm{0}} \max_{\rho} \; \Lcal(\rho, \bm{\lambda}) = \min_{\bm{\lambda} \geq \bm{0}} \Lcal(\rho^{\star}_{\bm{\lambda}}, \bm{\lambda}) \\ &= \min_{\bm{\lambda} \geq 0} \Fcal(\rho_{\bm{\lambda}}^{\star}) - \bm{\lambda}^{T}(\bm{b} - \bm{\Gcal}(\rho_{\bm{\lambda}}^{\star})) \quad , \, \text{where}, \forAll y \in \Ycal, \\
 \rho^{\star}_{\bm{\lambda}}(y) &= \exp\left( \tau^{-1} \E{y' \sim \mu}[p(y \succ y') + \bm{\lambda}^{T} \bm{q}(y \succ y')]  - 1 \right) \, .
\end{aligned}
\end{equation}
\normalsize
\end{restatable}

\begin{proof}
\label{proof:main_dual}

Given the Lagrangian $\Lcal(\rho, \bm{\lambda})$, the dual formulation is given by

$$
\max_{\rho} \min_{\bm{\lambda} \geq 0} \Lcal(\rho, \bm{\lambda}) \equiv  \min_{\bm{\lambda} \geq 0} \max_{\rho} \Lcal(\rho, \bm{\lambda}) \, .
$$

By the strong duality, it is sufficient to consider KKT conditions for $(\rho^{\star}, \bm{\lambda}^{\star})$. 

(i) Primal feasibility i.e. $\Ebb_{y \sim \piref , y' \sim \mu}[ \rho^{\star}(y) \, \bm{q}(y \succ y')] \geq \bm{b}$.

(ii) Dual feasibility i.e. $\bm{\lambda}^{\star} \geq \bm{0}$.

(iii) Complementary slackness i.e. $(\bm{\lambda}^{\star})^{T} \left(\bm{b} - \Ebb_{y \sim \piref , y' \sim \mu}[ \rho^{\star}(y) \, \bm{q}(y \succ y')] \right) = \bm{0}$.

(iv) Stationarity i.e. $\nabla_{\rho} \Lcal(\rho, \bm{\lambda}) = 0$ i.e.

\begin{align*}
     \E{y' \sim \mu}[p(y \succ y')] - \tau \left(\ln (\rho(y)) + 1 \right) + \bm{\lambda}^{T} \E{y' \sim \mu}[\bm{q}(y \succ y')] = 0 \, . \\
    \implies \rho^{\star}(y) = \exp \left( \frac{1}{\tau} \E{y' \sim \mu} \left[ p(y \succ y') + \bm{\lambda}^{T} \bm{q}(y \succ y') \right] - 1 \right) 
\end{align*}

Now we show that conditions (i)-(iii) hold for the above $\rho^{\star}(y)$. For condition (i), by initialization of $\bm{b}$ using Proposition \ref{prop:pareto_optimal}, we have 

$$
\E{y \sim \piref \\ y' \sim \mu}[ \rho^{\star}(y) \, \bm{q}(y \succ y')] \geq \E{y \sim \piref \\ y' \sim \mu}[ \rho_{0}(y) \, \bm{q}(y \succ y')] \geq \bm{b} \; ,
$$
where $\rho_{0}(y) = \pi_{0}(y) / \piref(y)$,  $\pi_{0}$ is the initialization point of solving Problem \eqref{eq:main-obj}, and the second inequality follows by construction. Condition (ii) also holds by construction of $\bm{\lambda}$. Now, condition (iii) holds by definition if the constraint is active i.e. $\Ebb_{y \sim \piref , y' \sim \mu}[ \rho^{\star}(y) \, \bm{q}(y \succ y')] = \bm{b}$, and if it is inactive, then dual feasibility also ensures that complimentary slackness holds. 
As a consequence, all KKT conditions are always satisfied with the above $\rho^{\star}(y)$, which concludes the proof.
\end{proof}

\begin{restatable}{proposition}{lowerbounddualappendix}
\label{prop:lowerbound_dual_appendix}
The optimal solution to Problem \eqref{eq:lbop} can be obtained by solving the following optimization problem.
\small
\begin{equation}
\begin{aligned}
\label{eq:lbop_lagrangian_appendix}
\chi_{k}^{\star} & = \argmax_{\chi_{k} \geq 0} \; \Lcal_{k}(\chi_{k} \, ; \, \rho) \\ &:= - \chi_{k} \ln \left( \Ebb_{y \sim \piref , y' \sim \mu} \left[ 
 \exp \left( - \chi_{k}^{-1} \rho(y) \bm{q}_{k}(y \succ y')  \right) \right] \right) - \chi_{k} \epsilon \nonumber \\ 
& \text{with}, \;\, \pi_{k}^{\star}(y) \propto \piref(y) \underbrace{\exp \left( - (\chi_{k}^{\star})^{-1} \E{y' \sim \mu}[\rho(y) \bm{q}_{k}(y \succ y')] \right)}_{w(y) \; (\text{unnormalized weight})}.
\end{aligned}
\end{equation}
\normalsize
\end{restatable}

\begin{proof}
\label{proof:lowerbound_dual}

For the given constrained optimization problem:

$$
\min_{\pi_{k}} \E{y \sim \pi_{k} \\ y' \sim \mu}[\rho(y) \bm{q}_{k}(y \succ y')] \; \suchthat \; \KL(\pi_{k} \;  || \;  \piref) \leq \epsilon \quad \text{and}\, , \quad \sum_{y \in \Ycal} \pi_{k}(y) = 1 \; ,
$$

we consider its Lagrangian to find its dual problem. By noticing that, $\pi_{k}(y) \rho(y) = \pi_{k}(y) \frac{\pi(y)}{\piref(y)} \approx \pi(y) $, the Lagrangian for some arbitrary multipliers $\chi_{k} \geq 0$ and $\zeta_{k} \in \Rbb$ is given by,

\small
\begin{align*}
\Lcal(\pi_{k}, \chi_{k}, \zeta_{k}) &= \E{y \sim \pi_{k} \\ y' \sim \mu}[\rho(y) \bm{q}_{k}(y \succ y')] + \chi_{k} \left( \E{y \sim \pi_{k}} \left[\ln \left( \frac{\pi_{k}(y)}{\piref(y)} \right) \right] - \epsilon \right) + \zeta_{k} \left( \sum_{y \in \Ycal} \pi_{k}(y) - 1 \right) \\
&= \E{y \sim \pi_{k} \\ y' \sim \mu}[\rho(y) \bm{q}_{k}(y \succ y')] + \chi_{k} \left( \sum_{y \in \Ycal} \pi_{k}(y) \ln \left( \frac{\pi_{k}(y)}{\piref(y)} \right) - \epsilon \right) + \zeta_{k} \left( \sum_{y \in \Ycal} \pi_{k}(y) - 1 \right) \, ,
\end{align*}
\normalsize

where $\chi_{k} \in \Rbb_{+}$ is the Lagrange multiplier for $\KL$ constraint, and $\zeta_{k} \in \Rbb$ is the Lagrange multiplier for the normalization constraint that ensures $\sum_{y \in \Ycal} \pi_{k}(y) = 1$. Hence, the corresponding optimization problem due to strong duality (Slater's conditions) is: $ \min_{\pi_{k}} \max_{\chi_{k} \geq 0, \zeta_{k}} \Lcal(\pi_{k}, \chi_{k}, \zeta_{k}) \equiv \max_{\chi_{k} \geq 0, \zeta_{k}} \min_{\pi_{k}} \Lcal(\pi_{k}, \chi_{k}, \zeta_{k})$.  Now, we can compute the non-parametric closed form solution for each sample $y$ for
the inner minimization problem. Due to the convexity of $\KL$-divergence, it is sufficient to consider $\nabla_{\pi_{k}} \Lcal(\pi_{k}, \chi_{k}, \zeta_{k}) = 0$. For each $y$ we then have,

\begin{align*}
\nabla_{\pi_{k}} \Lcal(\pi_{k}, \chi_{k}, \zeta_{k}) =  \E{y' \sim \mu}[\rho(y) \bm{q}_{k}(y \succ y')] + \chi_{k}^{\star} \left( \ln \left( \frac{\pi_{k}^{\star}(y)}{\piref(y)} \right) + 1 \right) + \zeta_{k} = 0 \\
\implies \pi_{k}^{\star}(y) \propto \piref(y) \exp \left( - (\chi_{k}^{\star})^{-1}   \E{y' \sim \mu}[\rho(y) \bm{q}_{k}(y \succ y')] \right) \, 
\end{align*}

with some normalization constant $Z_{k}$ that ensures that $\sum_{y \in \Ycal} \pi_{k}^{\star}(y) = 1$, which is described with respect to $\zeta_{k}$. Then, by plugging the above stationarity condition into the Lagrangian, we have the dual unconstrained optimization problem as,

\small
\begin{align*}
\max_{\chi_{k} \geq 0, \zeta_{k}} \Lcal(\pi_{k}^{\star}, \chi_{k}, \zeta_{k}) &= \E{y \sim \pi_{k}^{\star} \\ y' \sim \mu}[\rho(y) \bm{q}_{k}(y \succ y')] + \chi_{k} \left( \E{y \sim \pi_{k}^{\star}} \left[ \ln \left( \frac{\pi_{k}^{\star}(y)}{\piref(y)} \right) \right] - \epsilon \right) + \zeta_{k} \left( \sum_{y \in \Ycal} \pi_{k}^{\star}(y) - 1 \right) \\ 
&= \E{y \sim \pi_{k}^{\star}} \left[\E{y' \sim \mu}[\rho(y) \bm{q}_{k}(y \succ y')] + \chi_{k} \ln \left( \frac{\pi_{k}^{\star}(y)}{\piref(y)} \right) \right] - \chi_{k} \epsilon \\
&= - \chi_{k} \ln \left( \Ebb_{y \sim \piref , y' \sim \mu} \left[ \exp \left( - \chi_{k}^{-1} \rho(y) \bm{q}_{k}(y \succ y') \right) \right] \right) - \chi_{k} \epsilon \, .
\end{align*}
\normalsize

This concludes the proof.
\end{proof}

\subsection{Empirical Optimization Problem}
\label{appendix:samplingbasedcop}

We now formulate the problem given a fixed offline dataset $\Dcal$ of the form $\Dcal := \left\{ \left(x_{i}, y_{i}, y_{i}', \Ibb(y_{i},y_{i}') \right) \right\}_{i=1}^{N}$, where context $x_{i} \in \Xcal$, $y_{i}, y_{i}'$ are two generations from $\piref$ and $\mu$ respectively, and $\Ibb(\cdot, \cdot) \in \{0,1\}^{K}$ is preference indicator vector over $K$ objectives, i.e., $\Ibb_{k}(y, y') = 1$ if $y \succ_{k} y'$, and $0$ otherwise for $k \in [K]$. With a slight abuse in notation, we let $\Ibb_{p}(\cdot, \cdot) \triangleq \Ibb_{K}(y, y')$ and $\Ibb_{\bm{q}}(\cdot, \cdot) \triangleq \left(\Ibb_{1}(y, y'), \dots, \Ibb_{K-1}(y, y') \right)^{T}$. Then, the empirical optimization problem is given by:

\vspace{-0.2cm}
\small
\begin{equation}
    \max_{\rho} \; \underbrace{ \frac{1}{N} \sum_{i=1}^{N} \rho(y_{i}) \, \Ibb_{p}(y_{i}, y_{i}') - \tau  \rho(y_{i}) \, \ln(\rho(y_{i}))}_{\hat{\Fcal}(\rho)} \quad \suchthat \quad \underbrace{\frac{1}{N} \sum_{i=1}^{N}   \rho(y_{i}) \, \Ibb_{\bm{q}}(y_{i}, y_{i}')}_{\hat{\bm{\Gcal}}(\rho)} \geq \bm{b}.
\label{eq:empirical-cop}
\end{equation}
\normalsize
\vspace{-0.5cm}

\begin{proof}
We need to show that we can build an unbiased estimate of the optimization function from empirical observations. To this end, consider the sampled \COP{\small COP} as:

\begin{equation}
\label{eq:sampled-cop}
    \max_{\rho} \; \E{y \sim \piref \\ y' \sim \mu} [\rho(y) \, I_{p}(y, y')] - \tau  \E{y \sim \piref}[\rho(y) \, \ln(\rho(y)) \quad \suchthat \quad \E{y \sim \piref \\ y' \sim \mu} [\rho(y) \, \bm{I}_{\bm{q}}(y, y') ] \geq \bm{b}
\end{equation}

, where $\bm{I}(\cdot, \cdot)$ is a Bernoulli random preference vector over $K$ objectives i.e. $\bm{I}_{k}(y,y')$ is a random variable sampled from a Bernoulli distribution with mean $p_{k}(y \succ y')$ such that it is $1$ if $y \succ_{k} y' \,$, and 0 otherwise, where $\succ_{k}$ denotes preference with respect to the $k^{th}$ objective for $k \in [K]$. Following the notation discussed before, we let $I_{p}(\cdot, \cdot)$ to be the preference with respect to the $K^{th}$ objective, and let $\bm{I}_{\bm{q}}(\cdot, \cdot)$ to be the preference vector for the remaining $(K-1)$ objectives i.e. $I_{p}(y, y') = \bm{I}_{K}(y, y')$ and $(\bm{I}_{\bm{q}})_{k} = \bm{I}_{k}(y, y')$ for $k \in [K-1]$. Note that Problem \eqref{eq:main-obj} and Problem \eqref{eq:sampled-cop} are equivalent since $\Ebb_{y \sim \piref, y' \sim \mu}[\rho(y) \, I_{p}(y, y')] = \Ebb_{y \sim \piref, y' \sim \mu}[\rho(y) \, I_{p}(y, y') \given y, y'] = \Ebb_{y \sim \piref, y' \sim \mu}[\rho(y) \, p(y \succ y')]$. Similar argument follows for $\bm{I}_{\bm{q}}$. 
\end{proof}

Note that each data point $\left(x_{i}, y_{i}, y_{i}', \Ibb(y_{i}, y_{i}') \right)$ contributes two terms to the empirical problem above: one with $(x, y, y', \Ibb(y, y')) = (x_i, y_i, y_i', \Ibb(y_i, y_i'))$ and another with $(x, y, y', \Ibb(y, y')) = (x_i, y_i', y_i, \Ibb(y_i', y_i))$. This symmetry is important to exploit since it reduces gradient variance and improves stability during optimization. For clarity, we omit the symmetric term in notation as incorporating it is trivial -- simply augment the current dataset by swapping $y_{i}$ and $y_{i}'$ and bit flipping $\Ibb(\cdot, \cdot)$ element-wise. 

\vspace{-0.2cm}
Now, as before, for some $\bm{\lambda} := \{\lambda_{k}\}_{k=1}^{K-1} \geq \bm{0}$, we have the Lagrangian as $\hat{\Lcal}(\rho, \bm{\lambda}) = \hat{\Fcal}(\rho) - \bm{\lambda}^{T} \left(\bm{b} - \hat{\bm{\Gcal}}(\rho) \right)$, and the dual as,

\vspace{-0.3cm}
\small
\begin{equation}
\begin{aligned}
\label{eq:empirical-dual}
    \hat{\dual} & \triangleq \min_{\bm{\lambda} \geq \bm{0}} \max_{\rho} \; \hat{\Lcal}(\rho, \bm{\lambda}) = \min_{\bm{\lambda} \geq \bm{0}} \hat{\Lcal}(\rho^{\star}_{\bm{\lambda}}, \bm{\lambda}) = \min_{\bm{\lambda} \geq 0} \hat{\Fcal}(\rho_{\bm{\lambda}}^{\star}) - \bm{\lambda}^{T}(\bm{b} - \hat{\bm{\Gcal}}(\rho_{\bm{\lambda}}^{\star})) \\
    \text{where},& \;\, \rho^{\star}_{\bm{\lambda}}(y) = \exp\left( (\tau N)^{-1} \sum_{i=1}^{N}[\Ibb_{p}(y,y') + \bm{\lambda}^{T} \Ibb_{\bm{q}}(y,y')] - 1 \right).
\end{aligned}
\end{equation}
\normalsize
\vspace{-0.4cm}

Following our earlier discussion on lower bounding the preference probabilities for $[K-1]$ constraints, we wish to constrain $\texttt{LowerBound}(\hat{\bm{\Gcal}}(\rho))$ to be greater than $\bm{b}$. Then, following Equation \eqref{eq:chi_update}, the empirical lower bound is obtained by solving below for $M$ batches of $\Dcal$, where each batch $m$ of size $N_{M} = \lfloor N/M \rfloor$ is of the form $\left(x_{m,j}, y_{m,j}, y_{m,j}', \Ibb(y_{m,j}, y_{m,j}')\right)_{j=1}^{N_{M}}$.

\vspace{-0.4cm}
\begin{equation}
\label{eq:empirical-chi-update}
\begin{aligned}
    \min_{\bm{\chi} \geq \bm{0}} \hat{J}(\bm{\chi} \,; \rho) := \frac{1}{M} \sum_{m=1}^{M} \left[ \sum_{k=1}^{K-1} \left[ \chi_{k} \ln \left( \frac{1}{N_{M}} \sum_{j=1}^{N_{M}} 
 \exp \left(  \chi_{k}^{-1} [\rho(y_{m,j}) (\Ibb_{\bm{q}})_{k}(y_{m,j}, y_{m,j}')]  \right) \right) + \chi_{k} \epsilon \right] \right]. 
\end{aligned}
\end{equation}
\vspace{-0.5cm}

This transforms the empirical dual Problem \eqref{eq:empirical-dual} into the below optimizations for $\bm{\lambda}$ and policy $\pi_{\psi}$:

\vspace{-0.2cm}
\small
\begin{align}
\label{eq:empirical_lambda_policy_update}
    \min_{\bm{\lambda} \geq \bm{0}} \hat{J}(\bm{\lambda} \,; \bm{\chi}) :=  \hat{\Fcal}(\rho) - \bm{\lambda}^{T}( \bm{b} - \hat{J}(\bm{\chi} \,; \rho)) \quad \text{and}, \; 
    \min_{\psi} \hat{J}_{\rho}(\pi_{\psi}) := - \frac{1}{N} \sum_{i=1}^{N} \rho_{\bm{\lambda}}^{\star}(y_{i}) \log(\pi_{\psi}(y_{i})) \, ,
\end{align}
\normalsize
\vspace{-0.5cm}

where $\rho_{\bm{\lambda}}^{\star}(\cdot)$ is computed using Equation \eqref{eq:empirical-dual}.

\subsection{Implementation Details}
\label{appendix:implementation_details}

\subsubsection{Background}

\noindent\textbf{SFT.} Supervised fine-tuning (SFT) with labeled demonstrations is widely adopted to fine-tune LLMs \cite{zhang2023llama, peng2023instruction}. Given prompt-response pairs $\{(x, y)\}$ sampled form the dataset $\Dcal$, the SFT loss function is defined as:
\begin{equation}
    \mathcal{L}_{\mathrm{SFT}} = - \mathbb{E}_{(x, y) \sim \Dcal} \left[ \sum_{i} \log \pi_{\rm sft} (y_i|x, y_{<i}) \right], 
\end{equation}
where $\pi_{\mathrm{sft}}$ refers to the LLM policy and $y_{<i}$ indicates all tokens before the $i$-th token in response $y$.

\noindent\textbf{RLHF.} RLHF typically involves two steps \cite{ouyang2022training,wu2023fine}: reward modeling, and RL training. In reward modeling, a reward model $r_{\phi}$ is trained to minimize the loss function
$\mathcal{L}_{\mathrm{RM}}(\phi) = - \mathbb{E}_{(x, y_w, y_l)\sim \Dcal} [ \log(\sigma(r_{\phi}(x, y_w) - r_{\phi}(x, y_l))) ]$, where $\sigma(z)$ is the sigmoid function, $y_w$ and $y_l$ refer to preferred and dispreferred responses, respectively. Generally, RL training uses the PPO algorithm \cite{schulman2017proximal} with an additional KL penalty relative to the SFT policy:
\begin{equation*}
    \arg\max_{\pi_{\theta}} \mathbb{E}_{x\sim \Dcal, y\sim \pi_{\theta}(y|x)} \left[ r_{\phi}(x,y) - \tau \log \frac{\pi_{\theta}(y|x)}{{\pi_{\rm sft}(y|x)}} \right] ,
\end{equation*}
where $\tau > 0$ is the KL penalty coefficient.

\subsubsection{Training details.} We summarize the key implementation details of text generation tasks in Table \ref{tab:exp_details_text_generation}. This table also provides links to the open-sourced datasets and reward models utilized in our study. Implementation is primarily based on trl \cite{vonwerra2022trl}. Especially, SFT fine-tunes the base model, while MORLHF and Rewarded Soups fine-tune the SFT model using the PPO algorithm. In contrast, RiC directly fine-tunes the base model. See \cite{yang2024rewards} for more details. We apply the same 4-bit quantization and LoRA configuration for training all models. During evaluation, we maintain a consistent configuration across different models, generating 64 tokens for the Helpful Assistant task and 32 for the Reddit Summary task. 

For all baselines including \MOPO{MOPO}, we begin by normalizing the rewards using the mean and standard of the offline dataset before incorporating them into the prompts. During online generation (updating the reference policy) and evaluation, we sample a group of 25,000 random samples from a normal distribution and use the maximum and minimum values (generally around $\pm 3$) of these samples to replace the maximum and minimum values of the dataset. This method can prevent the extreme values in the dataset to impact reward selection. 

\paragraph{Incorporating preference vectors.} \pareto{Pareto} fronts are generated as in \cite{yang2024rewards}. One point to consider while evaluating empirical \pareto{Pareto} fronts is to incorporate user preferences for a particular objective. For instance, in the case of 2 objectives, in RiC \cite{yang2024rewards}, scalarization tuples are passed as in-context human preferences. Preference tuple $w = (w_{1},w_{2})$ for the two reward dimensions is passed to the model at inference time to adjust the LLM policy according to the user preferences. It is necessary to map these scalarization tuples $w$ to the desired rewards that will be used as conditioning in prompts. Similarly, for the two objective case, PARM uses prompts at inference time and clusters them. Then, they take two diverse policies, apply each policy to each cluster, and compute a multi-objective reward vector for all policy-cluster pairs. A single point in the plots is a reward vector for the policy-cluster pair. For a fair comparison, \MOPO{MOPO} should also incorporate the preferences at inference time (in terms of which objective should be primary). For $K$ objectives, one idea would be to solve $K$ separate optimization problems as in Problem \eqref{eq:main-obj}, one for each objective being the "main" objective. However, this is not efficient since if the LLM had $m$ parameters, the space complexity of representing $K$ policies would be $Km$. 

We propose a multi-policy architecture in which all $K$ policies share a common LLM backbone. Each policy $k$ is parameterized by a matrix $\theta_k \in \Rbb^{L \times d}$, where $L$ is the number of tokens and $d$ is the transformer embedding dimension. The logit for the next token under policy $k$ is computed as $\theta_k \phi$, where $\phi \in \Rbb^d$ is the final-layer embedding summarizing the input sequence. This multi-headed model is illustrated in Figure \ref{fig:multiplepolicyheads}. The total parameter count is reduced to $m - dL + dKL$, where $m - dL$ parameters are in the shared backbone, and $dKL$ correspond to the $K$ policy heads. The shared backbone enables efficient language modeling, while the separate heads provide sufficient flexibility for each policy to optimize distinct objectives and language styles. During training, each policy matrix is jointly trained with the transformer backbone. At inference time, the user specifies which objective $k_{0} \in [K]$ should be the primary objective, and the model computes logit for the text token using $\theta_{k_{0}}\phi$ to adapt to inference time user specifications. 

\begin{figure*}[ht]
\centering
{
    \includegraphics[scale=0.38]{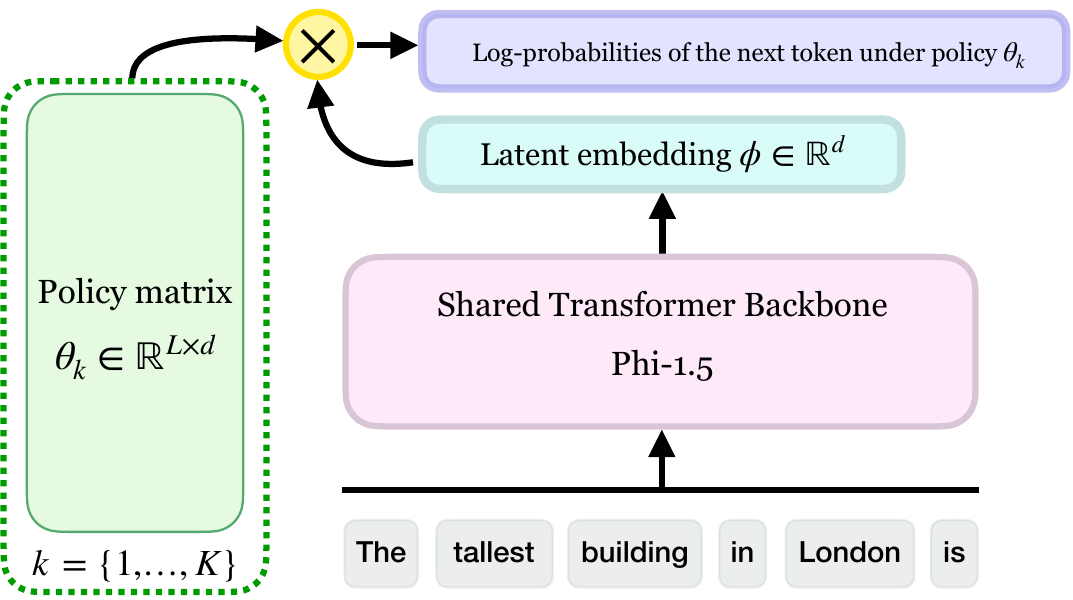}
}
\caption{Multi-headed policy architecture for incorporating preferences at inference time in \MOPO{MOPO}.}
\label{fig:multiplepolicyheads}
\end{figure*}

\begin{table}[ht]
	\centering%
	\caption{Key implementations of the text generation experiments.}%
	\centering
	\resizebox{0.9\textwidth}{!}{
		\begin{tabular}{cc}%
			\toprule
			\multicolumn{2}{c}{\textbf{Basic information}}                                                                                                                                                                             \\
			\midrule

            Architecture         & Transformer                                                                                                                                               \\
			Pre-training (SFT)        & See Section \ref{sec:experiments}                                                                                                                                             \\     
			Hardware             & NVIDIA A100 80GB, 1 accelerator, 12 vCPU     \\
            Quantization for training & 4bit \\
            Fine-tuning strategy & LoRA \cite{hu2022lora}                                                                                                                                                                                     \\
            LoRA $r$             & 16\\
 			LoRA alpha           & 32                                                                                                                                                                                     \\
			LoRA dropout         & 0.05                                                                                                                                                                                                                                                            \\
			Optimizer            & Adam           \\   
   			Batch size           & 8      \\
            Inference tokens for evaluation  & 64 for Helpful Assistant and 32 for Reddit Summary \\

            \midrule
			\multicolumn{2}{c}{\textbf{SFT}}              \\       
			\midrule
             Finetuning steps & 10000   \\
			Initial learning rate        & 1.41e-4  \\
            Learning rate scheduler & Linear \\   
			\midrule
    \multicolumn{2}{c}{\MOPO{MOPO}}              \\       
    			\midrule
                 Finetuning steps & 10000   \\
    			Initial learning rate        & 1.87e-4  \\
                Learning rate scheduler & Linear \\   
                Batch size & 8 \\   
                Regularization $\tau$ & 0.08 \\
                Constraint lower bound ball $\epsilon$ & 0.15 \\
                Constraint relaxation $\beta$ & 0.9995 \\
                Reference policy lag $t_{0}$ & 500 \\
    
            \midrule
			\multicolumn{2}{c}{\textbf{RiC}}              \\       
			\midrule
             Offline finetuning steps & 10000 \\
			Initial learning rate        & 1.41e-4  for offline finetuning, 1e-5 for online finetuning                                                              \\
            Learning rate scheduler & Linear for offline finetuning, constant for online finetuning \\   
            Threshold for MORS &  0.7-quantile for each reward dimension\\
            Online generation sample size per iteration & 10000 \\
            Online finetuning steps per iteration & 4000 \\
            
            \midrule
			\multicolumn{2}{c}{\textbf{RL step for MODPO}}                                                                                                                                                                      \\
			\midrule

			RL algorithm         & PPO \cite{schulman2017proximal}  \\
            Implementation      & trl \cite{vonwerra2022trl} \\
			KL regulaization               & 0.2                          \\
			Epochs               & 1                       \\          
            learning rate        & 1e-5  \\
            lambda for GAE    & 0.95 \\
            gamma             & 1 \\
            cliprange         & 0.2 \\
            Number of optimisation epochs per batch & 4 \\
            Target KL       & 3 \\

            \midrule
			\multicolumn{2}{c}{\textbf{Datasets and Reward Models}}                                                                                                                                                                      \\
			\midrule
            
            Task name            & \textbf{Helpful Assistant}         \\
			Description          & Provide helpful and harmless answers to potentially complex and sensitive questions.    \\
			Prompt               & No prompt, only users' questions.                                                                                                                                                       \\
			Dataset              &  \href{https://huggingface.co/datasets/Anthropic/hh-rlhf}{Anthropic/hh-rlhf}  \cite{bai2022training}  \\
            harmless reward               & \href{https://huggingface.co/Ray2333/gpt2-large-harmless-reward_model}{gpt2-large-harmless-reward\_model}     \\
            helpful reward       & \href{https://huggingface.co/Ray2333/gpt2-large-helpful-reward_model}{gpt2-large-helpful-reward\_model} \\  humor reward & \href{https://huggingface.co/mohameddhiab/humor-no-humor}{humor-no-humor} \\

            Task name            & \textbf{Reddit Summary}         \\
            Description          & Provide a summary to a post from Reddit.\\
            Prompt                & Generate a one-sentence summary of this post. \\
            Dataset & \href{https://huggingface.co/datasets/openai/summarize_from_feedback}{openai/summarize\_from\_feedback} \cite{stiennon2020learning} \\
            pref1 reward      & \href{https://huggingface.co/Tristan/gpt2_reward_summarization}{gpt2\_reward\_summarization} \\
            less-hallucination reward & \href{https://huggingface.co/CogComp/bart-faithful-summary-detector}{bart-summary-detector}      \\
			faithful reward & \href{https://huggingface.co/CogComp/bart-faithful-summary-detector}{bart-faithful-summary-detector}    \\        
			\bottomrule
		\end{tabular}
	}
	\label{tab:exp_details_text_generation}
\end{table}%

\textbf{Inference code.} Here we provide the inference pseudo-code. Notably, to prevent potential precision explosion, we approximate the solution for JSD same as Reverse KL-divergence, as they are inherently similar.
\begin{lstlisting}[
language=Python, breaklines=true, basicstyle=\ttfamily, keywordstyle=\color{blue}, stringstyle=\color{purple}
]
def f_divergence(logp, weights, f_type):
    if f_type in ("reverse_kld", "jsd"):
        return torch.stack([w * lp for w, lp in zip(weights, logp)]).sum(dim=0)

    if f_type == "forward_kld":
        alpha = 1.0
    elif "-divergence" in f_type:
        alpha = float(f_type.split("-", 1)[0])
    else:
        raise ValueError(f"Unknown f_type: {f_type}")

    terms = [
        -alpha * lp + np.log(w)
        for w, lp in zip(weights, logp)
        if w != 0
    ]
    return -torch.logsumexp(torch.stack(terms), dim=0)
\end{lstlisting}

\textbf{Compute resources.} Average training times for the 7b parameter family of LLMs (OpenChat-v3.5, Llama-3.1-8B, Mistral-7b-v0.2 (Instruct), and Zephyr-7b-beta) are provided. For training RLHF and MODPO models, the number of workers are set as $3$, each taking up $7,000$M of memory, running for $5.1$ and $5.7$ hours respectively; for training RiC and PARM models, the number of workers are set as $2$, each taking up $11,000$M of memory, running for $3.4$ and $3.2$ hours respectively. For \MOPO{MOPO}, see Table \ref{tab:exp_details_text_generation}, which takes $4.1$ hours.

\textbf{Training hyper-parameters.} For PPO, we follow the settings of \cite{yang2024rewards} and train for $25$ batches; for DPO, we follow \cite{zhou2023beyond}, with \texttt{PERDEVICE\_BATCH\_SIZE}$=1$ and \texttt{MAX\_LENGTH}$=64$.

\textbf{Inference hyper-parameters.} For PPO, we follow the settings of \cite{yang2024rewards} with \texttt{NUM\_BEAMS}$=1$; for DPO, we follow \cite{zhou2023beyond} with \texttt{BATCH\_SIZE}$=4$, \texttt{MAX\_LENGTH}$=50$ and \texttt{NUM\_BEAMS}$=1$.

\textbf{Codebase.} Our codebase is mainly based on trl~\cite{trl} (\url{https://github.com/huggingface/trl}), MODPO (\url{https://github.com/ZHZisZZ/modpo}), RiC  (\url{https://github.com/YangRui2015/RiC}) and Finegrained RLHF (\url{https://github.com/allenai/FineGrainedRLHF}), and has referred to f-divergence DPO \cite{fDPO} (\url{https://github.com/alecwangcq/f-divergence-dpo}), PackLLM~\cite{mavromatis2024pack} (\url{https://github.com/cmavro/PackLLM}), and DPA~\cite{haoxiang2024arithmetic} (\url{https://github.com/Haoxiang-Wang/directional-preference-alignment}).

\subsection{Ablation Study (cont.)}
\label{appendix:additional_exp}

In Section \ref{sec:experiments} we briefly discussed the impact of hyperparameters on the \MOPO{MOPO} optimization procedure. Here we present complete results of hyperparameters. In addition to constrain relaxation and lagged policy update parameters, we show results of regularization as well in Figure \ref{fig:mopo_ablation_appendix}.

\begin{figure*}[ht]
\noindent
\centering
\begin{tcolorbox}[width=0.9\textwidth, nobeforeafter, coltitle = black, fonttitle=\fontfamily{lmss}\selectfont, title={Effect of regularizer $\tau$}, halign title=flush center, colback=backg_blue!5, colframe=darkgreen!10, boxrule=2pt, boxsep=2pt, grow to left by=-0.5mm, top=0pt, left=-4pt, right=-4pt, bottom=-1pt]
    \centering
{
    {\includegraphics[height=0.171\textwidth, width=0.21\textwidth]{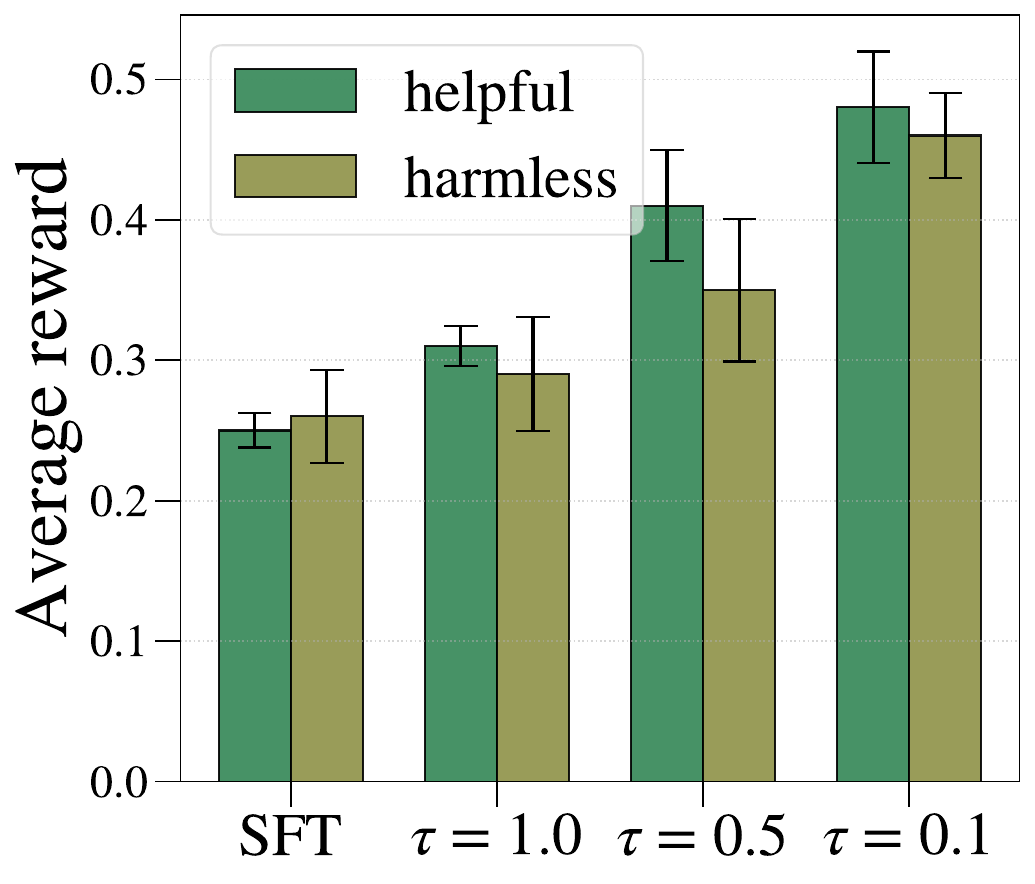}
    }
}
{
    {\includegraphics[height=0.171\textwidth, width=0.21\textwidth]{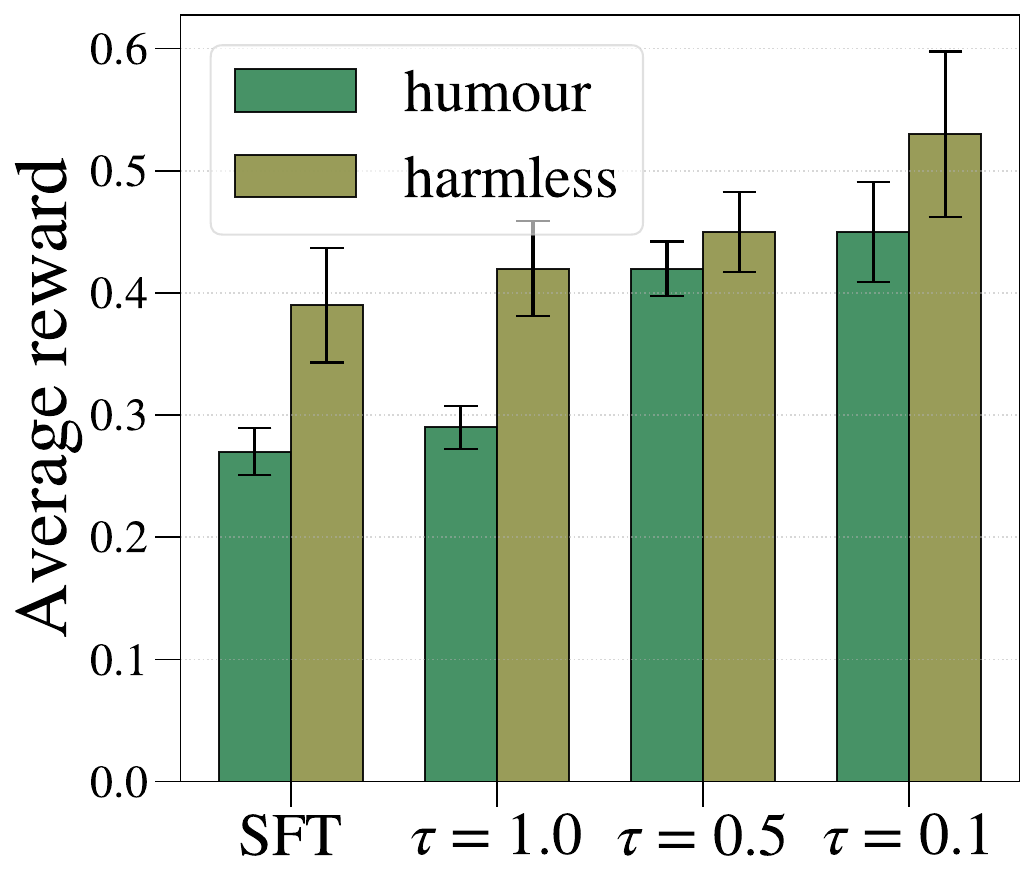}
    }
}
{
    {\includegraphics[height=0.171\textwidth, width=0.21\textwidth]{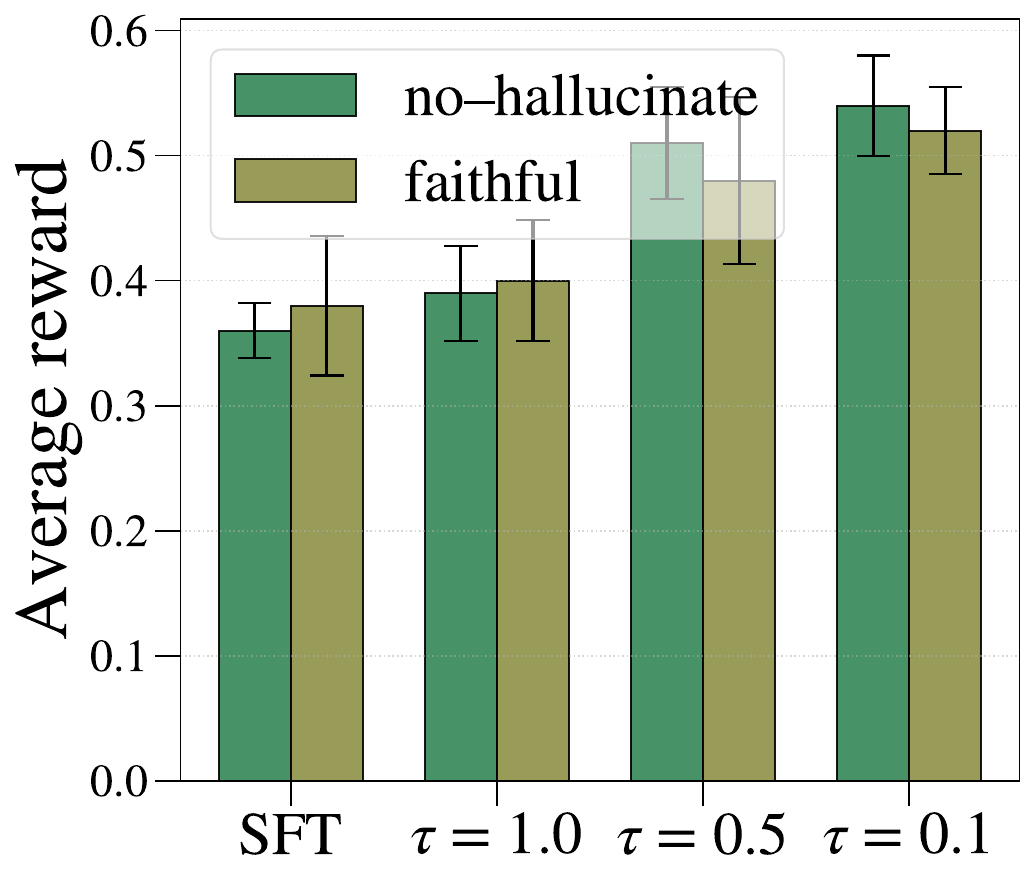}
    }
} 
{
    {\includegraphics[height=0.171\textwidth, width=0.21\textwidth]{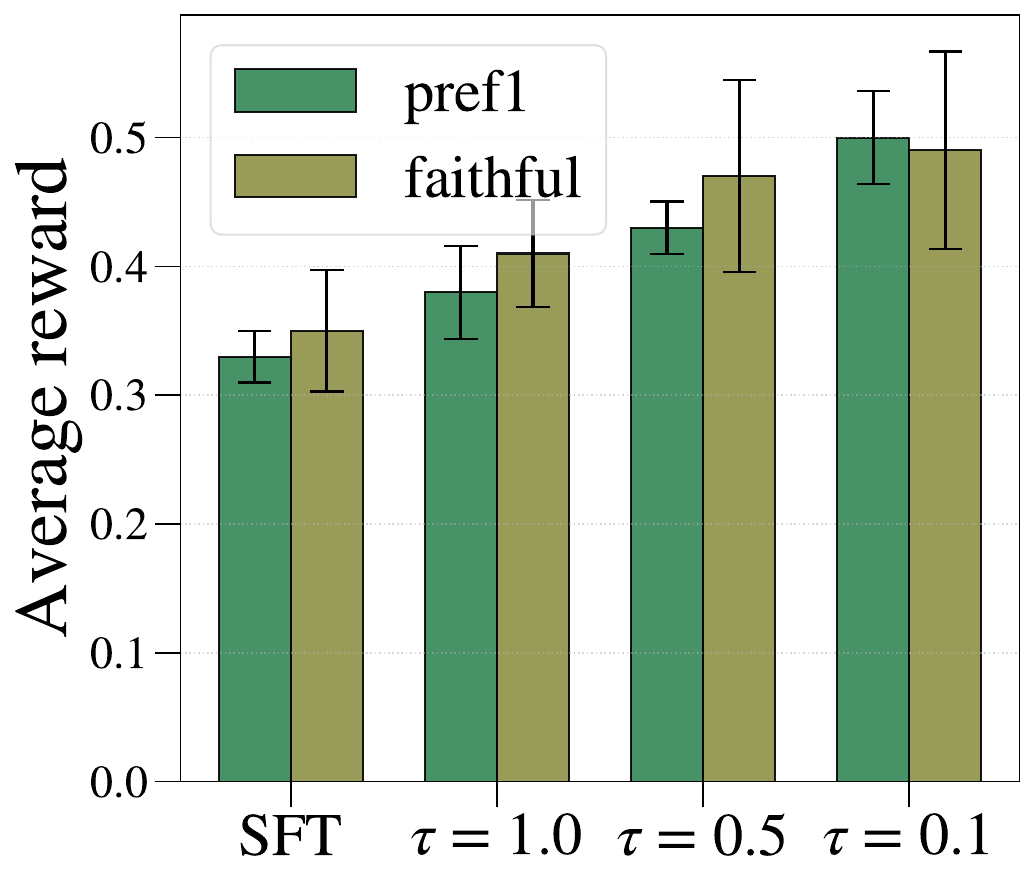}
    }
}
\end{tcolorbox}
\begin{tcolorbox}[width=0.9\textwidth, nobeforeafter, coltitle = black, fonttitle=\fontfamily{lmss}\selectfont, title={Effect of constraint relaxation parameter $\beta$}, halign title=flush center, colback=backg_blue!5, colframe=purple!10, boxrule=2pt, boxsep=2pt, grow to left by=-0.5mm, top=0pt, left=-4pt, right=-4pt, bottom=-1pt]
    \centering
{
    {\includegraphics[height=0.171\textwidth, width=0.21\textwidth]{images/bar_beta_ablation_helpful_harmless.pdf}
    }
}
{
    {\includegraphics[height=0.171\textwidth, width=0.21\textwidth]{images/bar_beta_ablation_humour_harmless.pdf}
    }
}
{
    {\includegraphics[height=0.171\textwidth, width=0.21\textwidth]{images/bar_beta_ablation_no-hallucinate_faithful.pdf}
    }
} 
{
    {\includegraphics[height=0.171\textwidth, width=0.21\textwidth]{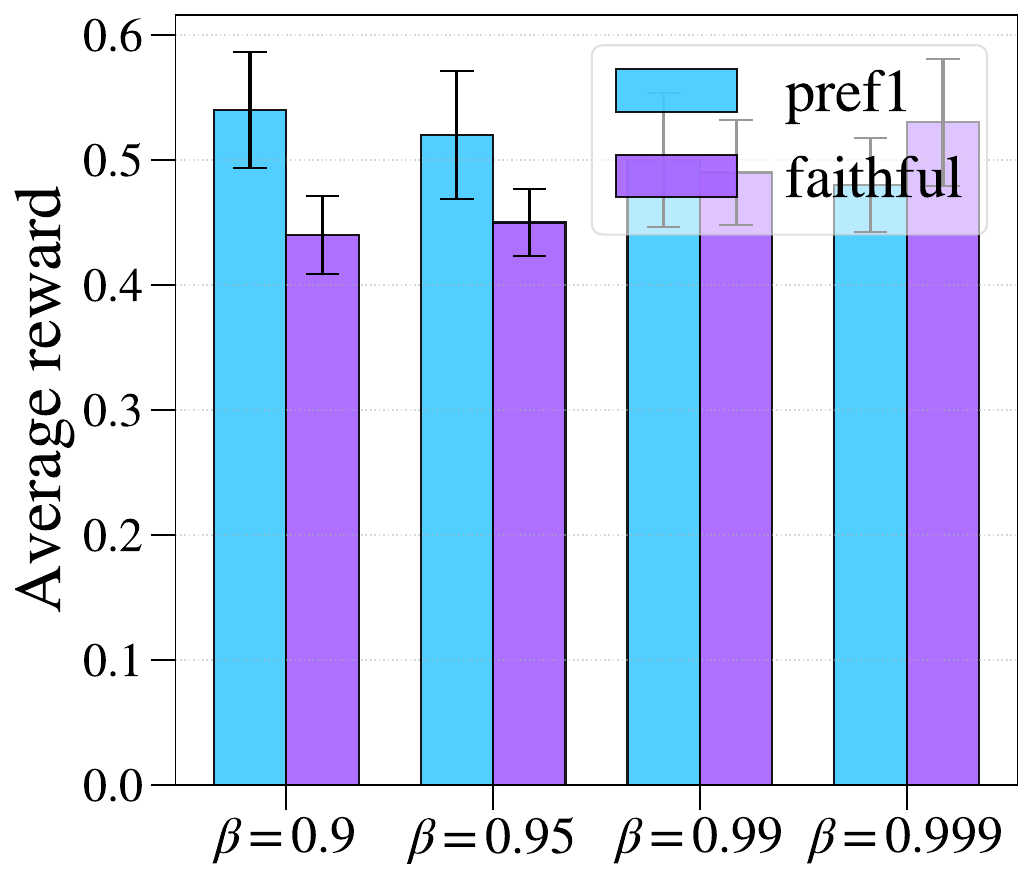}
    }
}
\end{tcolorbox}
\begin{tcolorbox}[width=0.9\textwidth, nobeforeafter, coltitle = black, fonttitle=\fontfamily{lmss}\selectfont, title={Effect of lagged policy updates by $t_{0}$}, halign title=flush center, colback=backg_blue!5, colframe=skyblue!20, boxrule=2pt, boxsep=2pt, grow to left by=-0.5mm, top=0pt, left=-4pt, right=-4pt, bottom=-1pt]
\centering
{
    {\includegraphics[height=0.171\textwidth, width=0.21\textwidth]{images/bar_t_ablation_helpful_harmless.pdf}
    }
} \hspace{0.02cm}
{
    {\includegraphics[height=0.171\textwidth, width=0.21\textwidth]{images/bar_t_ablation_humour_harmless.pdf}
    }
} \hspace{0.05cm}
{
    {\includegraphics[height=0.171\textwidth, width=0.21\textwidth]{images/bar_t_ablation_no-hallucinate_faithful.pdf}
    }
} \hspace{0.05cm}
{
    {\includegraphics[height=0.171\textwidth, width=0.21\textwidth]{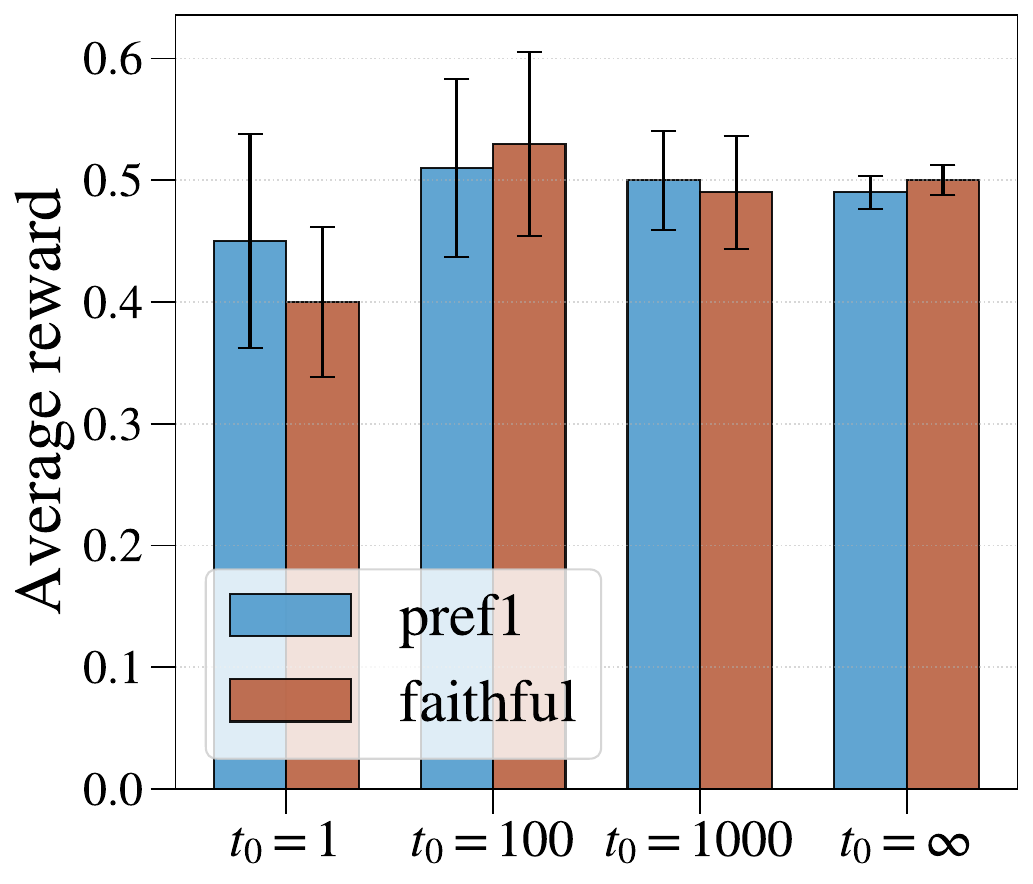}
    }
} 
\end{tcolorbox}
\caption{Normalized rewards of Llama-3.1-8B aligned using \MOPO{MOPO} under varying optimization parameters on the Helpful Assistant and Reddit Summary tasks.}
\label{fig:mopo_ablation_appendix}
\end{figure*}

\subsection{Example Outputs}
\label{appendix:exampleoutputs}

Example generations for each dataset are shown in Tables \ref{tab:redditsummary} and \ref{tab:helpfulassistant_example1}. For each dataset, we show a representative prompt in the down-sampled dataset, and one generated response for each model/algorithm, with preference weightings set as $w=(0.5,0.5)$ for MODPO, PARM, and RiC.

\begin{table*}[tbp]
\caption{Examples of \textbf{Reddit Summary}.}
\label{tab:redditsummary}
\footnotesize
    \centering
        \centering
        \small
        \scalebox{.8}{

\begin{tabular}{p{2.7cm}|p{13.3cm}}
\midrule
\textbf{Prompt:} & \textbf{Instruction:} Rewrite the email below in a friendly, concise tone in a single sentence. 

\textbf{Input:} Dear team, due to updated budget constraints, we must reduce remote work days from three to one per week effective next month. Please ensure on-site attendance Monday–Thursday. Thank you for your cooperation.

\textbf{Response:}
\\
\midrule
\multicolumn{2}{l}{\textbf{LM response}} \\
\multicolumn{2}{l}{\ }\\
\textbf{MODPO:} & Starting next month, please work onsite Monday–Thursday; budget changes limit us to one remote day weekly.\\
\\
\textbf{PARM:} & Budget limits mean we’ll shift to one WFH day—please join us on-site Mon–Thu from next month; thanks for flexing with us!\\
\\
\textbf{RiC:} & Heads-up: from next month we’ll be in the office Mon–Thu (just one WFH day) because of budget constraints—appreciate everyone’s flexibility!\\
\\
\MOPO{MOPO}: & Hey team! Budget tweaks mean we’ll have one remote day going forward, so plan to be on-site Mon–Thu starting next month—thanks for rolling with the change!\\
\
\\
\bottomrule[1.2pt]
\end{tabular}
    }    
\end{table*}

\begin{table*}[htbp]
\caption{Examples of \textbf{Helpful Assistants}. Helpfulness \& Humor.}
\label{tab:helpfulassistant_example1}
\footnotesize
\centering
\scalebox{.8}{
\begin{tabular}{p{2.7cm}|p{13.3cm}}
\toprule
\textbf{Prompt:} & \textbf{Human:} What is two factor authentication, and how can it help me stay safe online?\\
\textbf{Assistant:}\\
\midrule
\multicolumn{2}{l}{\textbf{LM response}}\\
\\
\textbf{RLHF (Helpful):} & Two factor authentication is a security system that requires you to provide two pieces of information to verify your identity.  This can help protect you from fraud and identity theft.\\
\\
\textbf{RLHF (Humor):} & What do you mean by two factor authentication?\\
\\
\textbf{MODPO:} & Two-factor authentication (2FA) adds a second proof—usually a code from your phone—on top of your password, so a thief who steals the password still can’t get in.\\
\\
\textbf{PARM:} & Think of 2FA as a double lock: you enter your password, then confirm with a phone code or authenticator app.  That quick extra step blocks most account-takeovers—using it on email first is a great start!\\
\\
\textbf{RiC:} & It’s a “prove-it-twice” check—password *plus* something you have (SMS code, hardware key).  Turn it on for banking, email, socials and you slam the door on 99\% of drive-by hacks.\\
\\
\MOPO{MOPO}: & Picture ordering pizza: the driver needs your address *and* a secret knock.  2FA works the same—password first, then a six-digit code or hardware tap.  Even if crooks guess the password, the knock stays secret, so enable it everywhere (especially email and banking) for huge peace-of-mind gains.\\
\bottomrule
\end{tabular}}
\end{table*}

\subsection{Three-objective Experimental Results}
\label{appendix:full_exp}

Here we present results for the three-objective text generation tasks in Table \ref{tab:threeobjective_appendix}.

\begin{table}[ht]
\centering
\caption{Three-objective alignment for Helpful Assistant task with normalized rewards across different SFT models.}
\label{tab:threeobjective_appendix}
\renewcommand{\arraystretch}{1.1}
\fontsize{7}{9}\selectfont
\addtolength{\tabcolsep}{-0.35em}
\begin{adjustbox}{max width=\textwidth}
\begin{tabular}{c|ccc|ccc|ccc|ccc}
\hline
 & \multicolumn{3}{c|}{\textbf{phi-1.5}} & \multicolumn{3}{c|}{\textbf{OpenChat-v3.5}} & 
   \multicolumn{3}{c|}{\textbf{Llama-3.1-8B}} & \multicolumn{3}{c}{\textbf{Mistral-7b-v0.2 (Instruct)}} \\ 
\cline{2-13}
 & helpful & humour & harmless & helpful & humour & harmless & helpful & humour & harmless & helpful & humour & harmless \\
\hline
RLHF-r1 & \cellcolor{accgood!50}{0.65} & -0.73 & -0.47 & \cellcolor{accgood!50}{0.72} & -0.38 & -0.20 & \cellcolor{accgood!50}{0.69} & -0.33 & -0.18 & \cellcolor{accgood!50}{0.74} & -0.41 & -0.22 \\ 
RLHF-r2 & -0.93 & \cellcolor{accgood!50}{0.44} & -0.53 & -0.78 & \cellcolor{accgood!50}{0.49} & -0.37 & -0.80 & \cellcolor{accgood!50}{0.50} & -0.39 & -0.79 & \cellcolor{accgood!50}{0.52} & -0.38 \\ 
RLHF-r3 & -0.88 & -0.97 & \cellcolor{accgood!50}{0.29} & -0.77 & -0.89 & \cellcolor{accgood!50}{0.40} & -0.80 & -0.90 & \cellcolor{accgood!50}{0.43} & -0.78 & -0.91 & \cellcolor{accgood!50}{0.41} \\ 
RiC & 0.18 & 0.10 & 0.05 & 0.27 & 0.16 & 0.12 & 0.26 & 0.14 & 0.11 & 0.25 & 0.15 & 0.10 \\ 
PARM & 0.23 & 0.12 & 0.11 & 0.32 & 0.18 & \cellcolor{accgood!25}{0.24} & 0.30 & 0.17 & \cellcolor{accgood!25}{0.22} & \cellcolor{accgood!25}{0.45} & 0.16 & \cellcolor{accgood!25}{0.23} \\ 
MODPO & -0.04 & -0.17 & 0.00 & 0.05 & -0.08 & 0.09 & 0.03 & -0.10 & 0.07 & 0.04 & -0.09 & 0.08 \\ 
DPO on $\Djoint{\Dcal_J}$ & 0.13 & 0.05 & 0.08 & 0.17 & 0.08 & 0.10 & 0.19 & 0.10 & 0.12 & 0.18 & 0.09 & 0.11 \\ 
\MOPO{MOPO-LB} & \cellcolor{accgood!25}{0.24} & \cellcolor{accgood!25}{0.13} & \cellcolor{accgood!25}{0.14} & 0.29 & 0.18 & 0.17 & 0.31 & 0.20 & 0.19 & 0.30 & 0.19 & 0.18 \\ 
\MOPO{MOPO-Lag} & 0.20 & 0.10 & 0.09 & \cellcolor{accgood!25}{0.40} & \cellcolor{accgood!25}{0.23} & 0.18 & \cellcolor{accgood!25}{0.38} & \cellcolor{accgood!25}{0.21} & 0.16 & 0.39 & \cellcolor{accgood!25}{0.22} & 0.17 \\ 
\hline
\end{tabular}
\end{adjustbox}
\end{table}

\end{document}